\newif\ifarxiv
\newtheorem{theorem}{Theorem}
\newtheorem{lemma}[theorem]{Lemma}
\theoremstyle{definition}
\newtheorem{remark}{Remark}
\newcommand{\defcal}[1]{\expandafter\newcommand\csname 
	c#1\endcsname{{\mathcal{#1}}}}
\newcommand{\defbb}[1]{\expandafter\newcommand\csname 
	b#1\endcsname{{\mathbb{#1}}}}
\newcommand{\defbf}[1]{\expandafter\newcommand\csname 
	bf#1\endcsname{{\mathbf{#1}}}}
\newcounter{calBbCounter}
	\edef\letter{\Alph{calBbCounter}}
	\edef\letter{\alph{calBbCounter}}
\newcommand{\eps}{\varepsilon}
\newcommand{\E}{\mathbb{E}}
\newcommand{\T}{\top}
\DeclareMathOperator{\diag}{diag}
\DeclareMathOperator{\unif}{Unif}
\DeclareMathOperator{\im}{im}
\DeclareMathOperator{\tr}{tr}
\DeclareMathOperator{\rank}{rank}
\newcommand{\dmix}{\cN^{\textnormal{mix}}_{\sigma,1}}
\newcommand{\dmixN}{\cN^{\textnormal{mix}}_{\sigma,\mu}}
\newcommand{\dmi}{\cN^{\textnormal{mix}}}
\newcommand{\xte}{x_{\textnormal{test}}}
\newcommand{\Eps}{\mathcal{E}}
\newcommand{\lb}{L^{\textnormal{exp}}}
\title{Multiple Descent: Design Your Own Generalization Curve}
\title{Multiple Descent: Design Your Own Generalization Curve}
\author{Lin Chen\thanks{Simons Institute for the Theory of Computing, University of California, Berkeley. E-mail: \texttt{lin.chen@berkeley.edu}.} \and 
Yifei Min\thanks{Department of Statistics and Data Science, Yale University. E-mail: \texttt{yifei.min@yale.edu}.} \and 
Mikhail Belkin\thanks{
Hal\i{}c\i{}o\u{g}lu
Data Science Institute, University of California, San Diego. E-mail: \texttt{mbelkin@ucsd.edu}.} \and
Amin Karbasi\thanks{Department of Electrical Engineering, Computer Science, Statistics and Data Science, Yale University. E-mail: \texttt{amin.karbasi@yale.edu}.}}
\date{}
\author{%
  Lin Chen \\
  Simons Institute for the Theory of Computing\\
  University of California, Berkeley\\
  CA 94720 \\
  \texttt{lin.chen@berkeley.edu} \\
  \And
  Yifei Min\\
  Department of Statistics and Data Science \\
  Yale University\\
  CT 06511 \\
  \texttt{yifei.min@yale.edu} \\
  \And
  Mikhail Belkin\\
  Hal\i{}c\i{}o\u{g}lu Data Science Institute \\
  University of California, San Diego\\
  CA 92093 \\
  \texttt{mbelkin@ucsd.edu} \\
  \And
  Amin Karbasi\\
  School of Engineering and Applied Science \\
  Yale University\\
  CT 06511 \\
  \texttt{amin.karbasi@yale.edu} \\
}
\begin{document}

\maketitle

\begin{abstract}
This paper explores the generalization loss of linear regression in variably parameterized families of models, both under-parameterized and over-parameterized. We show that the generalization curve can have an arbitrary number of peaks, and moreover, locations of those peaks can be explicitly controlled. 
Our results highlight the fact that both classical U-shaped generalization curve and the recently observed double descent curve are not intrinsic properties of the model family. 
Instead, their emergence is due to the interaction between the properties of the data and the inductive biases of learning algorithms.  
\end{abstract}

\section{Introduction}

The main goal of machine learning methods is to provide an accurate out-of-sample prediction, known as generalization. For a fixed family of models, a common way to select a model from this family is through  empirical risk minimization, i.e., algorithmically selecting models that minimize the risk on the training dataset. Given a variably parameterized family of models, the statistical learning theory aims to identify the dependence between model complexity and model performance.
The empirical risk  usually  decreases monotonically as the model complexity increases, and achieves its minimum when the model is rich enough to interpolate the training data, resulting in  zero (or near-zero) training error. In contrast, the behaviour of the test error as a function of model complexity is far more complicated. Indeed, in this paper we show how to construct a model family for which the generalization curve can be fully controlled  (away from the interpolation threshold) in both under-parameterized and over-parameterized regimes.  
Classical statistical learning theory supports a U-shaped curve of generalization versus model complexity \citep{geman1992neural,hastie2009elements}. Under such a framework, the best model is found at the bottom of the U-shaped curve, which corresponds to  appropriately balancing under-fitting and over-fitting the training data. From the view of the bias-variance trade-off, a higher model complexity increases the variance while decreasing the bias. A model with an appropriate level of complexity achieves a relatively low bias while still keeping the variance under control. On the other hand, a model that interpolates the training data is deemed to over-fit and tends to 
worsen the generalization performance due to the soaring variance.  

Although classical statistical  theory suggests a pattern of  behavior for the generalization curve up to the interpolation threshold, it does not describe what happens beyond the interpolation threshold, commonly referred to as  the over-parameterized regime. This is the exact regime where many modern machine learning models, especially deep neural networks, achieved remarkable success. Indeed, neural networks generalize well even when the models are so complex that they have the potential to interpolate all the training data points \citep{zhang2016understanding, belkin2018understand, ghorbani2019linearized, hastie2019surprises}.

Modern practitioners commonly deploy deep neural networks with hundreds of millions or even billions  of parameters.  It has become widely accepted that large models achieve performance superior to small models that may be suggested 
 by the classical U-shaped generalization curve \citep{bengio2003neural, krizhevsky2012imagenet, szegedy2015going, he2016deep, huang2019gpipe}. 
This indicates that the test error decreases again once model complexity grows beyond the interpolation threshold, resulting in the so called double-descent phenomenon described in \citep{belkin2018reconciling}, which has been broadly supported by empirical evidence \citep{neyshabur2014search, neal2018modern, geiger2019jamming, geiger2020scaling} and  confirmed empirically on modern neural architectures by \citet{nakkiran2019deep}. On the theoretical side, this phenomenon has been recently addressed by several works on various model settings. In particular, \citet{belkin2019two} proved the existence of double-descent phenomenon for linear regression with random feature selection and analyzed the random Fourier feature model \citep{rahimi2008random}. \citet{mei2019generalization} also studied the Fourier model and computed the asymptotic test error which captures the double-descent phenomenon. \citet{bartlett2020benign,tsigler2020benign} analyzed and gave explicit conditions for ``benign overfitting'' in linear and ridge regression, respectively. \citet{caron2020finite} provided a finite sample analysis of the nonlinear function estimation and showed that the parameter learned through empirical risk minimization converges to the true parameter with high probability as the model complexity tends to infinity, implying the existence of double descent. \citet{liu2021kernel} studied the high dimensional kernel ridge regression in the under- and over-parameterized regimes and showed that the risk curve can be double descent, bell-shaped, and monotonically decreasing.

Among all the aforementioned efforts, one particularly interesting question is whether one can observe more than two descents in the generalization curve. \citet{d2020triple} empirically showed a sample-wise triple-descent phenomenon under the random Fourier feature model. Similar triple-descent was also observed for linear regression \citep{nakkiran2020optimal}. More rigorously, \citet{liang2019multiple} presented an upper bound on the risk of the minimum-norm interpolation versus the data dimension in Reproducing Kernel Hilbert Spaces (RKHS), which exhibits multiple descent. However, a multiple-descent upper bound without a properly matching lower bound does not imply the existence of a multiple-descent generalization curve.
In this work, we study the multiple descent phenomenon by addressing the following questions:
\begin{compactitem}
    \item Can the existence of a multiple descent generalization curve be rigorously proven?
    \item Can an arbitrary number of descents occur?
    \item Can the generalization curve and the locations of descents be designed?
\end{compactitem}
In this paper, we show that the answer to all three of these questions is yes. 
Further related work is presented in \cref{sec:further-related}.

\textbf{Our Contribution.}
We consider the linear regression model and analyze how the risk changes as the dimension of the data grows. In the linear regression setting, the data dimension is equal to the dimension of the parameter space, which reflects the model complexity. 
We rigorously show that the multiple descent generalization curve exists under this setting. To our best knowledge, this is the first work proving a multiple descent phenomenon. 

Our analysis considers both the underparametrized  and overparametrized regimes. 
In the overparametrized regime, we show that one can control where a descent or an ascent occurs in the generalization curve. This is realized through our algorithmic construction of a feature-revealing process. To be more specific, we assume that the data is in $\mathbb{R}^D$, where $D$ can be arbitrarily large or even essentially infinite. We view each dimension of the data as a feature. We consider a linear regression problem restricted on the first $d$ features, where $d<D$. New features are revealed by increasing the dimension of the data.  We then show that by specifying the distribution of the newly revealed feature to be either a standard Gaussian or a Gaussian mixture, one can determine where an ascent or a descent occurs. In order to create an ascent when a new feature is revealed, it is sufficient that the feature follows a Gaussian mixture distribution. In order to have a descent, it is sufficient that the new feature follows a standard Gaussian distribution. Therefore, in the overparametrized regime, we can fully control the occurrence of a descent and an ascent. As a comparison, in the underparametrized regime, the generalization loss always increases regardless of the feature distribution. 
Generally speaking, we show that we are able to design the generalization curve.

On the one hand, we show theoretically that the generalization curve is malleable and can be constructed in an arbitrary fashion. On the other hand, we rarely observe complex generalization curves in practice, besides carefully curated constructions. Putting these facts together, we arrive at the conclusion that realistic generalization curves arise from specific interactions between properties of typical data  and the inductive biases of algorithms. 
We should highlight that the nature of these interactions is far from being understood and should be an area of  further investigations.

\section{Related Work}\label{sec:further-related}
Our work is directly related to the recent line of research in the  theoretical understanding of the double descent \citep{belkin2019two, hastie2019surprises, xu2019number, mei2019generalization} and the multiple descent phenomenon \citep{liang2019multiple,li2021minimum}. Here we briefly discuss some other work that is closely related to this paper.

\paragraph{Least Square Regression.} 
In this paper we focus on the least square linear regression with no regularization. For the regularized least square regression, \citet{de2005model} proposed a selection procedure for the regularization parameter. \citet{advani2017high} analyzed the generalization of neural networks with mean squared error under the asymptotic regime where both the sample size and model complexity tend to infinity. \citet{richards2020asymptotics} proved for least square regression in the asymptotic regime that as the dimension-to-sample-size ratio $d/n$ grows, an additional peak can occur in both the variance and bias due to the covariance structure of the features. As a comparison, in this paper the sample size is fixed and the model complexity increases. \citet{rudi2017generalization} studied kernel ridge regression and gave an upper bound on the number of the random features to reach certain risk level. Our result shows that there exists a natural setting where by manipulating the random features one can control the risk curve.

\paragraph{Over-Parameterization and Interpolation.} 
The double descent occurs when the model complexity reaches and increases beyond the interpolation threshold. Most previous works focused on proving an upper bound or optimal rate for the risk. %
\citet{caponnetto2007optimal} gave the optimal rate for least square ridge regression via careful selection of the regularization parameter. \citet{belkin2019does} showed that the optimal rate for risk can be achieved by a model that interpolates the training data. In a series of work on kernel regression with regularization parameter tending to zero (a.k.a.\ kernel \emph{ridgeless} regression), \citet{rakhlin2019consistency} showed that the risk is bounded away from zero when the data dimension is fixed with respect to the sample size.  \citet{liang2018just} then considered the case when $d \asymp n$, showed empirically the multiple descent phenomenon and proved a risk upper bound that can be small given favorable data and kernel assumptions. Instead of giving a bound, our paper presents an exact computation of risk in the cases of underparametrized and overparametrized linear regression, and proves the existence of the multiple descent phenomenon. \citet{wyner2017explaining} analyzed AdaBoost and Random Forest from the perspective of interpolation. There has also been a line of work on wide neural networks \citep{arora2019exact, arora2019fine,arora2019harnessing,du2019gradient,allen2019convergence,wei2019regularization,cao2019generalization,advani2020high, chen2020deep,zou2020gradient,song2021convergence}. 

\paragraph{Sample-wise Double Descent and Non-monotonicity.} There has also been recent development beyond the  model-complexity double-descent phenomenon. For example, regarding sample-wise non-monotonicity, \citet{nakkiran2019deep} empirically observed the epoch-wise double-descent and sample-wise non-monotonicity for neural networks.  \citet{chen2020more} and \citet{min2020curious} identified and proved the sample-wise double descent under the adversarial training setting, and \citet{javanmard2020precise} discovered double-descent under adversarially robust linear regression. \citet{loog2019minimizers} showed that empirical risk minimization can lead to sample-wise non-monotonicity in the standard linear model setting under various loss functions including the absolute loss and the squared loss, which covers the range from classification to regression. We also refer the reader to their discussion of the earlier work on non-monotonicity of generalization curves. 
\citet{dar2020subspace} demonstrated the double descent curve of the generalization errors of subspace fitting problems.  \citet{fei2020risk} studied the risk-sample tradeoff in reinforcement learning.

\section{Preliminaries and Problem Formulation}\label{sec:prelim}

\textbf{Notation.}
For $x\in \bR^D$ and $d\le D$, we let $x[1:d]\in \bR^d$ denote a $d$-dimensional vector with $x[1:d]_i=x_i$ for all $1\le i\le d$. For a matrix $A\in \bR^{n\times d}$, we denote its Moore-Penrose pseudoinverse by $A^+\in \bR^{d\times n}$ and denote its spectral norm by $\|A\|\triangleq \sup_{x\ne 0}\frac{\|Ax\|_2}{\|x\|_2}$, where $\|\cdot\|_2$ is the Euclidean norm for vectors. If $v$ is a vector, its spectral norm $\|v\|$ agrees with the Euclidean norm $\|v\|_2$. Therefore, we write $\|v\|$ for $\|v\|_2$ to simplify the notation. We use the big O notation $\cO$ and write variables in the subscript of $\cO$ if the implicit constant depends on them. For example, $\cO_{n,d,\sigma}(1)$ is a constant that only depends on $n$, $d$, and $\sigma$. If $f(\sigma)$ and $g(\sigma)$ are functions of $\sigma$, write $f(\sigma)\sim g(\sigma)$ if $\lim \frac{f(\sigma)}{g(\sigma)} = 1$. It will be given in the context how we take the limit. 

\textbf{Distributions.}
Let $\cN(\mu,\sigma^2)$ ($\mu,\sigma\in \bR$) and $\cN(\mu,\Sigma)$ ($\mu\in \bR^n$, $\Sigma\in \bR^{n\times n}$) denote the univariate and multivariate Gaussian distributions, respectively, where $\mu\in \bR^n$ and $\Sigma\in \bR^{n\times n}$ is a positive semi-definite matrix. 
We define a family of \textit{trimodal} Gaussian mixture distributions as follows \begin{equation*}
\dmixN \triangleq{} \frac{1}{3}\cN(0,\sigma^2)+\frac{1}{3}\cN(-\mu,\sigma^2)+\frac{1}{3}\cN(\mu,\sigma^2)\,.
\end{equation*}
For an illustration, please see \cref{fig:contour}.

\begin{figure*}[htb]
    \centering
	\begin{subfigure}[b]{0.32\textwidth}
		\includegraphics[width=\textwidth]{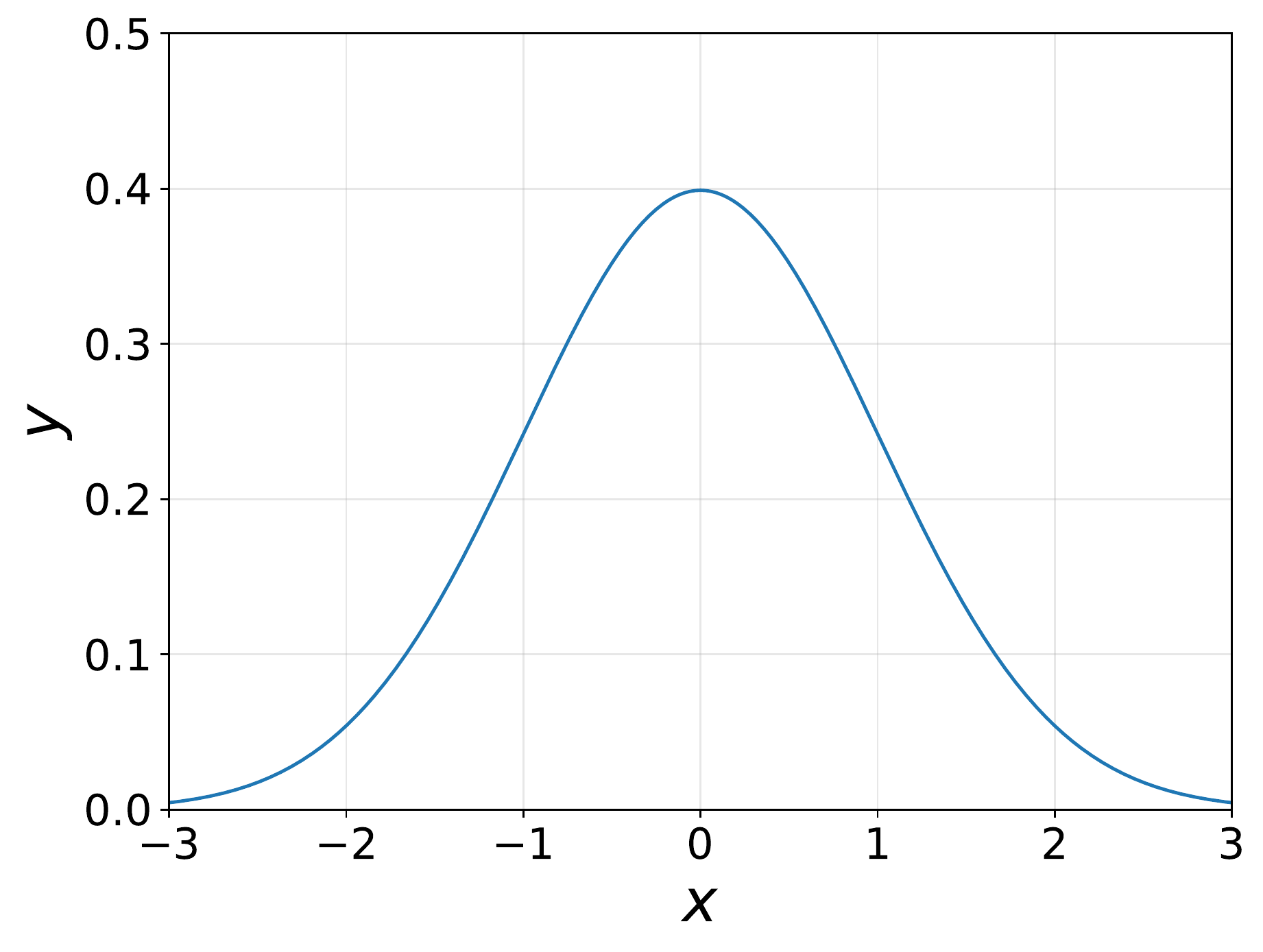}
		\caption{$\cN(0,1)$ feature}
		\label{fig:gaussian_density}
	\end{subfigure}
	\begin{subfigure}[b]{0.32\textwidth}
		\includegraphics[width=\textwidth]{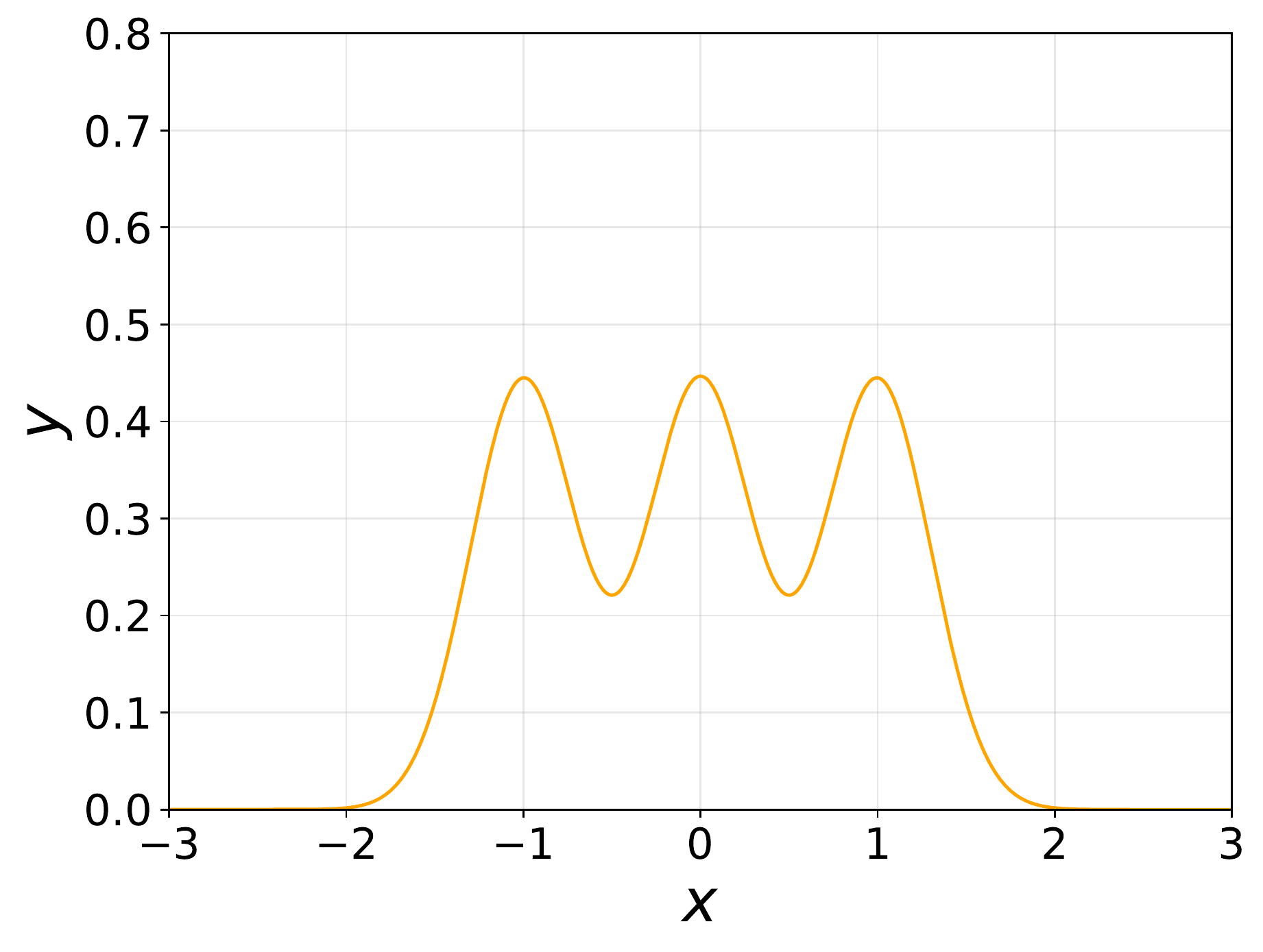}
		\caption{$\dmix$ feature, ($\sigma = 0.3$)}
		\label{fig:gau_mix_density1}
	\end{subfigure}
	\begin{subfigure}[b]{0.32\textwidth}
		\includegraphics[width=\textwidth]{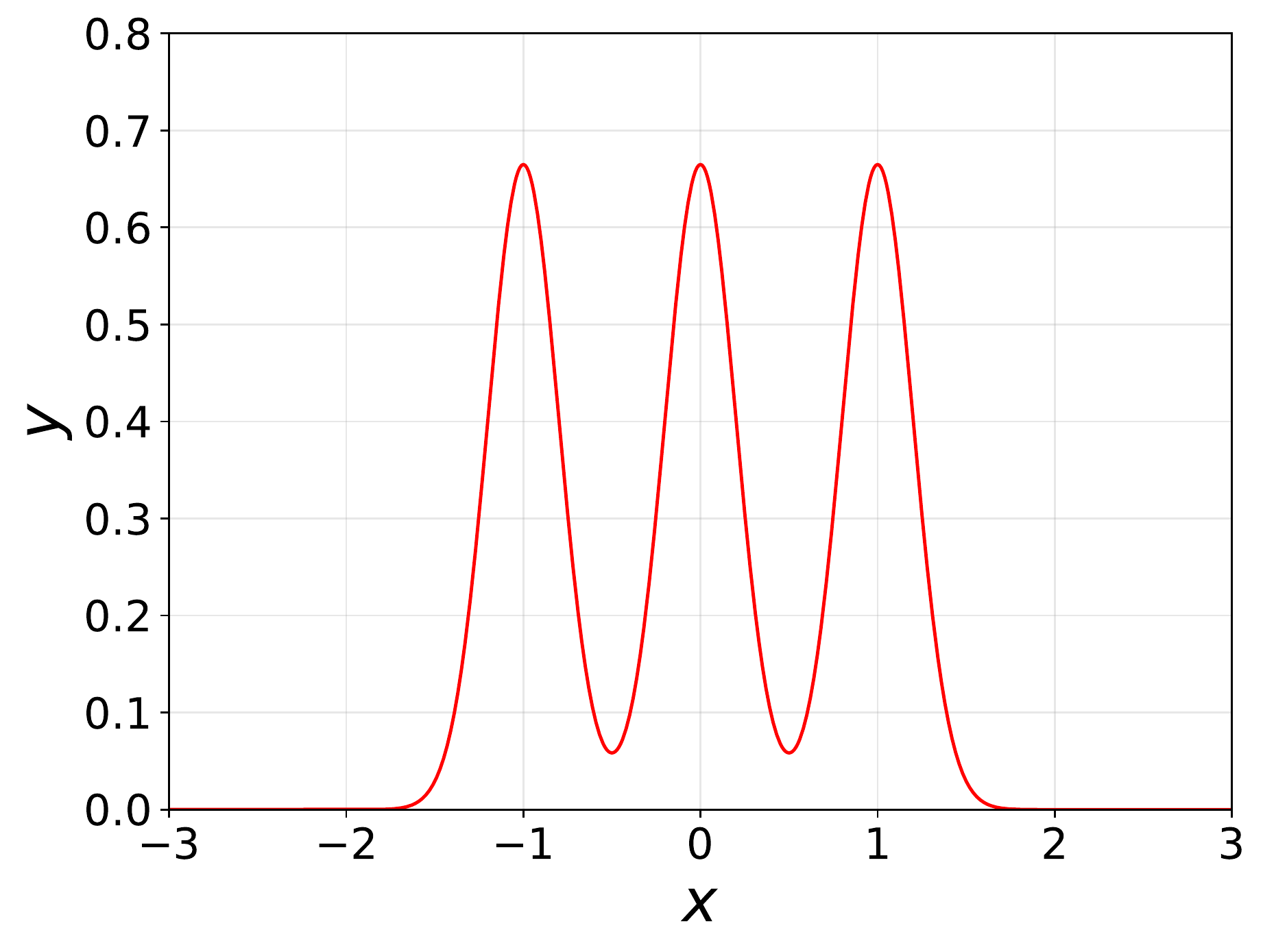}
		\caption{$\dmix$ feature, ($\sigma = 0.2$)}
		\label{fig:gau_mix_density2}
	\end{subfigure}
	\caption{Density functions of the $\cN(0,1)$ and $\cN_{\sigma,1}^{\textnormal{mix}}$ feature. A new entry is independently sampled from the 1-dimensional distribution being either a standard Gaussian or trimodal Gaussian mixture. Smaller $\sigma$ leads to higher concentration around each modes.}
	\label{fig:contour}
\end{figure*}

Let $\chi^2(k,\lambda)$ denote the noncentral chi-squared distribution with $k$ degrees of freedom and   the non-centrality parameter $\lambda$. For example, if $X_i\sim\cN(\mu_i,1)$ (for $i=1,2,\dots,k$) are independent Gaussian random variables, we have $\sum_{i=1}^k X_i^2\sim \chi^2(k,\lambda)$, where $\lambda=\sum_{i=1}^k \mu_i^2$. We also denote by $\chi^2(k)$ the (central) chi-squared distribution  with $k$ degrees and the $F$-distribution by $F(d_1,d_2)$ where $d_1$ and $d_2$ are the degrees of freedom. 

\textbf{Problem Setup.}
Let $x_1,\dots,x_n\in \bR^D$ be column vectors that represent the training data of size $n$ and let $\xte\in \bR^D$ be a column vector that represents the test data. We assume that they are all independently drawn from a distribution\[
x_1,\dots,x_n,\xte\stackrel{iid}{\sim}\cD\,.
\]

Let us consider a linear regression problem on the first $d$ features, where $d\le D$ for some arbitrary large $D$. Here, $d$ can be viewed as the number of features revealed. Then the feature vectors are $\tilde{x}_1,\dots,\tilde{x}_n$, where  $\tilde{x}_i = x_i[1:d]\in \bR^d$ denotes the first $d$ entries of $x_i$. The corresponding response variable $y_i$ satisfies
\[y_i = \tilde{x}_i^\T \beta +\eps_i\,,\quad i=1,\dots,n\,, \]
where the noise $\eps_i\sim \cN(0,\eta^2)$. We use the same setup as in \citep{hastie2019surprises} (see Equations (1) and (2) in \citep{hastie2019surprises}). Moreover, in another closely related work  \citep{liang2019multiple}, if the kernel is set to the linear kernel,  it is equivalent to our setup.

Next, we introduce the estimate $\hat{\beta}$ of $\beta$ and its excess generalization loss. Let $\eps = (\eps_1,\dots,\eps_n)^\T \in \bR^n$ denote the noise vector. The \emph{design} matrix $A$ equals $[\tilde{x}_1,\dots,\tilde{x}_n]^\top\in \bR^{n\times d}$. 
 Let $x=\xte[1:d]$ denote the first $d$ features of the test data. 
For the underparametrized regime where $d<n$, the least square solution on the training data is $A^+(A\beta + \eps)$. For the overparametrized regime where $d > n$, $A^+(A\beta + \eps)$ is the minimum-norm solution. In both regimes we consider the solution $ \hat{\beta} \triangleq A^+(A\beta + \eps)$. The excess generalization loss on the test data is then given by \begin{align}
    L_d \triangleq{}& \bE\left[ \left(y-x^\T \hat{\beta} \right)^2 - \left( y-x^\T \beta \right)^2 \right]\nonumber\\
={}& \bE\left[ \left( x^\T (\hat{\beta}- \beta) \right)^2  \right]\nonumber\\
={}& \bE\left[ \left( x^\T \left( (A^+ A-I)\beta + A^+ \eps \right) \right)^2 \right]\nonumber\\
={}& \bE\left[(x^\T (A^+ A - I)\beta)^2\right] + \bE\left[ (x^\T A^+ \eps)^2 \right]\nonumber\\
={}& \bE\left[(x^\T (A^+ A - I)\beta)^2\right] + \eta^2 \bE\left\| (A^\T)^+ x\right\|^2\,,\label{eq:total_loss}
\end{align}
where $y = x^\T \beta + \eps_\textnormal{test} $ and $ \eps_\textnormal{test} \sim \cN(0,\eta^2)$. 
We call the term $\bE\left[(x^\T (A^+ A - I)\beta)^2\right]$ the \emph{bias} and call the term $\eta^2 \bE\left\| (A^\T)^+ x\right\|^2$ the \emph{variance}. 

The next remark shows that in the underparametrized regime, the bias vanishes. The vanishing bias in the underparametrized regime is also observed by \citet{hastie2019surprises} and shown in their Proposition~2.
\begin{remark}\label{rmk:underparam-bias-vanishes}
In the underparametrized regime, if $\cD$ is a continous distribution (our construction presented later satisfies this condition), the matrix $A$ has independent column almost surely. In this case, we have $A^+ A = I$ and therefore the bias $\bE\left[(x^\T (A^+ A - I)\beta)^2\right]$ vanishes irrespective of $\beta$. In other words, in the underparametrized regime, $L_d$ equals $\eta^2 \bE\| (A^\top)^+ x \|^2$. 
\end{remark}

According to \cref{rmk:underparam-bias-vanishes}, we have $L_d = \eta^2 \bE\| (A^\top)^+ x \|^2$ in the underparametrized regime. It also holds in the overparametrized regime when $\beta=0$. Without loss of generality, we assume $\eta=1$ in the underparametrized regime (for all $\beta$). In the overparametrized regime, we also assume $\eta=1$ for the $\beta=0$ case. In this case, we have \begin{equation}\label{eq:loss-var}
      L_d = \bE\| (A^\top)^+ x \|^2\,.
 \end{equation}
 We assume a general $\eta$ (i.e., not necessarily being $1$) in the overparametrized regime when $\beta$ is non-zero. 
 
We would like to study the change in the loss caused by the growth in the number of features revealed. Recall $L_d = \bE\|(A^\top)^+ x\|^2$. Once we reveal a new feature, which adds a new row $b^\top$ to $A^\top$ and a new component $a_1$ to $x$, we have $L_{d+1} = \bE \left\|\begin{bmatrix}
A^\top \\
b^\top
\end{bmatrix}^+  \begin{bmatrix}
x\\
a_1
\end{bmatrix}\right\|^2$.

\textbf{Local Maximum and Multiple Descent.}
Throughout the paper, we say that a local maximum occurs at a dimension $d\geq 1$ if $L_{d-1}<L_d$ and  $L_d>L_{d+1}$.
Intuitively, a local maximum occurs if there is an increasing stage of the generalization loss, followed by a decreasing stage, as the dimension $d$ grows. 
Additionally, we define $L_0\triangleq -\infty$.
If the generalization loss exhibits a single descent, based on our definition, a unique local maximum occurs at $d=1$. For a double-descent generalization curve, a local maximum occurs at two different dimensions. In general, if we observe local maxima at multiple dimensions, we say there is a multiple descent. 

\section{Underparametrized Regime}\label{sec:underparam}

First, we present our main theorem for the underparametrized regime below, whose proof is deferred to the end of \cref{sec:underparam}. It states that the generalization loss $L_d$ is always non-decreasing as $d$ grows. Moreover, it is possible to have an arbitrarily large ascent, i.e., $L_{d+1} - L_d>C$ for any $C >0$. 

\begin{theorem}[Proof in \cref{sec:unnormalized-underparam-proof}]\label{thm:unnormalized-underparam}
If $d<n$, we have $L_{d+1}\ge L_d$ irrespective of the data distribution. Moreover, for any $C>0$, there exists a distribution $\cD$ such that $L_{d+1} - L_d > C$. 
\end{theorem}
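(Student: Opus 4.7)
The plan is to derive a closed-form expression for $L_{d+1}-L_d$ via block-matrix (Schur complement) inversion, which will give non-negativity immediately, and then to design a distribution $\cD$ that makes the resulting expectation blow up. By \cref{rmk:underparam-bias-vanishes} I take $\eta=1$ and write $L_d=\bE[x^\T (A^\T A)^{-1} x]$. When the $(d+1)$-th feature is revealed, let $b\in\bR^n$ be the appended column of training values and $a_1\in\bR$ the corresponding test-point value, so $\tilde A=[A,b]$ and $\tilde x=(x^\T,a_1)^\T$. Inverting $\tilde A^\T\tilde A$ in $2\times 2$ block form using the Schur complement $s\triangleq b^\T(I-P_A)b$, where $P_A=A(A^\T A)^{-1}A^\T$, and letting $u\triangleq (A^\T A)^{-1}A^\T b$, a direct calculation gives
\begin{equation*}
\tilde x^\T(\tilde A^\T\tilde A)^{-1}\tilde x \;=\; x^\T (A^\T A)^{-1} x \;+\; \frac{(a_1-u^\T x)^2}{s}.
\end{equation*}
Taking expectations,
\begin{equation*}
L_{d+1}-L_d \;=\; \bE\!\left[\frac{(a_1-u^\T x)^2}{s}\right]\;\geq\; 0,
\end{equation*}
since $I-P_A\succeq 0$ forces $s\geq 0$ almost surely. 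This settles the monotonicity claim, irrespective of $\cD$.

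For the second assertion I construct $\cD$ that forces $s$ to be tiny on a positive-probability event while $a_1-u^\T x$ stays of order one. Let the first $d$ coordinates be i.i.d.\ $\cN(0,1)$, and let the $(d+1)$-th coordinate follow the bimodal mixture $\tfrac12\cN(0,\sigma^2)+\tfrac12\cN(M,\sigma^2)$ (any $\dmixN$ from \cref{sec:prelim} serves the same purpose). Condition on the event $E$ that all $n$ training samples have $(d+1)$-th coordinate in the $0$-mode while the test sample has it in the $M$-mode; then $\Pr(E)=2^{-n-1}>0$, and conditionally on $E$ we have $b=\sigma\xi$ with $\xi\sim\cN(0,I_n)$ independent of $(A,x)$, and $a_1=M+\sigma Z$ with $Z\sim\cN(0,1)$ independent of everything. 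Since $P_A\xi$ and $(I-P_A)\xi$ are independent Gaussians given $A$, the quantity $s/\sigma^2=\xi^\T(I-P_A)\xi\sim\chi^2(n-d)$ is independent of $a_1-u^\T x$ given $(A,x)$, and $\bE[(a_1-u^\T x)^2\mid A,x,E]=M^2+\sigma^2(1+x^\T (A^\T A)^{-1}x)$. Assuming $n-d\geq 3$, $\bE[1/\chi^2(n-d)]=1/(n-d-2)$, so after taking outer expectations and multiplying by $\Pr(E)$,
\begin{equation*}
L_{d+1}-L_d \;\geq\; \frac{2^{-n-1}\,M^2}{\sigma^2\,(n-d-2)},
\end{equation*}
which exceeds any prescribed $C>0$ by sending $\sigma\to 0$ (or $M\to\infty$). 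In the corner cases $n-d\in\{1,2\}$, $\bE[1/\chi^2(n-d)]=+\infty$, so even a plain $\cN(0,1)$ $(d+1)$-th feature already yields $L_{d+1}-L_d=+\infty$ and the claim is trivial.

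The hardest part is keeping the conditional-independence structure clean so that the ratio $(a_1-u^\T x)^2/s$ factors in expectation: the key observation is to split the Gaussian vector $\xi$ into its orthogonal projections $P_A\xi$ and $(I-P_A)\xi$, which are independent given $A$, so that the denominator depends only on the latter while the numerator depends only on the former together with the independent $Z$. Once this decoupling is in place, everything reduces to standard $\chi^2$ moment computations, and the desired $\sigma^{-2}$ blow-up follows because the leading $M$ in the numerator comes from the mode location rather than being scaled down by $\sigma$.
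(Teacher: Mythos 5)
Your proof is correct, and both halves take a genuinely different route from the paper's. For the monotonicity claim, the paper uses the variational characterization of the pseudoinverse: both $A^\top$ and $[A^\top;b^\top]$ are fat matrices, so $\|(A^\top)^+x\|$ is the minimum norm of a solution of $A^\top z = x$, and appending the constraint $b^\top z = a_1$ can only shrink the feasible set, hence raise the minimum norm. This is slicker and avoids any matrix algebra. Your Schur-complement route instead yields an \emph{exact} identity for the increment,
\begin{equation*}
L_{d+1}-L_d = \bE\!\left[\frac{(a_1-u^\top x)^2}{s}\right],
\end{equation*}
which is more informative (it is essentially the Frisch--Waugh--Lovell decomposition for adding a regressor) and makes the nonnegativity manifest without invoking the variational principle.

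For the unbounded-ascent claim, the paper proves it via its pseudoinverse block formula (\cref{lem:pseudo-inverse}) and the decomposition \eqref{eq:three-terms}, then uses the crude lower bound $L_{d+1}-L_d \ge -L_d + \bE[a_1^2/\sum_i b_i^2]$ and shows $\bE[a_1^2/\sum b_i^2]\to\infty$ as $\sigma\to 0$ by bounding the probability that all $b_i$ land near $0$. You instead exploit your exact formula, condition on the positive-probability event $E$, and use the conditional independence of $P_A\xi$ and $(I-P_A)\xi$ to factor $\bE[(a_1-u^\top x)^2/s\mid A,x,E]$; this is a cleaner route to the same $\sigma^{-2}$ divergence, and it also localizes the source of the blow-up. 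Your bimodal mixture plays exactly the role of the paper's trimodal $\dmixN$.

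One small caveat: for $n-d\in\{1,2\}$ your reduction to ``$L_{d+1}-L_d=+\infty$'' is not quite coherent, because for standard Gaussian first-$d$ features one already has $L_d = \bE\tr((A^\top A)^{-1}) = +\infty$ (the inverse-Wishart mean requires $n-d>1$, and the variance requires $n-d>2$), so the difference is not $+\infty$ but undefined. This is a benign edge case: the paper likewise restricts attention to $d+2<n$ in \cref{thm:descent} and assumes $\bE\|(A^+)^\top x\|^2<\infty$ in \cref{thm:ascent}, so both treatments implicitly stay away from $d\in\{n-1,n-2\}$. It would be cleaner simply to state the construction for $d\le n-3$ and note that the increment is then finite on both sides.
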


\begin{remark}[$\cD$ can be a product distribution]\label{rmk:product-distribution}
The first part of \cref{thm:unnormalized-underparam} holds irrespective of the data distribution. For the second part of the theorem ( i.e., for any $C>0$ there exists a distribution such that $L_{d+1} - L_d > C$) to hold,  one extremely simple and elegant choice of the distribution $\cD$ is a product distribution $\cD=\cD_1\times \dots \times \cD_D$ such that $x_{i,j}\stackrel{iid}{\sim} \cD_j$ for all $1\le i\le n$, where $\cD_j$ is a Gaussian mixture $\dmi_{\sigma_j,1}$ for some $\sigma_j>0$. 
Since the second part of \cref{thm:unnormalized-underparam} is of independent interest, the result is summarized by \cref{thm:ascent}.
\end{remark}

\begin{remark}[Kernel regression on Gaussian data]
In light of \cref{rmk:product-distribution}, $\cD$ can be chosen to be a product distribution that consists $\dmi_{\sigma_j}$. Note that one can simulate $\dmix$ with $\cN(0,1)$ through the inverse transform sampling. To see this, let $F_{\cN(0,1)}$ and $F_{\dmix}$ be the cdf of $\cN(0,1)$ and $\dmix$, respectively. If $X\sim \cN(0,1)$, we have $F_{\cN(0,1)}(X)\sim \unif((0,1))$ and therefore $\varphi_\sigma(X)\triangleq F^{-1}_{\dmix}(F_{\cN(0,1)}(X))\sim \dmix$. In fact, we can use a multivariate Gaussian $\cD'=\cN(0,I_{D\times D})$ and a sequence of non-linear kernels $k^{[1:d]}(x,x')\triangleq \langle\phi^{[1:d]}(x),\phi^{[1:d]}(x')\rangle$, where the feature map is $\phi^{[1:d]}(x)\triangleq [\phi_1(x_1),\phi_2(x_2),\dots,\phi_d(x_d)]^\top\in \bR^d$. Here is a simple rule for defining $\phi_j$: if $\cD_j=\dmi_{\sigma_j}$, we set $\phi_j$ to $\varphi_{\sigma_j}$. Thus, the problem becomes a kernel regression problem on the standard Gaussian data.
\end{remark}

The first part of \cref{thm:unnormalized-underparam}, which says that $L_d$ is increasing (or more precisely, non-decreasing), agrees with Figure 1 of
\citep{belkin2019two} and Proposition 2 of \citep{hastie2019surprises}. In \citep{hastie2019surprises}, they proved that the risk increases with
$\gamma = d/n$. Note that, at first glance, \cref{thm:unnormalized-underparam} may look counterintuitive since it does not obey the classical U-shaped generalization curve. However, we would like to emphasize that the U-shaped curve does not always occur.
In Figure 1 and Proposition 2 of these two papers respectively, there is no U-shaped curve. The intuition behind \cref{thm:unnormalized-underparam} is that in the underparametrized setting, the bias is always zero and as $d$ approaches $n$, the variance keeps increasing. 

Coming to the second part of Theorem~\ref{thm:unnormalized-underparam}, we now discuss how we will construct such a distribution $\cD$ inductively to satisfy $L_{d+1} - L_d > C$.
We fix $d$. Again, denote the first $d$ features of $\xte$ by $x\triangleq \xte[1:d]$.  
Let us add an additional component to the training data $x_1[1:d],\dots,x_n[1:d]$ and test data $x$ so that the dimension $d$ is incremented by 1. Let $b_i\in \bR$ denote the additional component that we add to the vector $x_i$ (so that the new vector is given as $[x_i[1:d]^\top,b_i]^\top$. Similarly, let $a_1\in \bR$ denote the additional component that we add to the test vector $x$. We form the column vector $b=[b_1,\dots,b_n]^\top \in \bR^n$ that collects all additional components that we add to the training data. 

We consider the change in the generalization loss 
as follows
\begin{equation}\label{eq:change_in_loss}
\begin{split}
    L_{d+1}-L_d ={}&
\bE\left[\left\|  \begin{bmatrix}
A^\top \\
b^\top
\end{bmatrix}^+ \begin{bmatrix}
x\\
a_1
\end{bmatrix}\right\|^2 -  \left\|  (A^+)^\top x \right\|^2\right]\,.
\end{split}
\end{equation}

Note that the components $b_1,\dots,b_n,a_1$ are i.i.d. The proof of \cref{thm:unnormalized-underparam} starts with \cref{lem:pseudo-inverse} which relates the pseudo-inverse of $[A,b]^\top$ to that of $A^\top$. In this way, we can decompose $\left\|  \begin{bmatrix}
A^\top \\
b^\top
\end{bmatrix}^+ \begin{bmatrix}
x\\
a_1
\end{bmatrix}\right\|^2$ into multiple terms for further careful analysis in the proofs hereinafter.
\begin{lemma}[Proof in \cref{sec:proof-pseudo-inverse}]\label{lem:pseudo-inverse}
    Let $A\in \bR^{n\times d}$ and $0\ne b\in \bR^{n\times 1}$, where $n\ge d+1$. Additionally, let $P=AA^+$ and $Q=bb^+ = \frac{bb^\top}{\|b\|^2}$, and define $z\triangleq \frac{b^\top(I-P)b}{\|b\|^2}$. If $z\ne 0$ and the columnwise partitioned matrix $[A,b]$ has linearly independent columns, we have 
    \begin{align*}
        \begin{bmatrix}
            A^\top\\
            b^\top
        \end{bmatrix}^+
        ={}&
        \begin{bmatrix}
        \left(I - \frac{bb^\top}{\|b\|^2}\right)\left(I+\frac{AA^+bb^\top}{\|b\|^2-b^\top AA^+b}\right)(A^+)^\top, \frac{(I-AA^+)b}{\|b\|^2-b^\top AA^+b}
        \end{bmatrix}\\
        ={}& \begin{bmatrix}
        (I-Q)(I+\frac{PQ}{1-\tr(PQ)})(A^+)^\top, \frac{(I-P)b}{b^\top(I-P)b}
        \end{bmatrix}\\
        ={}& \begin{bmatrix}
        (I-Q)(I+\frac{PQ}{z})(A^+)^\top, \frac{(I-P)b}{b^\top(I-P)b}
        \end{bmatrix}\,.
    \end{align*}
\end{lemma}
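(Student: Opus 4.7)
The plan is to compute the pseudo-inverse in two stages: first derive $[A,b]^+$ via the column-append rank-one update rule for the Moore--Penrose inverse, then reconcile the resulting block expression with the symmetric $(P,Q,z)$ form asserted in the lemma by direct algebra.

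First, observe that $\begin{bmatrix} A^\top \\ b^\top \end{bmatrix} = [A,b]^\top$, so it suffices to compute $[A,b]^+$ and take the transpose. The hypothesis that $[A,b]$ has linearly independent columns is equivalent to ``$A$ has full column rank and $b \notin \operatorname{col}(A)$'', which in turn is equivalent to $(I-P)b \ne 0$, i.e., $z \ne 0$. In this generic case, Greville's column-append formula yields
\[ [A,b]^+ = \begin{bmatrix} A^+ - A^+ b\, c^\top \\ c^\top \end{bmatrix}, \qquad c = \frac{(I-P)b}{\|(I-P)b\|^2} = \frac{(I-P)b}{b^\top(I-P)b}, \]
where I use that $I-P$ is an orthogonal projector, so $\|(I-P)b\|^2 = b^\top(I-P)b$. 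Transposing gives $[A,b]^{+\top} = \bigl[(I - cb^\top)(A^+)^\top,\ c\bigr]$, which already matches the second column block of the claim and, after substituting $\|b\|^2 - b^\top A A^+ b = b^\top(I-P)b$, also matches the unnormalized form of the first block.

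Next, I reconcile $(I-cb^\top)$ with $(I-Q)(I+PQ/z)$ by direct expansion. Writing $(I-Q)(I+PQ/z) = I - Q + PQ/z - QPQ/z$, the key step is the rank-one identity $QPQ = (1-z)Q$: since $Q = bb^\top/\|b\|^2$, one has $QPQ = b(b^\top Pb)b^\top/\|b\|^4$, and $b^\top Pb = \|b\|^2 - b^\top(I-P)b = \|b\|^2(1-z)$ by definition of $z$. Substituting collapses the expression to $I + (P-I)Q/z = I - (I-P)bb^\top/(z\|b\|^2)$, which is exactly $I - cb^\top$ by the definition of $c$ and $z$. This chain also establishes the equivalence of the three displayed forms in the lemma.

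The only real obstacle is the identity $QPQ = (1-z)Q$; this is the one place where the specific rank-one structure of $Q$, rather than its being a generic orthogonal projector, is essential, and it is precisely where the scalar $z$ factors out. If I wanted a fully self-contained derivation avoiding any appeal to Greville, I would instead verify the four Moore--Penrose conditions $MM^+M=M$, $M^+MM^+=M^+$, $(MM^+)^\top = MM^+$, $(M^+M)^\top=M^+M$ directly on the candidate block expression for $M = [A,b]^\top$; the same identity $QPQ = (1-z)Q$, together with $P(I-P) = 0$ and $b^\top c = 1$, is what makes those four checks go through.
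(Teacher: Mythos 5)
Your proof is correct, but the route you take is genuinely different from the paper's. The paper invokes a ready-made block formula for the Moore--Penrose inverse of a column-appended matrix due to Baksalary (the cited Theorem 1), which directly yields
\[
\begin{bmatrix} A^\top\\ b^\top\end{bmatrix}^+ = \Bigl[(I-Q)A\bigl(A^\top(I-Q)A\bigr)^{-1},\ \tfrac{(I-P)b}{b^\top(I-P)b}\Bigr],
\]
and then applies the Sherman--Morrison formula to expand $(A^\top(I-Q)A)^{-1}$, using $AG^{-1}=(A^+)^\top$ to collapse the result to $(I+PQ/z)(A^+)^\top$. You instead start from Greville's classical column-append recursion $[A,b]^+ = \begin{bsmallmatrix} A^+ - A^+bc^\top\\ c^\top\end{bsmallmatrix}$ with $c=(I-P)b/\,b^\top(I-P)b$, take the transpose to get $[(I-cb^\top)(A^+)^\top,\ c]$, and then show $I-cb^\top = (I-Q)(I+PQ/z)$ by a short rank-one computation pivoting on the identity $QPQ=(1-z)Q$. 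Both arguments are sound; the comparison comes down to which external ingredient one prefers to cite. Your Greville path is more self-contained and widely known, and it cleanly isolates the one nontrivial algebraic fact ($QPQ=(1-z)Q$) that makes the three displayed forms agree. The paper's path gets the second block of the answer for free from Baksalary's formula and only has to massage the first block, at the cost of relying on a more specialized reference. Your closing remark that one could alternatively verify the four Moore--Penrose axioms directly on the candidate, using $QPQ=(1-z)Q$, $P(I-P)=0$, and $b^\top c=1$, is also a valid (if more tedious) fallback; you are not obligated to carry it out since the Greville route already closes the argument.
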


In our construction of $\cD$, the components $\cD_j$ are all continuous distributions. The matrix $I-P$ is an orthogonal projection matrix and therefore $\rank(I-P) =n-d$. As a result, it holds almost surely that $b\ne 0$, $z\ne 0$, and $[A,b]$ has linearly independent columns. Thus the assumptions of \cref{lem:pseudo-inverse} are satisfied almost surely. In the sequel, we assume that these assumptions are always fulfilled.

\cref{thm:descent} guarantees that if $L_d = \bE\left\|  (A^+)^\top x \right\|^2 $ is finite and the $(d+1)$-th features $b_1,\dots,b_n,a_1$ are i.i.d.\ sampled from $\cN(0,1)$ or $\dmix$, $L_{d+1} = \bE\left\|  \begin{bmatrix}
A^\top \\
b^\top
\end{bmatrix}^+ \begin{bmatrix}
x\\
a_1
\end{bmatrix}\right\|^2 $ is also finite.

\begin{theorem}[Proof in \cref{sec:proof-descent}]\label{thm:descent}
Let $z$ be as defined in \cref{lem:pseudo-inverse}. If $b_1,\dots,b_n,a_1$ are i.i.d.\ and follow a distribution with mean zero, conditioned on $A$ and $x$, we have
\[
 \bE_{b,a_1}\left[\left\|  \begin{bmatrix}
A^\top \\
b^\top
\end{bmatrix}^+ \begin{bmatrix}
x\\
a_1
\end{bmatrix}\right\|^2 \right]
\le{} \bE_{b,a_1}\left[\frac{1}{z} \left\|  (A^+)^\top x \right\|^2 + \frac{a_1^2}{b^\top (I-P)b}\right]\,.
\]
In particular, if $d+2<n$ and $b_1,\dots,b_n,a_1\stackrel{iid}{\sim}\cN(0,1)$, conditioned on $A$ and $x$, we have \[
\bE_{b,a_1}\left[\left\|  \begin{bmatrix}
A^\top \\
b^\top
\end{bmatrix}^+ \begin{bmatrix}
x\\
a_1
\end{bmatrix}\right\|^2 \right] \le \frac{(n-2)\left\|  (A^+)^\top x \right\|^2 +1}{n-d-2}\,.
\] If $d+2<n$ and $b_1,\dots,b_n,a_1\stackrel{iid}{\sim}\dmix$, conditioned on $A$ and $x$, we have
\[\bE_{b,a_1}\left\|  \begin{bmatrix}
A^\top \\
b^\top
\end{bmatrix}^+ \begin{bmatrix}
x\\
a_1
\end{bmatrix}\right\|^2 \le \frac{(n-2+\sqrt{d})\left\|  (A^+)^\top x \right\|^2+ 2/(3\sigma^2)+1}{n-d-2}
\,.
\]
\end{theorem}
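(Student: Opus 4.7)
The plan is to compute the squared norm on the left-hand side in closed form and then bound each piece in expectation. Write $v := (A^+)^\top x$, $u := b^\top v$, $P := AA^+$, and $s := b^\top(I-P)b$; note that $v \in \mathrm{col}(A)$, i.e.\ $Pv=v$. The first step is the closed-form identity
\[
\left\|\begin{bmatrix} A^\top \\ b^\top\end{bmatrix}^+ \begin{bmatrix} x \\ a_1\end{bmatrix}\right\|^2 = \|v\|^2 + \frac{(u-a_1)^2}{s},
\]
which I would derive either from \cref{lem:pseudo-inverse} (using $v^\top(I-P)b=0$ to kill the cross term between the two blocks; the first block simplifies to $v-(u/s)(I-P)b$ after $s + b^\top P b = \|b\|^2$), or by direct Schur-complement inversion of $MM^\top$ for $M=[A,b]^\top$, whose Schur complement is exactly $s$.

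Next I would prove the general inequality. Expand $(u-a_1)^2 = u^2 - 2 u a_1 + a_1^2$. The cross term satisfies $\bE[u a_1/s] = \bE[u/s] \bE[a_1] = 0$ because $a_1$ has mean zero and is independent of $b$ (hence of $u$ and $s$). For the $u^2$ term, $u = (Pb)^\top v$ (using $Pv = v$), so Cauchy--Schwarz gives $u^2 \le \|Pb\|^2 \|v\|^2$ pointwise; combined with $\|Pb\|^2 = \|b\|^2 - s$ and $1/z = \|b\|^2/s$, this yields $\bE[u^2/s] \le \|v\|^2 (\bE[1/z] - 1)$, so that $\bE[\text{LHS}] \le \|v\|^2 \bE[1/z] + \bE[a_1^2/s]$, which is the first inequality.

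For the Gaussian case, by spherical symmetry of $b \sim \cN(0, I_n)$, $\|Pb\|^2 \sim \chi^2(d)$ and $s \sim \chi^2(n-d)$ are independent; standard $\chi^2$ moments give $\bE[1/s] = 1/(n-d-2)$, $\bE[\|b\|^2/s] = 1 + d/(n-d-2) = (n-2)/(n-d-2)$, and $\bE[a_1^2/s] = 1/(n-d-2)$, which substitute into the general bound to yield the stated form. For the Gaussian-mixture case, I decompose $b = \nu + \sigma \epsilon$ with $\nu_i$ i.i.d.\ uniform on $\{-1, 0, 1\}$ and $\epsilon \sim \cN(0, I_n)$ independent. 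Conditional on $\nu$, the two vectors $Pb = P\nu + \sigma P\epsilon$ and $(I-P)b = (I-P)\nu + \sigma(I-P)\epsilon$ are independent (since the Gaussian parts $P\epsilon$ and $(I-P)\epsilon$ are independent), and $s/\sigma^2 \mid \nu \sim \chi^2(n-d, \|(I-P)\nu\|^2/\sigma^2)$. The Poisson-mixture representation of the noncentral $\chi^2$ gives the uniform bound $\bE[1/s \mid \nu] \le 1/(\sigma^2(n-d-2))$. Combined with $\bE[a_1^2] = \sigma^2 + 2/3$, $\bE[\|Pb\|^2 \mid \nu] = \|P\nu\|^2 + \sigma^2 d$, and $\bE[\|P\nu\|^2] = (2/3)\tr(P) = 2d/3$, plus a Cauchy--Schwarz estimate that trades the $\|P\nu\|^2/\sigma^2$ contribution for $\sqrt{d}$, this delivers the claimed bound.

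The main obstacle I expect is the Gaussian-mixture case, specifically two points: (a) justifying the conditional independence of $Pb$ and $(I-P)b$ given $\nu$, which is what makes the expectations factor cleanly; and (b) arriving at the precise $\sqrt{d}$ in the coefficient of $\|(A^+)^\top x\|^2$. The naive conditional computation yields a $2d/(3\sigma^2)$ term, and tightening it to the $\sigma$-free $\sqrt{d}$ requires exploiting additional structure of the mixture (for example, the pointwise bound $\|P\nu\| \le \|\nu\|$ combined with $\bE[\|P\nu\|^2] = 2d/3$) rather than the crude moment estimate. The noncentral $\chi^2$ inversion moment and the independence of $a_1$ from $b$ make the remainder routine.
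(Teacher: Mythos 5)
Your closed-form identity
\[
\left\|\begin{bmatrix} A^\top \\ b^\top\end{bmatrix}^+ \begin{bmatrix} x \\ a_1\end{bmatrix}\right\|^2 = \|v\|^2 + \frac{(u-a_1)^2}{s}\,,\qquad v=(A^+)^\top x,\ u=b^\top v,\ s=b^\top(I-P)b,
\]
is correct and is the right simplification of the expression from \cref{lem:pseudo-inverse}: the first block reduces to $v-(u/s)(I-P)b$, the cross term with the second block vanishes pointwise since $(I-P)v=0$, and $\|(I-P)b\|^2=s$. Combined with Cauchy--Schwarz $u^2=((Pb)^\top v)^2\le\|Pb\|^2\|v\|^2$ and the algebraic fact $\|Pb\|^2/s = 1/z-1$, you get the first inequality directly. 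This is a genuinely different and cleaner route than the paper's: the paper carries the $(I-Q)(I+PQ/z)$ expression forward, writes the change as $-v^\top M v$ for an explicit rank-$2$ matrix $M$, and identifies the nonzero eigenvalues of $M$ as $1-1/z$ and $1$ via a trace computation before bounding $-v^\top Mv\le -(1-1/z)\|v\|^2$. Your decomposition makes the ``$\|v\|^2$ is preserved plus one extra scalar'' structure manifest from the start and trades the eigenvalue bookkeeping for a one-line Cauchy--Schwarz. The Gaussian case then drops out identically in both treatments from $\bE[1/z]=(n-2)/(n-d-2)$ and $\bE[a_1^2/s]=1/(n-d-2)$.

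For the mixture case your worry about the $\sqrt d$ is well placed, and you should not expect to ``tighten'' your way to it: the paper's own derivation of this constant in \cref{lem:bound-two-expectations} appears to contain an error. With $b=\nu+\sigma\epsilon$ as you set up, and writing $\tilde u=O\nu$ for the rotation diagonalizing $P$, the numerator satisfies $\sum_{i=1}^d(\tilde u_i+\tilde w_i)^2/\sigma^2\sim\chi^2\bigl(d,\sum_{i=1}^d\tilde u_i^2/\sigma^2\bigr)$ under the paper's own convention $\lambda=\sum\mu_i^2$; its conditional mean is therefore $d+\sum_{i=1}^d\tilde u_i^2/\sigma^2$, not $d+\sqrt{\sum_{i=1}^d\tilde u_i^2}$ as the paper writes. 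A correct bound on $\bE[1/z]$ thus carries a $\sigma$-dependent term (of order $\|P\nu\|^2/\sigma^2$, matching your $2d/(3\sigma^2)$ intuition after averaging over $\nu$), and the $\sigma$-free $\sqrt d$ does not follow from the argument as given. This does not threaten the downstream use of \cref{thm:descent}, which only needs \emph{some} finite bound to run the induction showing $L_d<\infty$; but if you want to reproduce the exact statement rather than a corrected version, you would have to patch the paper's calculation, not merely refine yours. Your identification of the conditional independence of $Pb$ and $(I-P)b$ given $\nu$, and the use of \cref{lem:stochastic-dominate} for the denominator, are both correct and are exactly the ingredients the paper uses for the $\bE[a_1^2/s]$ term, which does check out.
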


Using \cref{thm:descent}, we can show inductively (on $d$) that $L_d$ is finite for every $d$. Provided that we are able to guarantee finite $L_1$, \cref{thm:descent} implies that $L_d$ is finite for every $d$ if the components are always sampled from $\cN(0,1)$ or $\dmix$.

Making a large $L_d$ can be achieved by adding an entry sampled from $\dmix$ when the data dimension increases from $d-1$ to $d$ in the previous step. 
\cref{thm:ascent} shows that adding a $\dmix$ feature can increase the loss by arbitrary amount, which in turn implies the second part of \cref{thm:unnormalized-underparam}.

\begin{theorem}[Proof in \cref{sec:proof-ascent}]\label{thm:ascent}
For any $C > 0$ and $\bE\left\|(A^+)^\top x\right\|^2<+\infty$,  
there exists a $\sigma>0$ such that if $b_1,\dots,b_n,a_1\stackrel{iid}{\sim}\dmix$, we have \begin{align*}
\bE\left[\left\|  \begin{bmatrix}
A^\top \\
b^\top
\end{bmatrix}^+ \begin{bmatrix}
x\\
a_1
\end{bmatrix}\right\|^2 -  \left\|  (A^+)^\top x \right\|^2\right]>{}& C\,.
\end{align*}
\end{theorem}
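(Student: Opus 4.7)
The strategy is to use Lemma~\ref{lem:pseudo-inverse} to isolate a single positive contribution to $L_{d+1}$ that blows up as $\sigma \to 0$, while the ``rest'' of $L_{d+1}$ stays nonnegative and $L_d$ is fixed by hypothesis. Writing the augmented pseudoinverse as $[M,m]$ with
\[
M := (I-Q)\bigl(I + PQ/z\bigr)(A^+)^\top,\qquad m := \frac{(I-P)b}{b^\top(I-P)b},
\]
we have
\[
\left\|\begin{bmatrix}A^\top \\ b^\top\end{bmatrix}^+ \begin{bmatrix}x \\ a_1\end{bmatrix}\right\|^2 \;=\; \|Mx\|^2 + 2 a_1\, x^\top M^\top m + a_1^2 \|m\|^2,
\]
and $(I-P)^2 = I-P$ gives $\|m\|^2 = 1/(b^\top(I-P)b)$. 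Since $a_1 \sim \dmix$ is symmetric, $\bE a_1 = 0$, and $a_1$ is independent of $(A,x,b)$, the cross term vanishes in expectation. Using $\|Mx\|^2 \ge 0$ together with the finiteness of $L_d = \bE\|(A^+)^\top x\|^2$ gives
\[
L_{d+1} - L_d \;\ge\; \bE[a_1^2]\cdot \bE\!\left[\frac{1}{b^\top(I-P)b}\right] - \bE\|(A^+)^\top x\|^2.
\]

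The core step is lower bounding the expected reciprocal $\bE[1/b^\top(I-P)b]$. I would use the mixture representation $b = s + \sigma g$, where $s_1,\dots,s_n$ are i.i.d.\ uniform on $\{-1,0,1\}$ and $g \sim \cN(0, I_n)$, with $s$ and $g$ independent of each other and of $A$. On the event $\{s=0\}$, which has probability $3^{-n}$, we have $b = \sigma g$, hence $b^\top(I-P)b = \sigma^2\, g^\top(I-P)g$. Because $g$ is independent of $A$ and $I-P$ is almost surely the orthogonal projection onto an $(n-d)$-dimensional subspace, $g^\top(I-P)g \sim \chi^2_{n-d}$ conditionally on $A$. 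The identity $\bE[1/\chi^2_k] = 1/(k-2)$ for $k>2$ (when $k \le 2$ the reciprocal has infinite expectation and the claim is immediate) yields
\[
\bE\!\left[\frac{1}{b^\top(I-P)b}\right] \;\ge\; 3^{-n}\cdot \frac{1}{\sigma^2\,(n-d-2)}.
\]
Combined with $\bE[a_1^2] = \sigma^2 + 2/3 \ge 2/3$ for $a_1 \sim \dmix$, this gives $\bE[a_1^2]\cdot\bE[\|m\|^2] = \Omega(\sigma^{-2}) \to \infty$ as $\sigma \downarrow 0$. Since the subtracted term $\bE\|(A^+)^\top x\|^2$ is a fixed finite constant independent of $\sigma$, choosing $\sigma$ small enough makes the right-hand side exceed $C$, completing the proof.

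The main obstacle is controlling $\bE[1/(b^\top(I-P)b)]$ under a non-Gaussian, non-rotationally-invariant distribution for $b$, while the projector $I-P$ is itself random through $A$. Restricting to the ``degenerate mode'' $\{s=0\}$ sidesteps both issues at once: conditioning on this event replaces the mixture by a pure centered Gaussian, whose rotational invariance yields the clean $\chi^2_{n-d}$ law regardless of the realization of $A$. A secondary verification is the symmetry of $\dmix$ (used to kill the cross term via $\bE a_1 = 0$) and the short computation $\bE[a_1^2] = \sigma^2 + 2/3$. Neither the behavior of $\bE\|Mx\|^2$ nor any upper bound on $L_{d+1}$ is needed, because the lower-bound-only strategy only requires a single diverging positive term.
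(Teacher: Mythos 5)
Your proof is correct and follows essentially the same route as the paper: use \cref{lem:pseudo-inverse} to decompose the augmented pseudoinverse, kill the cross term with $\bE[a_1]=0$, discard the nonnegative $\bE\|Mx\|^2$ contribution, and then drive $\bE\left[a_1^2/(b^\top(I-P)b)\right]\to+\infty$ as $\sigma\downarrow 0$ by conditioning on an event whose probability is bounded away from zero uniformly in $\sigma$. The only real variation is in how you lower bound $\bE\left[1/(b^\top(I-P)b)\right]$: the paper first loosens $b^\top(I-P)b\le\|b\|^2$ and conditions on $\{\max_i|b_i|\le\sigma\}$, whose probability it bounds below via the $\cN(0,\sigma^2)$ component to get roughly $5^{-n}/(n\sigma^2)$, whereas you keep the projector intact, condition on the pure-Gaussian mode $\{s=0\}$ (probability $3^{-n}$), and exploit rotational invariance of $g\sim\cN(0,I_n)$ to get the exact $\chi^2_{n-d}$ law for $g^\top(I-P)g$, yielding $3^{-n}/\bigl(\sigma^2(n-d-2)\bigr)$. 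Your variant is a bit tidier and sharper and handles the corner case $n-d\le 2$ cleanly (the conditional expectation is then infinite), but both give the same $\Omega(\sigma^{-2})$ divergence and hence the same conclusion.
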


We are now ready to prove \cref{thm:unnormalized-underparam}.

\subsection{Proof of \cref{thm:unnormalized-underparam}}\label{sec:unnormalized-underparam-proof}

\begin{proof}
We follow the notation convention in \eqref{eq:change_in_loss}:
\[
L_{d+1}-L_d ={}
\bE\left[\left\|  \begin{bmatrix}
A^\top \\
b^\top
\end{bmatrix}^+ \begin{bmatrix}
x\\
a_1
\end{bmatrix}\right\|^2 -  \left\|  (A^\T)^+ x \right\|^2\right]\,.
\]
Recall $d<n$ and the matrix $B'\triangleq \begin{bmatrix}
A^\top \\
b^\top
\end{bmatrix}$ is of size $(d+1)\times n$. Both matrices $B'$ and $B\triangleq A^\T$ are fat matrices. As a result, if $x' \triangleq \begin{bmatrix}
x\\
a_1
\end{bmatrix}$,  we have \[
\|B'^+ x'\|^2 = \min_{z:B'z=x'} \|z\|^2\,,\quad \|B^+ x\|^2 = \min_{z:Bz=x} \|z\|^2\,.
\]
Since $ \{z\mid B'z=x'\} \subseteq \{z\mid Bz=x\}$, we get $ \|B'^+ x'\|^2 \ge \|B^+ x\|^2$. Therefore, we obtain $L_{d+1}\ge L_d$. 
The second part follows from \cref{thm:ascent}. 
\end{proof}

\begin{remark}\label{rmk:order-of-revealing}
\cref{rmk:product-distribution} and the proof of \cref{thm:ascent} indicate that $\cD = \cD_1 \times \cdots \times \cD_D$ is a product distribution. The construction in the proof also shows that the generalization curve is determined by the specific choice of the $\cD_i$'s. Note that  permuting the order of $\cD_i$'s is equivalent to changing the order by which the features are being revealed (i.e., permuting the entries of the data $x_i$'s). Therefore,  given the same data points $x_1,\cdots,x_n \in \mathbb{R}^D$, one can create different generalization curves simply by changing the order of the feature-revealing process. 
\end{remark}

\section{Overparametrized Regime}\label{sec:overparam}

\begin{figure*}
	\centering
	\includegraphics[width=0.55\linewidth]{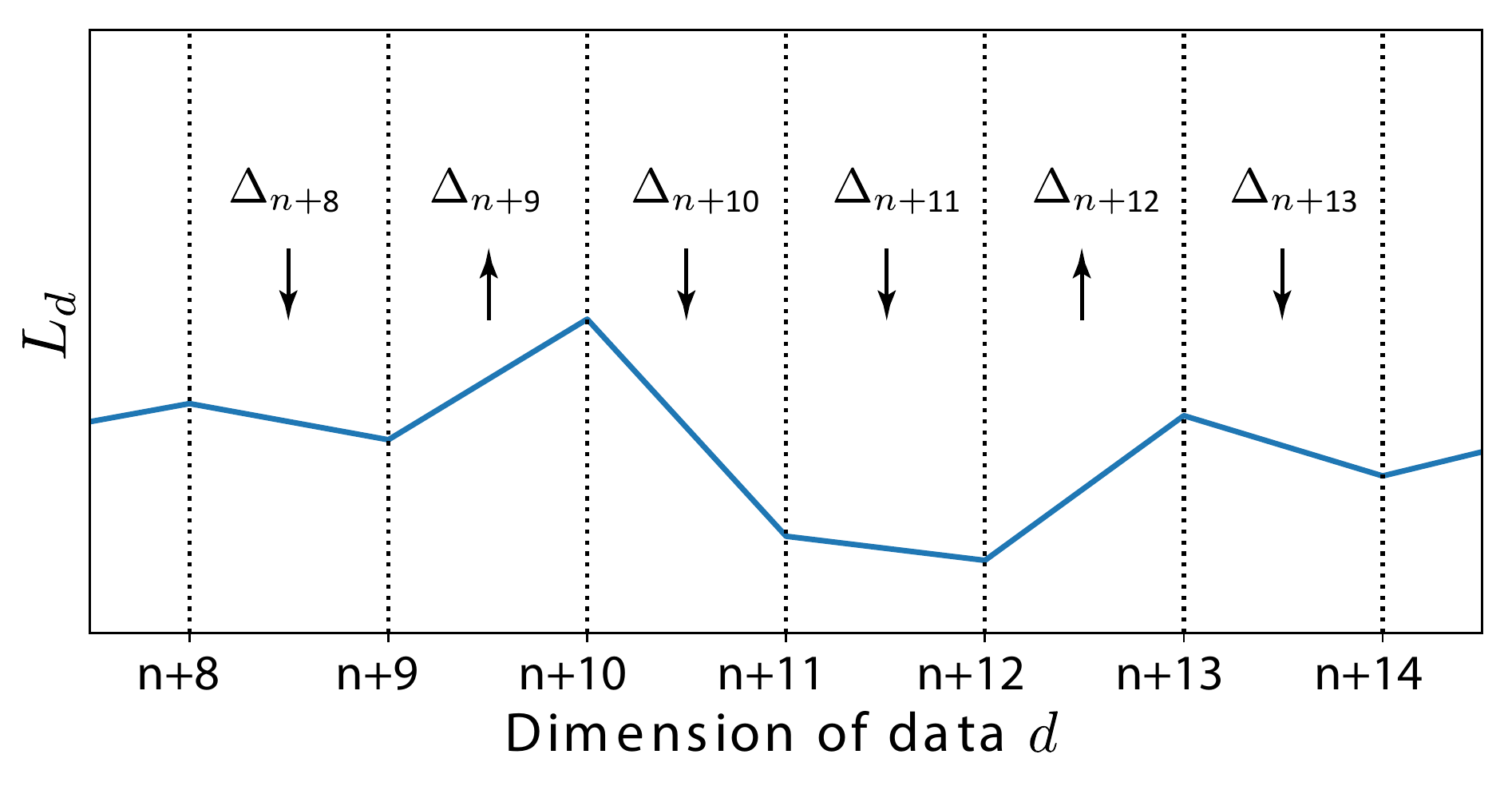}
	\caption{Illustration of the multiple descent phenomenon for the generalization loss $L_d$  versus the dimension of data $d$ in the \emph{overparametrized} regime starting from $d=n+8$. One can fully control the generalization curve to increase or decrease as specified by the sequence $\Delta=\{ {\downarrow}, {\uparrow}, {\downarrow}, {\downarrow}, {\uparrow}, {\downarrow}, \dots \}$.
	Adding a new feature with Gaussian mixture distribution increases the loss, while adding one with Gaussian distribution decreases the loss.}
	\label{fig:cartoon_overparam}
\end{figure*}

In this section, we study the multiple decent phenomenon in the overparametrized regime. Note that as stated in \cref{sec:prelim}, we consider the minimum-norm solution here. We first consider the case where the model $\beta=0$ and $L_d$ is as defined in \eqref{eq:loss-var}. Then we discuss the setting $\beta\ne 0$. 

As stated in the following theorem, we require $d \geq n+8$. This is merely a technical requirement and we can still say that $d$ starts at roughly the same order as $n$. In other words, the result covers almost the entire spectrum of the overparametrized regime. 

\begin{theorem}[Overparametrized regime, $\beta=0$]\label{thm:main-overparam}
Let $n < D-9$. Given any sequence $\Delta_{n+8},\Delta_{n+9},\dots$, $\Delta_{D-1}$ where $\Delta_d\in \{{\uparrow}, {\downarrow}\}$, there exists a distribution $\cD$ such that for every $n+8 \le d\le D-1$, we have \[
L_{d+1} \begin{cases}
> L_{d}, \quad\text{if } \Delta_d = {\uparrow}\\
< L_{d}, \quad\text{if } \Delta_d = {\downarrow}\,.
\end{cases}
\] 
\end{theorem}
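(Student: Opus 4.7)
The plan is to construct $\cD$ as a product distribution $\cD_1 \times \cdots \times \cD_D$, with each $\cD_j$ being either $\cN(0,1)$ or a trimodal Gaussian mixture $\cN^{\textnormal{mix}}_{\sigma_j,\mu_j}$ and with parameters chosen adaptively in $j$. First I would initialize $\cD_1,\dots,\cD_{n+8}$ (for instance, all $\cN(0,1)$) so that $L_{n+8}$ and enough of its inverse moments are finite; the technical threshold $d\ge n+8$ is exactly what guarantees that the Wishart-type Gram matrix has enough degrees of freedom for the Sherman-Morrison calculations below to make sense. Then, for each $d$ from $n+8$ to $D-1$, I would set $\cD_{d+1}=\cN(0,1)$ if $\Delta_d={\downarrow}$ and $\cD_{d+1}=\cN^{\textnormal{mix}}_{\sigma_{d+1},\mu_{d+1}}$ with parameters determined by $\cD_1,\dots,\cD_d$ if $\Delta_d={\uparrow}$.

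In the overparametrized regime, $(A^\top)^+=M^{-1}A$ with $M=AA^\top$, so $L_d=\bE\tr(M^{-2}T)$, where $T=A\Sigma A^\top$ and $\Sigma=\diag(v_1,\dots,v_d)$ is the diagonal covariance of the revealed features. Adding a feature column $b\in\bR^n$ sends $M\mapsto M'=M+bb^\top$ and $T\mapsto T'=T+v_{d+1}bb^\top$. Expanding $M'^{-1}=M^{-1}-uu^\top/(1+\alpha)$ via Sherman-Morrison with $u=M^{-1}b$ and $\alpha=b^\top M^{-1}b$, squaring, and simplifying yields the closed-form identity
\[
L_{d+1}-L_d \;=\; \bE\!\left[\frac{-2(1+\alpha)\,u^\top TM^{-1}u \;+\; \|u\|^2\,u^\top Tu \;+\; v_{d+1}\|u\|^2}{(1+\alpha)^2}\right],
\]
where the outer expectation is over $A$ and $b$. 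Both descent and ascent reduce to controlling the sign and magnitude of this expression.

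For $\Delta_d={\downarrow}$, take $\cD_{d+1}=\cN(0,1)$, so $v_{d+1}=1$ and $b\sim\cN(0,I_n)$ is rotationally symmetric. After diagonalizing $M$, the quantities $\|u\|^2$, $\alpha$, $u^\top TM^{-1}u$, and $u^\top Tu$ become rational expressions in independent standard Gaussians. Standard chi-squared identities (generalizing the Wishart trace relation $\bE\tr W^{-1}=n/(d-n-1)$) then show that the conditional expectation over $b$ is strictly negative for each realization of $A$, so $L_{d+1}<L_d$. In the special case $T=M$, the integrand collapses to $\|u\|^2(v_{d+1}-2-\alpha)/(1+\alpha)^2\le -\|u\|^2/(1+\alpha)<0$, which illustrates the mechanism.

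For $\Delta_d={\uparrow}$, take $\cD_{d+1}=\cN^{\textnormal{mix}}_{\sigma_{d+1},\mu_{d+1}}$ and drive $\mu_{d+1}$ large. Since the coordinate variance $v_{d+1}=\sigma_{d+1}^2+\tfrac{2}{3}\mu_{d+1}^2$ can be made arbitrarily large, the positive contribution $v_{d+1}\|u\|^2/(1+\alpha)^2$ in the identity can be scaled at will, while the remaining two terms depend on $b$ only through low-order moments that grow only polynomially in $\mu_{d+1}$. A scaling argument — bounding $u^\top TM^{-1}u$ and $u^\top Tu$ by the spectral norms of $T$ and $M^{-1}$, which depend only on the already fixed $\cD_1,\dots,\cD_d$ — then yields $L_{d+1}>L_d$ once $\mu_{d+1}$ exceeds a threshold determined by $\cD_1,\dots,\cD_d$, closing the induction. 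The main obstacle is precisely the case when $T$ is not proportional to $M$ — which happens as soon as earlier features include mixtures, so that the cross-terms $u^\top TM^{-1}u$ and $u^\top Tu$ no longer simplify to $\|u\|^2$ and $\alpha$. Handling this will require decomposing $T=M+\sum_{j:\text{mix}}(v_j-1)a_ja_j^\top$, bounding the extra low-rank terms via their spectral norms, and conditioning on the discrete mode assignment of each coordinate of $b$ before integrating out the Gaussian fluctuations around the modes.
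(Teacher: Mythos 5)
Your high-level plan is exactly the paper's: a product distribution with $n+8$ initial $\cN(0,1)$ coordinates for moment control, then Gaussian coordinates for descent and Gaussian-mixture coordinates for ascent, with the induction closed by finiteness of the relevant moments at each step. Your Sherman--Morrison expansion of $\tr(M'^{-2}T')-\tr(M^{-2}T)$ is also correct. But there is a genuine gap in the descent step, which you flag and do not close. You fix $\cD_{d+1}=\cN(0,1)$ and assert that the conditional expectation over $b$ is strictly negative for every $A$, yet you verify this only for $T=M$, where the integrand collapses to $-\|u\|^2/(1+\alpha)<0$ pointwise. As soon as one earlier coordinate is a mixture, $\Sigma\ne I$ and $T=A\Sigma A^\T\ne M$; writing $T=M+E$ with $E=\sum_j(v_j-1)a_ja_j^\T\succcurlyeq 0$, the integrand picks up the uncontrolled term $\bigl(-2(1+\alpha)\,u^\T EM^{-1}u+\|u\|^2\,u^\T Eu\bigr)/(1+\alpha)^2$, and $(EM^{-1}+M^{-1}E)$ need not be positive semidefinite, so the sign of $u^\T EM^{-1}u$ can flip with $b$. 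Because $T$ and $M$ do not commute, the Wishart/chi-squared identities you invoke do not settle the sign of the expectation. The paper avoids the issue in two ways that matter: it conditions on both $A$ and $x$ rather than integrating out $x$ (so $T$ never appears), and, decisively, it does \emph{not} fix $\cN(0,1)$ but takes $\cD_{d+1}=\cN(0,\sigma_{d+1}^2)$ with $\sigma_{d+1}$ chosen small enough at each step. As $\sigma\to 0^+$ it proves $L_{d+1}-L_d\sim -2\sigma^2\,\bE[\|(A^\T A)^+x\|^2]$, with all positive contributions $O(\sigma^4)$, so the sign is determined by a manifestly positive quantity regardless of $\Sigma$. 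This adaptive shrinking of $\sigma_d$ is the step your sketch is missing; without it the induction does not close past the first mixture coordinate.

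Your ascent step is closer to the paper, but the scaling argument as written is too loose. The positive contribution is $v_{d+1}\,\bE_b\bigl[\|u\|^2/(1+\alpha)^2\bigr]$, and sending $\mu_{d+1}\to\infty$ makes $v_{d+1}$ large but also inflates $b$ with high probability, so $\|u\|^2/(1+\alpha)^2=\Theta(1/\|b\|^2)$ and the naive product does not obviously diverge. The repair — and what the paper actually does, via $\mu=1/\sigma^2$ and $\sigma\to0^+$ — is to condition on the event that every $b_i$ lands in the central $\cN(0,\sigma^2)$ component, which has probability bounded below independently of $\mu$ (the paper uses the lower bound $(1/12)^n$). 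On that event $\|b\|$ is small, $\|u\|^2/(1+\alpha)^2$ is bounded below away from $0$, and since $a_1$ is independent of $b$ the factor $\bE[a_1^2]\ge\mu^2/3$ multiplies through, giving the divergence. Making this conditioning explicit, and keeping the two bounded cross terms uniformly controlled exactly as you propose (via operator norms of $T$ and $M^{-1}$, using the finite-moment lemma), would complete the ascent argument.
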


In \cref{thm:main-overparam}, the sequence $\Delta_{n+8}$, $\Delta_{n+9}$, $\cdots$, $\Delta_{D-1}$ is just used to specify the increasing/decreasing behavior of the $L_d$ sequence for $d>n+8$. Compared to \cref{thm:unnormalized-underparam} for the underparametrized regime, where $L_d$ always increases, \cref{thm:main-overparam} indicates that one is able to fully control both ascents and descents in the overparametrized regime. \cref{fig:cartoon_overparam} is an illustration.

We now present tools for proving \cref{thm:main-overparam}. \cref{lem:pseudo-inverse-overparam} gives the pseudo-inverse of $A$ when $d > n$. 

\begin{lemma}[Proof in \cref{sec:proof-pseudo-inverse-overparam}]\label{lem:pseudo-inverse-overparam}
Let $A\in \bR^{n\times d}$ and $b\in \bR^{n\times 1}$, where $n\le d$. Assume that matrix $A$ and the columnwise partitioned matrix $B\triangleq [A,b]$ have linearly independent rows. Let $G \triangleq (AA^\top)^{-1}\in \bR^{n\times n}$ and $
    u \triangleq \frac{b^\top G}{1+b^\top G b}\in \bR^{1\times n}$. We have 
    \begin{equation*}
        \begin{bmatrix}
            A^\top\\
            b^\top
        \end{bmatrix}^+ =
        \begin{bmatrix}
        (I-bu)^\top (A^+)^\top, u^\top
        \end{bmatrix}\,.
    \end{equation*}
\end{lemma}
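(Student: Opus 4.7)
The plan is to recognize that, by the rank hypotheses, the matrix $B^\top = \begin{bmatrix} A^\top \\ b^\top \end{bmatrix}$ is a tall matrix of size $(d+1)\times n$ with full column rank $n$ (since $B = [A,b]$ has full row rank $n$). For such a matrix, the Moore–Penrose pseudo-inverse admits the closed form $(B^\top)^+ = ((B^\top)^\top B^\top)^{-1} (B^\top)^\top = (BB^\top)^{-1} B$. Similarly, since $A$ has full row rank $n$, we have $(A^\top)^+ = (AA^\top)^{-1} A = GA$, which I will use to identify the claimed block.

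Next, I will compute $(BB^\top)^{-1}$ via the Sherman–Morrison formula. Since $BB^\top = AA^\top + bb^\top = G^{-1} + bb^\top$, we obtain
\begin{equation*}
(BB^\top)^{-1} = G - \frac{Gbb^\top G}{1 + b^\top G b}.
\end{equation*}
Multiplying by the two columnwise blocks of $B = [A,b]$ separately:
\begin{itemize}
\item For the $b$-block: $(BB^\top)^{-1} b = Gb - \frac{Gbb^\top G b}{1+b^\top Gb} = \frac{Gb}{1+b^\top Gb} = u^\top$, where I used symmetry of $G$ so that $u^\top = \frac{Gb}{1+b^\top Gb}$.
\item For the $A$-block: $(BB^\top)^{-1} A = GA - \frac{Gb\,(b^\top G A)}{1+b^\top Gb} = GA - u^\top (b^\top G A) = (I - u^\top b^\top)\, GA = (I-bu)^\top (A^+)^\top$, where I used $(A^+)^\top = GA$ and $(I-bu)^\top = I - u^\top b^\top$.
\end{itemize}
Assembling the two blocks yields exactly the stated formula.

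The only subtlety is confirming that $1 + b^\top G b \neq 0$ so that $u$ is well-defined and Sherman–Morrison applies, but this is immediate since $G \succeq 0$ makes $b^\top G b \geq 0$. There is no genuine obstacle here; the argument is a direct application of the full-rank pseudo-inverse formula together with a rank-one update of $AA^\top$. The main thing to be careful about is bookkeeping: the claim expresses the first block as $(I-bu)^\top (A^+)^\top$ rather than $(A^+)^\top (I-bu)^\top$ (sizes: $(I-bu)$ is $n\times n$, $(A^+)^\top$ is $n\times d$), so I will make sure to take the transpose of the rank-one factor in the correct order, which is exactly what $(I-u^\top b^\top)(GA)$ gives.
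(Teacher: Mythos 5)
Your proof is correct and follows essentially the same route as the paper's: both invoke the full-rank pseudo-inverse formula and reduce $(BB^\top)^{-1}$ to $G$ via Sherman--Morrison, then read off the two blocks (the paper computes $B^+ = B^\top(BB^\top)^{-1}$ and transposes at the end, whereas you compute $(B^\top)^+ = (BB^\top)^{-1}B$ directly, which is the same calculation). Your explicit remark that $1+b^\top G b>0$ is a nice touch that the paper leaves implicit.
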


\cref{lem:overparam-assumptions-hold} establishes finite expectation for several random variables. These finite expectation results are necessary for \cref{thm:descent-overparam2} and \cref{thm:ascent-overparam} to hold. Technically, they are the dominating random variables needed in Lebesgue's dominated convergence theorem. \cref{lem:overparam-assumptions-hold} indicates that to guarantee these finite expectations, it suffices to set the first $n+8$ distributions to the standard normal distribution and then set $\cD_{n+8},\dots,\cD_D$ to either a Gaussian or a Gaussian mixture distribution. In fact, in \cref{thm:descent-overparam2} and \cref{thm:ascent-overparam}, we always add a Gaussian distribution or a Gaussian mixture. 

\begin{lemma}[Proof in \cref{sec:proof-overparam-assumptions-hold}]\label{lem:overparam-assumptions-hold}
Let $\cD = \cD_1 \times \cdots \times \cD_D$ be a product distribution where \begin{enumerate}[label=(\alph*)]
\item $\cD_d = \cN(0,1)$ if $d = 1,\dots, n+8$; and
\item $\cD_d$ is either $\cN(0,\sigma_d^2)$ or $\cN^\textnormal{mix}_{\sigma_d,\mu_d}$ for $d > n+8$.
\end{enumerate} 
Let $\cD_{[1:d]}$ denote $\cD_1\times \cdots \times \cD_{d}$. 
Assume that every row of $A\in \bR^{n\times d}$ and $x\in \bR^{d\times 1}$ are i.i.d.\ and follow $\cD_{[1:d]}$.  
For any $d$ such that $n+8 \leq d\leq D$, all of the followings hold:
\begin{equation}\label{eq:finite-expectations}
\begin{alignedat}{2}
    \bE&[ \| (A^+)^\top x \|^2] <{} + \infty\,,
     &&\bE[ \lambda^2_{\textnormal{max}} ((AA^\top)^{-1}) ] <{} + \infty\,,\\
     \bE&[\lambda_{\textnormal{max}} ((AA^\top)^{-1}) \| (A^+)^\top x \|^2] <{} +\infty\,,\quad
     &&\bE[\lambda^2_{\textnormal{max}} ((AA^\top)^{-1}) \| (A^+)^\top x \|^2] <{} +\infty\,.
\end{alignedat}
\end{equation}
\end{lemma}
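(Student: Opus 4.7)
The plan is to decouple the test point $x$ from the design matrix $A$ using an operator-norm inequality, and then reduce everything to inverse moments of the smallest eigenvalue of a standard Wishart matrix. Since $d\ge n+8>n$ and $\cD$ is continuous, $A$ has full row rank almost surely, so I may write $A^+=A^\top(AA^\top)^{-1}$, giving $\|A^+\|^2=1/\lambda_{\min}(AA^\top)=\lambda_{\max}((AA^\top)^{-1})$. First I would combine the pointwise bound $\|(A^+)^\top x\|^2\le \|A^+\|^2\|x\|^2$ with the independence of $x$ and $A$ to obtain
\begin{equation*}
\bE\!\left[\lambda_{\max}^k((AA^\top)^{-1})\,\|(A^+)^\top x\|^2\right]\le \bE\!\left[\lambda_{\max}^{k+1}((AA^\top)^{-1})\right]\cdot\bE\|x\|^2
\end{equation*}
for $k\in\{0,1,2\}$. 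Each coordinate of $x$ has finite variance under either $\cN(0,\sigma^2)$ or the trimodal mixture (variance $\sigma^2+\tfrac{2}{3}\mu^2$), so $\bE\|x\|^2<\infty$. All four inequalities in \eqref{eq:finite-expectations} therefore reduce to showing $\bE[\lambda_{\max}^p((AA^\top)^{-1})]<\infty$ for $p\in\{1,2,3\}$.

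Next, to produce these moment bounds, I would partition $A=[A_1,A_2]$ so that $A_1\in\bR^{n\times(n+8)}$ collects the first $n+8$ columns of $A$. By hypothesis~(a), the entries of $A_1$ are i.i.d.\ $\cN(0,1)$. From $AA^\top=A_1A_1^\top+A_2A_2^\top\succeq A_1A_1^\top$ one gets $\lambda_{\min}(AA^\top)\ge \lambda_{\min}(A_1A_1^\top)$, and hence
\begin{equation*}
\lambda_{\max}^p((AA^\top)^{-1})\le \lambda_{\min}(W)^{-p},\qquad W\triangleq A_1A_1^\top,
\end{equation*}
where $W$ is a Wishart matrix with $n+8$ degrees of freedom. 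This step would eliminate all dependence on $d$ and on the later marginals $\cD_{n+9},\dots,\cD_d$, leaving a purely Gaussian inverse-moment estimate.

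The closing step would invoke the classical fact that for a $\mathrm{Wishart}_n(k,I)$ matrix with $k\ge n$, the marginal density of $\lambda_{\min}$ at the origin behaves like $\Theta(\lambda_{\min}^{(k-n-1)/2})$. This can be read off from the joint eigenvalue density proportional to $\prod_{i<j}(\lambda_i-\lambda_j)\prod_i \lambda_i^{(k-n-1)/2}e^{-\lambda_i/2}$: marginalizing out $\lambda_1,\dots,\lambda_{n-1}$, the Vandermonde factor contributes $\prod_{i<n}\lambda_i$ plus strictly higher powers of $\lambda_n$, so the leading power of $\lambda_n$ as $\lambda_n\to 0^+$ remains $(k-n-1)/2$. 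Consequently $\bE[\lambda_{\min}(W)^{-p}]<\infty$ whenever $p<(k-n+1)/2$; with $k=n+8$ this gives $p<9/2$, which comfortably covers $p=1,2,3$, and explains why the threshold $n+8$ appears in the hypothesis.

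The hardest part will be this last Wishart inverse-moment estimate. Writing it rigorously will require either a careful marginalization of the joint eigenvalue density — paying attention to the Vandermonde repulsion so as not to mis-count powers of $\lambda_n$ — or an appeal to Edelman-style small-ball bounds of the form $\Pr(\lambda_{\min}(W)\le t)\lesssim t^{(k-n+1)/2}$ as $t\to 0^+$. Modulo this, the remainder is a routine application of operator-norm inequalities and independence.
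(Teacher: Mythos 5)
Your argument is correct, and it takes a genuinely different route from the paper's proof, though both share the Loewner-monotonicity step $A_d A_d^\top \succeq A_{n+8}A_{n+8}^\top$ that strips away dependence on $d$ and the later marginals. Where the paper decouples $\lambda_{\max}((AA^\top)^{-1})$ from $\|(A^+)^\top x\|^2$ via Cauchy--Schwarz, forcing it to control $\bE[\lambda_{\max}^4((AA^\top)^{-1})]$ and $\bE\|(A^+)^\top x\|^4$ separately (and to cite von Rosen's Theorem~4.1 for fourth inverse-Wishart moments, plus subgaussianity for $\bE\|x\|^4$), you instead use the pointwise bound $\|(A^+)^\top x\|^2 \le \lambda_{\max}((AA^\top)^{-1})\|x\|^2$ and the independence of $x$ and $A$ to reduce all four quantities in~\eqref{eq:finite-expectations} to $\bE[\lambda_{\max}^p((AA^\top)^{-1})]$ for $p\in\{1,2,3\}$ together with $\bE\|x\|^2<\infty$. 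This buys you two things: you only need the second moment of $x$, not the fourth, and you only need $p\le 3$ rather than $p=4$ (your own threshold $p<(k-n+1)/2$ would in fact be satisfied with $k=n+6$, so the $n+8$ in the hypothesis is convenient but not the reason your argument works). The one place you would have to do real work to make this airtight is the inverse moment of $\lambda_{\min}$ of the real $\mathrm{Wishart}_n(n+8,I)$ matrix: your heuristic from the joint eigenvalue density is sound, but a clean finished proof should either carefully bound the marginal of $\lambda_{\min}$ (the Vandermonde factor $\prod_{i<n}|\lambda_i-\lambda_n|$ is bounded above near $\lambda_n=0$ after marginalizing the other eigenvalues, so no extra small powers of $\lambda_n$ appear) or cite a small-ball estimate $\Pr(\lambda_{\min}\le t)\lesssim t^{(k-n+1)/2}$ from the Wishart literature; alternatively you could fall back on the paper's trace bound $\lambda_{\max}^p((AA^\top)^{-1})\le\tr((A_{n+8}A_{n+8}^\top)^{-p})$ and the same inverse-Wishart moment reference to shortcut this last step.
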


\cref{thm:descent-overparam2,thm:ascent-overparam} are the key technical results for constructing multiple descent in the overparametrized regime. One can create a descent ($L_{d+1}<L_d$) by adding a Gaussian feature (\cref{thm:descent-overparam2}) and create an ascent ($L_{d+1}>L_d$) by adding a Gaussian mixture feature (\cref{thm:ascent-overparam}). 

\begin{theorem}[Proof in \cref{sec:proof-descent-overparam2}]\label{thm:descent-overparam2}
If $ \bE[\|(A^\T A)^+ x\|^2] >0$ and all equations in \eqref{eq:finite-expectations} hold,
there exists $\sigma>0$ such that if $a_1,b_1,\dots,b_n\stackrel{iid}{\sim} \cN(0,\sigma^2)$, we have 
\begin{align*}
L_{d+1} - L_d ={}& \bE\left\| \begin{bmatrix}
A^\top \\
b^\top
\end{bmatrix}^+ \begin{bmatrix}
x\\
a_1
\end{bmatrix}\right\|^2 - \bE \left\|  (A^+)^\top x \right\|^2 < 0\,.
\end{align*}
\end{theorem}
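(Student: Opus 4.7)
The plan is to compute $L_{d+1}-L_d$ in closed form and extract its leading order in $\sigma$ as $\sigma\to 0^+$. Let $y\triangleq(A^+)^\top x\in\bR^n$ and $G\triangleq(AA^\top)^{-1}$. By \cref{lem:pseudo-inverse-overparam},
\[
\begin{bmatrix}A^\top\\ b^\top\end{bmatrix}^+\begin{bmatrix}x\\ a_1\end{bmatrix}
=(I-bu)^\top y+u^\top a_1
=y+(a_1-b^\top y)\,u^\top,\qquad u=\frac{b^\top G}{1+b^\top G b}.
\]
Since $a_1\sim\cN(0,\sigma^2)$ is independent of $(A,b,x)$ and has mean zero, integrating $a_1$ out first gives, pointwise in $(A,b,x)$,
\[
\bE_{a_1}\!\left\|y+(a_1-b^\top y)\,u^\top\right\|^2-\|y\|^2
=-2(b^\top y)(uy)+(b^\top y)^2\|u\|^2+\sigma^2\|u\|^2.
\]

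Next, I would perform the change of variables $b=\sigma\tilde b$ with $\tilde b\sim\cN(0,I_n)$ to expose the $\sigma$-dependence. After substituting $b^\top y=\sigma\tilde b^\top y$, $b^\top Gb=\sigma^2\tilde b^\top G\tilde b$, $uy=\sigma\tilde b^\top Gy/(1+\sigma^2\tilde b^\top G\tilde b)$, and $\|u\|^2=\sigma^2\tilde b^\top G^2\tilde b/(1+\sigma^2\tilde b^\top G\tilde b)^2$, and then dividing by $\sigma^2$, one obtains
\[
\frac{L_{d+1}-L_d}{\sigma^2}
=\bE\!\left[\frac{-2(\tilde b^\top y)(\tilde b^\top Gy)}{1+\sigma^2\tilde b^\top G\tilde b}
+\frac{\sigma^2(\tilde b^\top y)^2\,\tilde b^\top G^2\tilde b}{(1+\sigma^2\tilde b^\top G\tilde b)^2}
+\frac{\sigma^2\,\tilde b^\top G^2\tilde b}{(1+\sigma^2\tilde b^\top G\tilde b)^2}\right].
\]
Pointwise as $\sigma\to 0^+$, the second and third integrands vanish while the first tends to $-2(\tilde b^\top y)(\tilde b^\top Gy)$. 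Using independence of $\tilde b$ from $(A,x)$ and $\bE[\tilde b\tilde b^\top]=I_n$, the expectation of this limit equals $-2\,\bE[y^\top Gy]$, and a short calculation with $(A^\top A)^+=A^\top(AA^\top)^{-2}A$ identifies $y^\top Gy=\|(A^\top A)^+ x\|^2$. Once the limit is interchanged with the outer expectation, $\lim_{\sigma\to 0^+}(L_{d+1}-L_d)/\sigma^2=-2\,\bE[\|(A^\top A)^+ x\|^2]<0$ by hypothesis, so $L_{d+1}-L_d<0$ for all sufficiently small $\sigma>0$.

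The main technical obstacle is justifying this interchange via the dominated convergence theorem. Using $1+\sigma^2\tilde b^\top G\tilde b\ge 1$ in each denominator, together with Cauchy--Schwarz and $\|Gy\|\le\lambda_{\max}(G)\|y\|$, the first integrand is bounded in absolute value by $\|\tilde b\|^2\|y\|^2\lambda_{\max}(G)$, while the second and third are bounded by $\sigma_0^2\|\tilde b\|^4\|y\|^2\lambda_{\max}^2(G)$ and $\sigma_0^2\|\tilde b\|^2\lambda_{\max}^2(G)$ respectively, uniformly for $\sigma\in(0,\sigma_0]$. Because $\tilde b$ is standard Gaussian and independent of $(A,x)$, these dominating functions are integrable precisely by the finiteness of $\bE[\lambda_{\max}(G)\|(A^+)^\top x\|^2]$, $\bE[\lambda_{\max}^2(G)\|(A^+)^\top x\|^2]$, and $\bE[\lambda_{\max}^2(G)]$ supplied by \cref{lem:overparam-assumptions-hold}. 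DCT therefore applies, which completes the argument.
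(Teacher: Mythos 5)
Your proof is correct and arrives at the same leading-order asymptotic $L_{d+1}-L_d\sim-2\sigma^2\,\bE[\|(A^\top A)^+x\|^2]$ as the paper, but the route through the $\sigma\to 0^+$ limit is genuinely different. The paper keeps $b\sim\cN(0,\sigma^2 I_n)$, writes the bias part as a matrix quadratic form $-v^\top\bigl(\tfrac{HG+GH}{r}-\tfrac{HG^2H}{r^2}\bigr)v$ with $H=bb^\top$, passes to the spectral decomposition $G=V\Lambda V^\top$, uses a sign-flip symmetry argument to show that the conditional expectation over $b$ is a diagonal matrix, and finishes with the monotone convergence theorem applied term by term. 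You instead expand the minimum-norm solution directly as $y+(a_1-b^\top y)u^\top$, rescale $b=\sigma\tilde b$ with $\tilde b\sim\cN(0,I_n)$, divide by $\sigma^2$, and handle the whole limit with a single dominated-convergence step, whose dominating bounds factorize across the independent $\tilde b$ and $(A,x)$. Both proofs consume precisely the moments certified by \cref{lem:overparam-assumptions-hold}, but your rescaling makes the $O(\sigma^2)$ bookkeeping automatic and bypasses the spectral and diagonality machinery entirely, which is arguably the more transparent way to see why only the cross term linear in $G$ survives at order $\sigma^2$. One small slip: your bound on the first integrand should carry a factor of $2$, i.e.\ $2\|\tilde b\|^2\|y\|^2\lambda_{\max}(G)$, though this constant is immaterial to the integrability and to the conclusion.
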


\cref{thm:ascent-overparam} shows that adding a Gaussian mixture feature can make $L_{d+1}>L_d$. 

\begin{theorem}[Proof in \cref{sec:proof-ascent-overparam}]\label{thm:ascent-overparam}
Assume $\bE \| (A^+)^\top x \|^2 < + \infty$. For any $C >0$, there exist $\mu$, $\sigma>0$ such that if $a_1,b_1,\dots,b_n\stackrel{iid}{\sim} \dmixN$, we have
\begin{align*}
L_{d+1} - L_d ={}& \bE\left\| \begin{bmatrix}
A^\top \\
b^\top
\end{bmatrix}^+ \begin{bmatrix}
x\\
a_1
\end{bmatrix}\right\|^2 - \bE \left\|  (A^+)^\top x \right\|^2 >C\,.
\end{align*}
\end{theorem}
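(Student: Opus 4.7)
The plan is to fix any $\sigma > 0$ and drive the mixture mean $\mu \to \infty$; the mechanism is that $\mathrm{Var}(a_1) = \sigma^2 + 2\mu^2/3$ grows like $\mu^2$, injecting an unbounded variance contribution into $L_{d+1}$ while $L_d$ stays fixed. Concretely, I would first apply \cref{lem:pseudo-inverse-overparam} (which holds almost surely because the Gaussian prefix of length $n+8$ specified by \cref{lem:overparam-assumptions-hold} guarantees that $A$, hence also $[A,b]$, has linearly independent rows) to rewrite the augmented least-norm vector as $v - u^\top(b^\top v - a_1)$, where $v \triangleq (A^+)^\top x$, $G \triangleq (AA^\top)^{-1}$, and $u \triangleq b^\top G/(1 + b^\top G b)$. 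Combining Cauchy--Schwarz $|uv| \leq \|u\|\,\|v\|$ with the AM--GM estimate $2|c|\,\|u\|\,\|v\| \leq \tfrac{1}{2}c^2\|u\|^2 + 2\|v\|^2$ yields the pointwise lower bound $\|v - u^\top c\|^2 \geq \tfrac{1}{2}c^2\|u\|^2 - \|v\|^2$; taking $c = b^\top v - a_1$ and then expectations gives
\[
L_{d+1} - L_d \;\geq\; \tfrac{1}{2}\,\E\!\left[\|u\|^2(b^\top v - a_1)^2\right] - 2L_d.
\]

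Next, I would exploit the mixture structure by writing $b_i = \sigma Z_i + \mu W_i$ with $Z_i \stackrel{iid}{\sim} \cN(0,1)$ and $W_i$ i.i.d.\ uniform on $\{-1, 0, 1\}$, and restricting to the event $E = \{W_1 = \cdots = W_n = 0\}$, which has probability $3^{-n}$. On $E$ we have $b = \sigma Z$, and since $a_1 \sim \dmixN$ has mean $0$ and variance $\sigma^2 + 2\mu^2/3$ and is independent of $b$, conditioning yields $\E_{a_1}[(b^\top v - a_1)^2 \mid b] = (b^\top v)^2 + \sigma^2 + 2\mu^2/3 \geq 2\mu^2/3$. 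Pulling this factor out and discarding the (nonnegative) contribution from $E^c$ leads to
\[
\E\!\left[\|u\|^2(b^\top v - a_1)^2\right] \;\geq\; \frac{2\mu^2}{3\cdot 3^n}\, c_\sigma, \qquad c_\sigma \;\triangleq\; \E_{A,Z}\!\left[\frac{\sigma^2\, Z^\top G^2 Z}{(1 + \sigma^2\, Z^\top G Z)^2}\right].
\]

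The remaining step is to argue $c_\sigma > 0$ for every fixed $\sigma > 0$: because $G$ is positive definite almost surely (a consequence of the Gaussian prefix) and $Z \neq 0$ almost surely, the integrand is strictly positive a.s., so the expectation is strictly positive. Substituting back, $L_{d+1} - L_d \geq (c_\sigma/(3\cdot 3^n))\,\mu^2 - 2L_d$, and since $L_d < \infty$ by hypothesis, fixing any $\sigma > 0$ and then choosing $\mu$ large enough (say $\mu^2 > 3\cdot 3^n(C + 2L_d)/c_\sigma$) delivers $L_{d+1} - L_d > C$. The principal obstacle is establishing $c_\sigma > 0$, which is purely a nondegeneracy statement about the law of $A$; notably, no moment condition beyond $\E\|(A^+)^\top x\|^2 < \infty$ is required, and the lower bound is consistent with $L_{d+1} = +\infty$, in which case the claim $L_{d+1} - L_d > C$ holds trivially.
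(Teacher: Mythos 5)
Your proof is correct, but it takes a genuinely different route from the paper's. The paper first takes the expectation over $a_1$ to get the \emph{exact} decomposition
$\|(I-bu)^\top v\|^2 + \bE[a_1^2]\,\|Gb\|^2/r^2 - \|v\|^2$,
drops the first (nonnegative) term, factors $\bE[a_1^2\,\|Gb\|^2/r^2] = \bE[a_1^2]\,\bE_b[\|Gb\|^2/r^2]$ by independence, lower-bounds $\bE_b[\|Gb\|^2/r^2]$ by restricting to the shell $\{|b_i| \in [\sigma, 2\sigma]\ \forall i\}$ (so that the mixture structure of $b$ matters), couples $\mu = 1/\sigma^2$, and lets $\sigma \to 0^+$ with a monotone-convergence argument. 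You instead use a pointwise Cauchy--Schwarz/AM--GM lower bound $\|v - u^\top c\|^2 \geq \tfrac{1}{2}c^2\|u\|^2 - \|v\|^2$, restrict $b$ to the event $\{W_1 = \cdots = W_n = 0\}$ of probability $3^{-n}$ where $b$ is purely Gaussian (so the mixture structure of $b$ is irrelevant and only the variance blow-up of $a_1$ is used), and keep $\sigma$ fixed while sending $\mu \to \infty$. Your route is slightly more lossy in constants (an extra factor of $\tfrac{1}{2}$ and $2L_d$ instead of $L_d$, both harmless), but it is arguably cleaner: it avoids the limit-and-MCT argument entirely and produces an explicit threshold $\mu^2 > 3\cdot 3^n(C + 2L_d)/c_\sigma$. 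The positivity $c_\sigma > 0$ is correctly justified ($G$ positive definite a.s.\ since $A$ has full row rank a.s.\ for a continuous product distribution), and your observation that the lower bound remains valid even if $L_{d+1} = +\infty$ is a nice sanity check. In short: correct, different mechanism (drive $\mu \to \infty$ at fixed $\sigma$, condition on the zero-mode event) versus the paper's (couple $\mu = 1/\sigma^2$, condition on the shell event, take $\sigma \to 0^+$).
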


The proof of \cref{thm:main-overparam} immediately follows from \cref{thm:descent-overparam2} and \cref{thm:ascent-overparam}.

\begin{proof}[Proof of \cref{thm:main-overparam}]

We construct the product distribution $\cD = \prod_{d=1}^D \cD_d$. We set $\cD_d = \cN(0,1)$ for $d = 1,\dots, n+8$. 
For $n+8<d \leq D$, $\cD_d$ is either $\cN(0,\sigma_d^2)$ or $\cN^\textnormal{mix}_{\sigma_d,\mu_d}$ depending on $\Delta_d$ being either $\downarrow$ or $\uparrow$. 

First we show that for each step $d$, the assumption $ \bE[\|(A^\T A)^+ x\|^2] >0$ of \cref{thm:descent-overparam2}  is satisfied. 
If $ \bE[\|(A^\T A)^+ x\|^2] =0$, we know that $(A^\T A)^+ x=0$ almost surely. 
Since $\cD$ is a continuous distribution, the matrix $A$ has full row rank almost surely. Therefore, $\rank((A^\T A)^{+}) = \rank(A^\T A) = n$ almost surely. Thus $\dim\ker (A^\T A)^+ = d - n \le d - 1$ almost surely, which implies $x\notin \ker (A^\T A)^+$. In other words, $(A^\T A)^+ x\ne 0$ almost surely. We reach a contradiction. 
Moreover, by \cref{lem:overparam-assumptions-hold}, the assumption $\bE \| (A^+)^\top x \|^2 < + \infty$ of \cref{thm:ascent-overparam} is also  satisfied.

If $\Delta_{d-1} = {\downarrow}$, by \cref{thm:descent-overparam2}, there exists $\sigma_{d} >0$ such that if $\cD_{d} = \cN(0,\sigma^2_{d})$, then $L_d < L_{d-1}$. Similarly if $\Delta_{d-1} = {\uparrow}$, by \cref{thm:ascent-overparam}, there exists $\sigma_{d}$ and $\mu_{d}$ such that $\cD_{d} = \cN^\textnormal{mix}_{\sigma_d,\mu_d}$ guarantees $L_d>L_{d-1}$.

\end{proof}

 \textbf{Gaussian $\beta$ setting.}  In what follows, we study the case where the model $\beta$ is non-zero. In particular, we consider a setting where each entry of $\beta$ is i.i.d. $ \cN(0,\rho^2)$.
Recalling \eqref{eq:total_loss}, define the biases \begin{align*}
    \Eps_d \triangleq (x^\T (A^+ A-I)\beta)^2 , \quad
    \Eps_{d+1} \triangleq  \left([x^\T,a_1]([A,b]^+[A,b]-I)\begin{bmatrix}
\beta\\
\beta_1
\end{bmatrix}\right)^2  \,, 
\end{align*}
and the expected risks \begin{equation}\label{eq:bayesian risk}
\begin{split}
    \lb_d \triangleq 
    \E[\Eps_d] + \eta^2 \bE\left\| (A^\T)^+ x\right\|^2 , \quad
    \lb_{d+1} \triangleq   \bE[\Eps_{d+1}]
+ \eta^2 \bE
\left\|  \begin{bmatrix}
A^\top \\
b^\top
\end{bmatrix}^+ \begin{bmatrix}
x\\
a_1
\end{bmatrix}\right\|^2 \,,
\end{split}
\end{equation}
where $\beta\sim \cN(0,\rho^2I_d)$ and $\beta_1\sim \cN(0,\rho^2)$. 
The second term in $\lb_d$ and $\lb_{d+1}$ is the variance term. Note that $\lb_d$ is the expected value of $L_d$ in \eqref{eq:total_loss} and averages over $\beta$. \cref{thm:bias} shows that one can add a Gaussian mixture feature in order to make $\lb_{d+1}>\lb_{d}$, and add a Gaussian feature in order to make $\lb_{d+1}<\lb_{d}$.

\begin{theorem}[Proof in \cref{sec:bias}]\label{thm:bias}
Let $a_1,\beta_1\in \bR$, $x\in \bR^{d\times 1}$,  $\beta\in \bR^{d\times 1}$, $A\in \bR^{n\times d}$ and $b\in \bR^{n\times 1}$, where $n\le d$.  Assume that $x,a_1,\beta_1,\beta,A,b$ are jointly independent, $[\beta^\T,\beta_1]^\T\sim \cN(0,\rho^2 I_{d+1})$. Moreover, assume that the matrix $[A,b]$ has linearly independent rows almost surely. The following statements hold:
\begin{enumerate}[label=(\alph*)]
\item If $a_1,b_1,\dots,b_n\stackrel{iid}{\sim} \dmixN$, for any $C>0$, there exist $\mu,\sigma$ such that $\lb_{d+1}-\lb_d>C$. 
\item If  $a_1,b_1,\dots,b_n\stackrel{iid}{\sim} \cN(0,\sigma^2)$,  
there exists $\sigma>0$ such that for all  \[\rho \le \eta\sqrt{ \frac{ \bE[\|(A^\T A)^+ x\|^2]}{ \E\|A^{+\T}x\|^2 + 1}} \,,\] we have $\lb_{d+1}<\lb_d$. 
\end{enumerate}
\end{theorem}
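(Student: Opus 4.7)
I decompose $\lb_{d+1}-\lb_d$ as a bias difference plus $\eta^2$ times a variance difference; the variance difference is exactly what \cref{thm:ascent-overparam} and \cref{thm:descent-overparam2} already control, so the genuinely new ingredient is handling the bias introduced by the Gaussian prior on $\beta$. Because $[A,b]$ has full row rank a.s., both $A^+A$ and $B^+B$ (with $B\triangleq[A,b]$) are symmetric orthogonal projectors, so integrating out $\beta,\beta_1\sim \cN(0,\rho^2 I)$ first gives $\E[\Eps_d\mid A,x]=\rho^2(\|x\|^2-x^\T A^+Ax)$ and $\E[\Eps_{d+1}\mid B,\tilde y]=\rho^2(\|\tilde y\|^2-\tilde y^\T B^+B\tilde y)$ with $\tilde y\triangleq[x^\T,a_1]^\T$. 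Writing $\tilde y^\T B^+B\tilde y=(Ax+a_1 b)^\T (BB^\T)^{-1}(Ax+a_1 b)$, expanding $(BB^\T)^{-1}=(AA^\T+bb^\T)^{-1}$ by Sherman--Morrison with $G\triangleq(AA^\T)^{-1}$, and simplifying collapses to the clean identity
\[
\E[\Eps_{d+1}\mid A,b,x,a_1]-\E[\Eps_d\mid A,x]=\rho^2\,\frac{(a_1-\tau)^2}{1+\alpha},\qquad \tau\triangleq b^\T GAx,\ \alpha\triangleq b^\T Gb.
\]
In particular the bias is always non-decreasing in $d$, and when $a_1,b_1,\dots,b_n\stackrel{iid}{\sim}\cN(0,\sigma^2)$ the bound $1/(1+\alpha)\le 1$, together with $\tau=b^\T r$ (where $r\triangleq(A^+)^\T x$) and $\E_b[(b^\T r)^2\mid A,x]=\sigma^2\|r\|^2$, yields $\E[\Eps_{d+1}]-\E[\Eps_d]\le \rho^2\sigma^2(1+\E\|A^{+\T}x\|^2)$.

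\textbf{Parts (a) and (b).} Since the bias difference is non-negative, part (a) is immediate: apply \cref{thm:ascent-overparam} with target $C/\eta^2$ to obtain $\mu,\sigma$ pushing the variance difference above $C/\eta^2$. For part (b), \cref{lem:pseudo-inverse-overparam} supplies the compact expansion $\begin{bmatrix}A^\T\\b^\T\end{bmatrix}^+\begin{bmatrix}x\\a_1\end{bmatrix}=r+(a_1-\tau)v$ with $v\triangleq Gb/(1+\alpha)$, so
\[
\|B^{+\T}\tilde y\|^2-\|r\|^2=\frac{2(a_1-\tau)\,r^\T Gb}{1+\alpha}+\frac{(a_1-\tau)^2\,b^\T G^2 b}{(1+\alpha)^2}.
\]
Substituting $b=\sigma\tilde b$ with $\tilde b\sim\cN(0,I_n)$, Taylor-expanding $1/(1+\sigma^2 \tilde b^\T G\tilde b)=1-\sigma^2\tilde b^\T G\tilde b+\cdots$, using $\E[a_1]=0$ to kill the $a_1 r^\T Gb$ piece, applying the Gaussian moment identity $\E[(\tilde b^\T p)(\tilde b^\T q)]=p^\T q$, and invoking the SVD calculation $r^\T Gr=x^\T A^\T G^3 Ax=\|(A^\T A)^+x\|^2$, one finds the variance difference equals $-2\sigma^2\E\|(A^\T A)^+x\|^2+O(\sigma^4)$. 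Combining with the bias upper bound, whenever $\rho^2(1+\E\|A^{+\T}x\|^2)\le \eta^2\E\|(A^\T A)^+x\|^2$,
\[
\lb_{d+1}-\lb_d\le \sigma^2\eta^2\E\|(A^\T A)^+x\|^2-2\sigma^2\eta^2\E\|(A^\T A)^+x\|^2+O(\sigma^4)=-\sigma^2\eta^2\E\|(A^\T A)^+x\|^2+O(\sigma^4),
\]
which is strictly negative for all sufficiently small $\sigma>0$. Because the $O(\sigma^4)$ remainder depends only on the joint law of $(A,x)$ and not on $\rho$, the same $\sigma$ works uniformly over the entire admissible range of $\rho$.

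\textbf{Main obstacle.} The algebraic derivations above are essentially mechanical; the one genuinely delicate step is promoting the formal Taylor expansion in part (b) to a rigorous $O(\sigma^4)$ remainder, i.e.\ interchanging $\lim_{\sigma\to 0}$ with the expectation over $(A,x)$. The natural dominating random variables are polynomial expressions in $\lambda_{\max}((AA^\T)^{-1})$ and $\|(A^+)^\T x\|^2$, whose integrability (together with that of their products and squares) is exactly the content of \eqref{eq:finite-expectations} in \cref{lem:overparam-assumptions-hold}. This dominated-convergence argument closely parallels the analogous step inside the proof of \cref{thm:descent-overparam2} and is the only place where the overparametrized moment conditions enter essentially.
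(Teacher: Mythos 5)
Your proof is correct and follows the same overall architecture as the paper's — split $\lb_{d+1}-\lb_d$ into a bias piece and $\eta^2$ times a variance piece, show the bias piece is always nonnegative and invoke \cref{thm:ascent-overparam} for part (a), and combine a small-$\sigma$ asymptotic for the bias with the $-2\sigma^2\bE\|(A^\T A)^+x\|^2$ asymptotic from the proof of \cref{thm:descent-overparam2} for part (b). Where you genuinely improve on the paper's algebra is in integrating out $(\beta,\beta_1)$ first and using Sherman--Morrison to get the one-line identity $\E[\Eps_{d+1}\mid A,b,x,a_1]-\E[\Eps_d\mid A,x]=\rho^2(a_1-\tau)^2/(1+\alpha)$; the paper instead expands $\Eps_{d+1}$ term by term and arrives at $\rho^2\bE\bigl[\tfrac{1}{r^2}(\|w\|^2+1)((x^\T w)^2+\bE[a_1^2])\bigr]$, which collapses to your expression once one notices $\|w\|^2=\|A^+b\|^2=b^\T G b=\alpha$ and $x^\T w=\tau$, so $r=1+\alpha$ and $(\|w\|^2+1)/r^2=1/(1+\alpha)$. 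In part (b) your bound $1/(1+\alpha)\le 1$, giving the explicit inequality $\E[\Eps_{d+1}]-\E[\Eps_d]\le\rho^2\sigma^2(1+\bE\|A^{+\T}x\|^2)$, is a bit sharper and more transparent than the paper's $\sim$-asymptotic for the same quantity; in particular it makes immediately clear that the threshold $\sigma$ can be chosen uniformly over the admissible range of $\rho$, a point the paper glosses over. The justification for the Taylor expansion of the variance by dominated convergence using \eqref{eq:finite-expectations} is exactly the argument inside the paper's proof of \cref{thm:descent-overparam2}, so re-deriving it is harmless though not strictly necessary — citing the asymptotic $L_{d+1}-L_d\sim -2\sigma^2\bE\|(A^\T A)^+x\|^2$ established there (as the paper does) would suffice.
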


\cref{thm:bias} indicates that for $\beta$ obeying a normal distribution, one can still construct a generalization curve as desired by adding a Gaussian or Gaussian mixture feature properly. We make this construction explicit for any desired generalization curve  in (the proof of) \cref{thm:main-overparam2}. Similar to the construction in the underparametrized regime (for all $\beta$) and overparametrization regime (for $\beta=0$), the distribution $\cD$ can be made a product distribution. 

\begin{theorem}[Overparametrized regime, $\beta$ being Gaussian]\label{thm:main-overparam2}
Let $n < D-9$. Given any sequence $\Delta_{n+8},\Delta_{n+9},\dots$, $\Delta_{D-1}$ where $\Delta_d\in \{{\uparrow}, {\downarrow}\}$, there exists $\rho>0$ and a distribution $\cD$ such that for $\beta\sim \cN(0,\rho^2)$ and every $n+8 \le d\le D-1$, we have \[
\lb_{d+1} \begin{cases}
> \lb_{d}, \quad\text{if } \Delta_d = {\uparrow}\\
< \lb_{d}, \quad\text{if } \Delta_d = {\downarrow}\,.
\end{cases}
\] 
\end{theorem}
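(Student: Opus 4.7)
The strategy closely parallels the proof of \cref{thm:main-overparam}, with the added wrinkle that a single Gaussian prior variance $\rho^2$ must simultaneously certify every prescribed descent. I will take $\cD$ to be a product distribution $\cD = \prod_{d=1}^D \cD_d$ with $\cD_1 = \cdots = \cD_{n+8} = \cN(0,1)$, so that the finite-expectation conclusions of \cref{lem:overparam-assumptions-hold} apply from $d = n+8$ onward, and then define $\cD_{n+9},\ldots,\cD_D$ by induction on $d$ running from $n+8$ up to $D-1$. At step $d$, the joint law of $(A,x)$ at dimension $d$ is already fixed by earlier choices. If $\Delta_d = {\uparrow}$, I invoke \cref{thm:bias}(a) to pick $\mu_{d+1},\sigma_{d+1}$ and set $\cD_{d+1} = \dmi_{\sigma_{d+1},\mu_{d+1}}$. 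If $\Delta_d = {\downarrow}$, I invoke \cref{thm:bias}(b) to pick $\sigma_{d+1}$ and set $\cD_{d+1} = \cN(0,\sigma_{d+1}^2)$, recording the threshold
\[
\rho_d^\star \triangleq \eta\sqrt{\frac{\bE[\|(A^\T A)^+ x\|^2]}{\bE\|(A^+)^\T x\|^2 + 1}}
\]
with expectations taken under the distribution assembled through step $d$.

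Once the iteration is complete, I set $\rho \triangleq \min\{\rho_d^\star : \Delta_d = {\downarrow}\}$ (or any positive number when no down step is prescribed). The minimum is over a finite set so is attained, and since $\rho \le \rho_d^\star$ at every down step, \cref{thm:bias}(b) guarantees $\lb_{d+1} < \lb_d$ there. The side conditions of \cref{thm:bias}---joint independence of $x, a_1, \beta_1, \beta, A, b$ and almost sure linear independence of the rows of $[A,b]$---follow from the product structure of $\cD$, the independence of $\beta$ from the data, continuity of $\cD$, and $d + 1 > n$.

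The main technical obstacle I anticipate is that \cref{thm:bias}(a) chooses $\mu_{d+1},\sigma_{d+1}$ for a specific $\rho$, whereas $\rho$ is only settled at the end of the iteration. To handle this cleanly, I will actually invoke the $\rho$-independent \cref{thm:ascent-overparam} at each up step to pick $\mu_{d+1},\sigma_{d+1}$ so that the variance-term gap exceeds a chosen constant $M_d > 0$. The bias contribution $\bE[\Eps_{d+1} - \Eps_d]$ scales quadratically in $\rho$---because $[\beta^\T,\beta_1]^\T\sim\cN(0,\rho^2 I_{d+1})$ and $\Eps_{d+1}$ is a quadratic form in $(\beta,\beta_1)$---so it is bounded in magnitude by $\rho^2 K_d$ for some constant $K_d$ determined by the distribution through step $d+1$. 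Imposing $\rho^2 K_d < M_d$ at every up step then yields $\lb_{d+1} - \lb_d > 0$ there, and combining with the down-step constraints gives $\rho \le \min\{\min_d \rho_d^\star, \min_d \sqrt{M_d/K_d}\}$.

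Finally, I need to check that this $\rho$ is strictly positive. Each $\rho_d^\star > 0$ because its denominator is finite by \cref{lem:overparam-assumptions-hold} and its numerator is positive by the argument from the proof of \cref{thm:main-overparam}: since $\cD$ is continuous, $A$ has full row rank almost surely, $\rank((A^\T A)^+) = n$, $\dim \ker (A^\T A)^+ = d - n < d$, and the event $\{x \in \ker(A^\T A)^+\}$ has probability zero, so $\bE[\|(A^\T A)^+ x\|^2] > 0$. The up-step bounds $\sqrt{M_d/K_d}$ are positive by construction. Taking the minimum over the finitely many constraints therefore yields a positive $\rho$ together with a product distribution $\cD$ realizing the prescribed generalization curve.
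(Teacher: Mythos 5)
Your proposal is correct and follows the same overall plan as the paper: build the product distribution inductively, keep the first $n+8$ coordinates $\cN(0,1)$, pick $\cN(0,\sigma_d^2)$ or $\dmi_{\sigma_d,\mu_d}$ at each step according to $\Delta$, record the thresholds $\rho_d^\star$ at each down step, and finally set $\rho$ to the minimum over those finitely many (positive) thresholds. Your positivity check for $\rho_d^\star$ (full row rank almost surely, hence $\bE[\|(A^\T A)^+ x\|^2] > 0$) and your verification of the hypotheses of \cref{thm:bias} via \cref{lem:overparam-assumptions-hold} match the paper.

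One remark on the ``main technical obstacle'' you anticipate: it is not actually an obstacle, and the paper's proof does not need to address it. Looking inside the proof of \cref{thm:bias}(a), the bias increment for a Gaussian mixture feature is computed to be
\[
\E[\Eps_{d+1}] - \E[\Eps_d] = \rho^2\,\E\!\left[\frac{1}{r^2}(\|w\|^2+1)\bigl((x^\T w)^2 + \bE[a_1^2]\bigr)\right] \ge 0
\]
for \emph{every} $\rho$, while the variance increment from \cref{thm:ascent-overparam} is manifestly $\rho$-free. Hence the $\mu_d,\sigma_d$ chosen at an up step work for all $\rho$, and no extra constraint of the form $\rho^2 K_d < M_d$ is needed; the only constraints binding $\rho$ come from the down steps. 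Your workaround is harmless and self-consistent, but it bounds the up-step bias increment ``in magnitude'' as if it could be negative, which it cannot. You can safely drop the $M_d$, $K_d$ bookkeeping and invoke \cref{thm:bias}(a) directly at the up steps, exactly as the paper does.
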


\begin{proof}[Proof of \cref{thm:main-overparam2}]
Define the design matrix $A_d\triangleq [x_1[1:d],\dots,x_n[1:d]]^\top\in \bR^{n\times d}$. 
Similar to the proof of \cref{thm:main-overparam}, we construct the product distribution $\cD = \prod_{d=1}^D \cD_d$. We set $\cD_d = \cN(0,1)$ for $d = 1,\dots, n+8$. 
For $n+8<d \leq D$, $\cD_d$ is either $\cN(0,\sigma_d^2)$ or $\cN^\textnormal{mix}_{\sigma_d,\mu_d}$ depending on $\Delta_d$ being either $\downarrow$ or $\uparrow$. 

If $\Delta_{d-1} = {\uparrow}$, by \cref{thm:bias}, there exists $\sigma_{d}$ and $\mu_{d}$ such that $\cD_{d} = \cN^\textnormal{mix}_{\sigma_d,\mu_d}$ guarantees $\lb_d>\lb_{d-1}$.
If $\Delta_{d-1} = {\downarrow}$, define \[\rho_d\triangleq  \eta \sqrt{\frac{ \bE[\|(A_{d-1}^\T A_{d-1})^+ \xte[1:d-1]\|^2]}{ \E\|A_{d-1}^{+\T}\xte[1:d-1]\|^2 + 1  }}\,.\]
By \cref{thm:bias}, there exists $\sigma_{d} >0$ such that if $\rho\le \rho_d$ and $\cD_{d} = \cN(0,\sigma^2_{d})$, then $\lb_d < \lb_{d-1}$. We take \[\rho = \min_{d:\Delta_{d-1} = {\downarrow}} \rho_d\,.\]
\end{proof}

\section{Conclusion}

Our work proves that the expected risk of linear regression can manifest  multiple descents when the number of features increases and sample size is fixed. This is carried out through an algorithmic construction of a feature-revealing process where the newly revealed feature follows either a Gaussian distribution or a Gaussian mixture distribution. Notably, the construction also enables us to control local maxima in the underparametrized regime and control ascents/descents freely in the overparametrized regime. Overall, this allows us to design the generalization curve away from the interpolation threshold.

We believe that our analysis of linear regression in this paper is a good starting point for explaining non-monotonic generalization curves observed in machine learning studies. Extending these results to more complex problem setups would be a meaningful future direction.

\section*{Funding Transparency Statement}
LC: Funding in direct support of this work: postdoctoral research fellowship by the Simons Institute for the Theory of Computing, University of California, Berkeley, and Google PhD Fellowship by Google. Additional revenues related to this work: internships at Google.

MB acknowledges support from NSF IIS-1815697, and the support of the NSF and the Simons Foundation for the Collaboration on the Theoretical Foundations of Deep Learning through awards DMS-2031883 and \#814639.

AK: Funding in direct support of this work: NSF (IIS-1845032) and ONR (N00014-19-1-2406). 

\bibliographystyle{abbrvnat}
\bibliography{reference-list}

\begin{thebibliography}{62}
\providecommand{\natexlab}[1]{#1}
\providecommand{\url}[1]{\texttt{#1}}
\expandafter\ifx\csname urlstyle\endcsname\relax
  \providecommand{\doi}[1]{doi: #1}\else
  \providecommand{\doi}{doi: \begingroup \urlstyle{rm}\Url}\fi

\bibitem[Advani and Saxe(2017)]{advani2017high}
M.~S. Advani and A.~M. Saxe.
\newblock High-dimensional dynamics of generalization error in neural networks.
\newblock \emph{arXiv preprint arXiv:1710.03667}, 2017.

\bibitem[Advani et~al.(2020)Advani, Saxe, and Sompolinsky]{advani2020high}
M.~S. Advani, A.~M. Saxe, and H.~Sompolinsky.
\newblock High-dimensional dynamics of generalization error in neural networks.
\newblock \emph{Neural Networks}, 132:\penalty0 428--446, 2020.

\bibitem[Allen-Zhu et~al.(2019)Allen-Zhu, Li, and Song]{allen2019convergence}
Z.~Allen-Zhu, Y.~Li, and Z.~Song.
\newblock A convergence theory for deep learning via over-parameterization.
\newblock In \emph{International Conference on Machine Learning}, pages
  242--252, 2019.

\bibitem[Arora et~al.(2019{\natexlab{a}})Arora, Du, Hu, Li, Salakhutdinov, and
  Wang]{arora2019exact}
S.~Arora, S.~S. Du, W.~Hu, Z.~Li, R.~R. Salakhutdinov, and R.~Wang.
\newblock On exact computation with an infinitely wide neural net.
\newblock In \emph{Advances in Neural Information Processing Systems}, pages
  8141--8150, 2019{\natexlab{a}}.

\bibitem[Arora et~al.(2019{\natexlab{b}})Arora, Du, Hu, Li, and
  Wang]{arora2019fine}
S.~Arora, S.~S. Du, W.~Hu, Z.~Li, and R.~Wang.
\newblock Fine-grained analysis of optimization and generalization for
  overparameterized two-layer neural networks.
\newblock In \emph{ICML}, pages 477--502, 2019{\natexlab{b}}.

\bibitem[Arora et~al.(2019{\natexlab{c}})Arora, Du, Li, Salakhutdinov, Wang,
  and Yu]{arora2019harnessing}
S.~Arora, S.~S. Du, Z.~Li, R.~Salakhutdinov, R.~Wang, and D.~Yu.
\newblock Harnessing the power of infinitely wide deep nets on small-data
  tasks.
\newblock In \emph{International Conference on Learning Representations},
  2019{\natexlab{c}}.

\bibitem[Baksalary and Baksalary(2007)]{baksalary2007particular}
J.~K. Baksalary and O.~M. Baksalary.
\newblock Particular formulae for the moore--penrose inverse of a columnwise
  partitioned matrix.
\newblock \emph{Linear algebra and its applications}, 421\penalty0
  (1):\penalty0 16--23, 2007.

\bibitem[Bartlett et~al.(2020)Bartlett, Long, Lugosi, and
  Tsigler]{bartlett2020benign}
P.~L. Bartlett, P.~M. Long, G.~Lugosi, and A.~Tsigler.
\newblock Benign overfitting in linear regression.
\newblock \emph{Proceedings of the National Academy of Sciences}, 2020.

\bibitem[Belkin et~al.(2018{\natexlab{a}})Belkin, Hsu, Ma, and
  Mandal]{belkin2018reconciling}
M.~Belkin, D.~Hsu, S.~Ma, and S.~Mandal.
\newblock Reconciling modern machine learning and the bias-variance trade-off.
\newblock \emph{stat}, 1050:\penalty0 28, 2018{\natexlab{a}}.

\bibitem[Belkin et~al.(2018{\natexlab{b}})Belkin, Ma, and
  Mandal]{belkin2018understand}
M.~Belkin, S.~Ma, and S.~Mandal.
\newblock To understand deep learning we need to understand kernel learning.
\newblock In \emph{International Conference on Machine Learning}, pages
  541--549, 2018{\natexlab{b}}.

\bibitem[Belkin et~al.(2019{\natexlab{a}})Belkin, Hsu, and Xu]{belkin2019two}
M.~Belkin, D.~Hsu, and J.~Xu.
\newblock Two models of double descent for weak features.
\newblock \emph{arXiv preprint arXiv:1903.07571}, 2019{\natexlab{a}}.

\bibitem[Belkin et~al.(2019{\natexlab{b}})Belkin, Rakhlin, and
  Tsybakov]{belkin2019does}
M.~Belkin, A.~Rakhlin, and A.~B. Tsybakov.
\newblock Does data interpolation contradict statistical optimality?
\newblock In \emph{The 22nd International Conference on Artificial Intelligence
  and Statistics}, pages 1611--1619, 2019{\natexlab{b}}.

\bibitem[Bengio et~al.(2003)Bengio, Ducharme, Vincent, and
  Jauvin]{bengio2003neural}
Y.~Bengio, R.~Ducharme, P.~Vincent, and C.~Jauvin.
\newblock A neural probabilistic language model.
\newblock \emph{Journal of machine learning research}, 3\penalty0
  (Feb):\penalty0 1137--1155, 2003.

\bibitem[Cao and Gu(2019)]{cao2019generalization}
Y.~Cao and Q.~Gu.
\newblock Generalization bounds of stochastic gradient descent for wide and
  deep neural networks.
\newblock In \emph{Advances in Neural Information Processing Systems}, pages
  10836--10846, 2019.

\bibitem[Caponnetto and De~Vito(2007)]{caponnetto2007optimal}
A.~Caponnetto and E.~De~Vito.
\newblock Optimal rates for the regularized least-squares algorithm.
\newblock \emph{Foundations of Computational Mathematics}, 7\penalty0
  (3):\penalty0 331--368, 2007.

\bibitem[Caron and Chretien(2020)]{caron2020finite}
E.~Caron and S.~Chretien.
\newblock A finite sample analysis of the double descent phenomenon for ridge
  function estimation.
\newblock \emph{arXiv preprint arXiv:2007.12882}, 2020.

\bibitem[Caron et~al.(2018)Caron, Bojanowski, Joulin, and Douze]{caron2018deep}
M.~Caron, P.~Bojanowski, A.~Joulin, and M.~Douze.
\newblock Deep clustering for unsupervised learning of visual features.
\newblock In \emph{Proceedings of the European Conference on Computer Vision
  (ECCV)}, pages 132--149, 2018.

\bibitem[Chen and Xu(2021)]{chen2020deep}
L.~Chen and S.~Xu.
\newblock Deep neural tangent kernel and laplace kernel have the same rkhs.
\newblock In \emph{ICLR}, 2021.

\bibitem[Chen et~al.(2020)Chen, Min, Zhang, and Karbasi]{chen2020more}
L.~Chen, Y.~Min, M.~Zhang, and A.~Karbasi.
\newblock More data can expand the generalization gap between adversarially
  robust and standard models.
\newblock In \emph{International Conference on Machine Learning}, pages
  1670--1680. PMLR, 2020.

\bibitem[Dar et~al.(2020)Dar, Mayer, Luzi, and Baraniuk]{dar2020subspace}
Y.~Dar, P.~Mayer, L.~Luzi, and R.~G. Baraniuk.
\newblock Subspace fitting meets regression: The effects of supervision and
  orthonormality constraints on double descent of generalization errors.
\newblock In \emph{ICML}, 2020.

\bibitem[d'Ascoli et~al.(2020)d'Ascoli, Sagun, and Biroli]{d2020triple}
S.~d'Ascoli, L.~Sagun, and G.~Biroli.
\newblock Triple descent and the two kinds of overfitting: Where \& why do they
  appear?
\newblock \emph{arXiv preprint arXiv:2006.03509}, 2020.

\bibitem[De~Vito et~al.(2005)De~Vito, Caponnetto, and Rosasco]{de2005model}
E.~De~Vito, A.~Caponnetto, and L.~Rosasco.
\newblock Model selection for regularized least-squares algorithm in learning
  theory.
\newblock \emph{Foundations of Computational Mathematics}, 5\penalty0
  (1):\penalty0 59--85, 2005.

\bibitem[Du et~al.(2019)Du, Lee, Li, Wang, and Zhai]{du2019gradient}
S.~Du, J.~Lee, H.~Li, L.~Wang, and X.~Zhai.
\newblock Gradient descent finds global minima of deep neural networks.
\newblock In \emph{International Conference on Machine Learning}, pages
  1675--1685, 2019.

\bibitem[Fei and Chen(2018{\natexlab{a}})]{fei2018exponential}
Y.~Fei and Y.~Chen.
\newblock Exponential error rates of sdp for block models: Beyond
  grothendieck’s inequality.
\newblock \emph{IEEE Transactions on Information Theory}, 65\penalty0
  (1):\penalty0 551--571, 2018{\natexlab{a}}.

\bibitem[Fei and Chen(2018{\natexlab{b}})]{fei2018hidden}
Y.~Fei and Y.~Chen.
\newblock Hidden integrality of sdp relaxations for sub-gaussian mixture
  models.
\newblock In \emph{Conference On Learning Theory}, pages 1931--1965. PMLR,
  2018{\natexlab{b}}.

\bibitem[Fei and Chen(2019)]{fei2019achieving}
Y.~Fei and Y.~Chen.
\newblock Achieving the bayes error rate in stochastic block model by sdp,
  robustly.
\newblock In \emph{Conference on Learning Theory}, pages 1235--1269. PMLR,
  2019.

\bibitem[Fei and Chen(2020)]{fei2020achieving}
Y.~Fei and Y.~Chen.
\newblock Achieving the bayes error rate in synchronization and block models by
  sdp, robustly.
\newblock \emph{IEEE Transactions on Information Theory}, 66\penalty0
  (6):\penalty0 3929--3953, 2020.

\bibitem[Fei et~al.(2020)Fei, Yang, Chen, Wang, and Xie]{fei2020risk}
Y.~Fei, Z.~Yang, Y.~Chen, Z.~Wang, and Q.~Xie.
\newblock Risk-sensitive reinforcement learning: Near-optimal risk-sample
  tradeoff in regret.
\newblock \emph{arXiv preprint arXiv:2006.13827}, 2020.

\bibitem[Geiger et~al.(2019)Geiger, Spigler, d'Ascoli, Sagun, Baity-Jesi,
  Biroli, and Wyart]{geiger2019jamming}
M.~Geiger, S.~Spigler, S.~d'Ascoli, L.~Sagun, M.~Baity-Jesi, G.~Biroli, and
  M.~Wyart.
\newblock Jamming transition as a paradigm to understand the loss landscape of
  deep neural networks.
\newblock \emph{Physical Review E}, 100\penalty0 (1):\penalty0 012115, 2019.

\bibitem[Geiger et~al.(2020)Geiger, Jacot, Spigler, Gabriel, Sagun, d’Ascoli,
  Biroli, Hongler, and Wyart]{geiger2020scaling}
M.~Geiger, A.~Jacot, S.~Spigler, F.~Gabriel, L.~Sagun, S.~d’Ascoli,
  G.~Biroli, C.~Hongler, and M.~Wyart.
\newblock Scaling description of generalization with number of parameters in
  deep learning.
\newblock \emph{Journal of Statistical Mechanics: Theory and Experiment},
  2020\penalty0 (2):\penalty0 023401, 2020.

\bibitem[Geman et~al.(1992)Geman, Bienenstock, and Doursat]{geman1992neural}
S.~Geman, E.~Bienenstock, and R.~Doursat.
\newblock Neural networks and the bias/variance dilemma.
\newblock \emph{Neural computation}, 4\penalty0 (1):\penalty0 1--58, 1992.

\bibitem[Ghorbani et~al.(2019)Ghorbani, Mei, Misiakiewicz, and
  Montanari]{ghorbani2019linearized}
B.~Ghorbani, S.~Mei, T.~Misiakiewicz, and A.~Montanari.
\newblock Linearized two-layers neural networks in high dimension.
\newblock \emph{arXiv preprint arXiv:1904.12191}, 2019.

\bibitem[Hastie et~al.(2009)Hastie, Tibshirani, and
  Friedman]{hastie2009elements}
T.~Hastie, R.~Tibshirani, and J.~Friedman.
\newblock \emph{The elements of statistical learning: data mining, inference,
  and prediction}.
\newblock Springer Science \& Business Media, 2009.

\bibitem[Hastie et~al.(2019)Hastie, Montanari, Rosset, and
  Tibshirani]{hastie2019surprises}
T.~Hastie, A.~Montanari, S.~Rosset, and R.~J. Tibshirani.
\newblock Surprises in high-dimensional ridgeless least squares interpolation.
\newblock \emph{arXiv preprint arXiv:1903.08560}, 2019.

\bibitem[He et~al.(2016)He, Zhang, Ren, and Sun]{he2016deep}
K.~He, X.~Zhang, S.~Ren, and J.~Sun.
\newblock Deep residual learning for image recognition.
\newblock In \emph{Proceedings of the IEEE conference on computer vision and
  pattern recognition}, pages 770--778, 2016.

\bibitem[Huang et~al.(2019)Huang, Cheng, Bapna, Firat, Chen, Chen, Lee, Ngiam,
  Le, Wu, et~al.]{huang2019gpipe}
Y.~Huang, Y.~Cheng, A.~Bapna, O.~Firat, D.~Chen, M.~Chen, H.~Lee, J.~Ngiam,
  Q.~V. Le, Y.~Wu, et~al.
\newblock Gpipe: Efficient training of giant neural networks using pipeline
  parallelism.
\newblock In \emph{Advances in neural information processing systems}, pages
  103--112, 2019.

\bibitem[Javanmard et~al.(2020)Javanmard, Soltanolkotabi, and
  Hassani]{javanmard2020precise}
A.~Javanmard, M.~Soltanolkotabi, and H.~Hassani.
\newblock Precise tradeoffs in adversarial training for linear regression.
\newblock In \emph{Conference on Learning Theory}, 2020.

\bibitem[Krizhevsky et~al.(2012)Krizhevsky, Sutskever, and
  Hinton]{krizhevsky2012imagenet}
A.~Krizhevsky, I.~Sutskever, and G.~E. Hinton.
\newblock Imagenet classification with deep convolutional neural networks.
\newblock In \emph{Advances in neural information processing systems}, pages
  1097--1105, 2012.

\bibitem[Li and Wei(2021)]{li2021minimum}
Y.~Li and Y.~Wei.
\newblock Minimum $\ell_1$-norm interpolators: Precise asymptotics and multiple
  descent.
\newblock \emph{arXiv preprint arXiv:2110.09502}, 2021.

\bibitem[Liang and Rakhlin(2019)]{liang2018just}
T.~Liang and A.~Rakhlin.
\newblock Just interpolate: Kernel ``ridgeles'' regression can generalize.
\newblock \emph{Annals of Statistics}, page to appear, 2019.

\bibitem[Liang et~al.(2020)Liang, Rakhlin, and Zhai]{liang2019multiple}
T.~Liang, A.~Rakhlin, and X.~Zhai.
\newblock On the multiple descent of minimum-norm interpolants and restricted
  lower isometry of kernels.
\newblock In \emph{COLT}, 2020.

\bibitem[Liu et~al.(2021)Liu, Liao, and Suykens]{liu2021kernel}
F.~Liu, Z.~Liao, and J.~Suykens.
\newblock Kernel regression in high dimensions: Refined analysis beyond double
  descent.
\newblock In \emph{International Conference on Artificial Intelligence and
  Statistics}, pages 649--657. PMLR, 2021.

\bibitem[Loog et~al.(2019)Loog, Viering, and Mey]{loog2019minimizers}
M.~Loog, T.~Viering, and A.~Mey.
\newblock Minimizers of the empirical risk and risk monotonicity.
\newblock In \emph{Advances in Neural Information Processing Systems}, pages
  7478--7487, 2019.

\bibitem[Mei and Montanari(2019)]{mei2019generalization}
S.~Mei and A.~Montanari.
\newblock The generalization error of random features regression: Precise
  asymptotics and double descent curve.
\newblock \emph{arXiv preprint arXiv:1908.05355}, 2019.

\bibitem[Min et~al.(2020)Min, Chen, and Karbasi]{min2020curious}
Y.~Min, L.~Chen, and A.~Karbasi.
\newblock The curious case of adversarially robust models: More data can help,
  double descend, or hurt generalization.
\newblock \emph{arXiv preprint arXiv:2002.11080}, 2020.

\bibitem[Nakkiran et~al.(2019)Nakkiran, Kaplun, Bansal, Yang, Barak, and
  Sutskever]{nakkiran2019deep}
P.~Nakkiran, G.~Kaplun, Y.~Bansal, T.~Yang, B.~Barak, and I.~Sutskever.
\newblock Deep double descent: Where bigger models and more data hurt.
\newblock \emph{arXiv preprint arXiv:1912.02292}, 2019.

\bibitem[Nakkiran et~al.(2020)Nakkiran, Venkat, Kakade, and
  Ma]{nakkiran2020optimal}
P.~Nakkiran, P.~Venkat, S.~Kakade, and T.~Ma.
\newblock Optimal regularization can mitigate double descent.
\newblock \emph{arXiv preprint arXiv:2003.01897}, 2020.

\bibitem[Neal et~al.(2018)Neal, Mittal, Baratin, Tantia, Scicluna,
  Lacoste-Julien, and Mitliagkas]{neal2018modern}
B.~Neal, S.~Mittal, A.~Baratin, V.~Tantia, M.~Scicluna, S.~Lacoste-Julien, and
  I.~Mitliagkas.
\newblock A modern take on the bias-variance tradeoff in neural networks.
\newblock \emph{arXiv preprint arXiv:1810.08591}, 2018.

\bibitem[Neyshabur et~al.(2015)Neyshabur, Tomioka, and
  Srebro]{neyshabur2014search}
B.~Neyshabur, R.~Tomioka, and N.~Srebro.
\newblock In search of the real inductive bias: On the role of implicit
  regularization in deep learning.
\newblock In \emph{ICLR (Workshop)}, 2015.

\bibitem[Rahimi and Recht(2008)]{rahimi2008random}
A.~Rahimi and B.~Recht.
\newblock Random features for large-scale kernel machines.
\newblock In \emph{Advances in neural information processing systems}, pages
  1177--1184, 2008.

\bibitem[Rakhlin and Zhai(2019)]{rakhlin2019consistency}
A.~Rakhlin and X.~Zhai.
\newblock Consistency of interpolation with laplace kernels is a
  high-dimensional phenomenon.
\newblock In \emph{Conference on Learning Theory}, pages 2595--2623, 2019.

\bibitem[Richards et~al.(2020)Richards, Mourtada, and
  Rosasco]{richards2020asymptotics}
D.~Richards, J.~Mourtada, and L.~Rosasco.
\newblock Asymptotics of ridge (less) regression under general source
  condition.
\newblock \emph{arXiv preprint arXiv:2006.06386}, 2020.

\bibitem[Rudi and Rosasco(2017)]{rudi2017generalization}
A.~Rudi and L.~Rosasco.
\newblock Generalization properties of learning with random features.
\newblock In \emph{Advances in Neural Information Processing Systems}, pages
  3215--3225, 2017.

\bibitem[Song et~al.(2021)Song, Xu, and Lafferty]{song2021convergence}
G.~Song, R.~Xu, and J.~Lafferty.
\newblock Convergence and alignment of gradient descent with random back
  propagation weights.
\newblock \emph{arXiv preprint arXiv:2106.06044}, 2021.

\bibitem[Szegedy et~al.(2015)Szegedy, Liu, Jia, Sermanet, Reed, Anguelov,
  Erhan, Vanhoucke, and Rabinovich]{szegedy2015going}
C.~Szegedy, W.~Liu, Y.~Jia, P.~Sermanet, S.~Reed, D.~Anguelov, D.~Erhan,
  V.~Vanhoucke, and A.~Rabinovich.
\newblock Going deeper with convolutions.
\newblock In \emph{Proceedings of the IEEE conference on computer vision and
  pattern recognition}, pages 1--9, 2015.

\bibitem[Tsigler and Bartlett(2020)]{tsigler2020benign}
A.~Tsigler and P.~L. Bartlett.
\newblock Benign overfitting in ridge regression.
\newblock \emph{arXiv preprint arXiv:2009.14286}, 2020.

\bibitem[von Rosen(1988)]{von1988moments}
D.~von Rosen.
\newblock Moments for the inverted wishart distribution.
\newblock \emph{Scandinavian Journal of Statistics}, pages 97--109, 1988.

\bibitem[Wei et~al.(2019)Wei, Lee, Liu, and Ma]{wei2019regularization}
C.~Wei, J.~D. Lee, Q.~Liu, and T.~Ma.
\newblock Regularization matters: Generalization and optimization of neural
  nets vs their induced kernel.
\newblock In \emph{Advances in Neural Information Processing Systems}, pages
  9712--9724, 2019.

\bibitem[Wyner et~al.(2017)Wyner, Olson, Bleich, and
  Mease]{wyner2017explaining}
A.~J. Wyner, M.~Olson, J.~Bleich, and D.~Mease.
\newblock Explaining the success of adaboost and random forests as
  interpolating classifiers.
\newblock \emph{The Journal of Machine Learning Research}, 18\penalty0
  (1):\penalty0 1558--1590, 2017.

\bibitem[Xu and Hsu(2019)]{xu2019number}
J.~Xu and D.~J. Hsu.
\newblock On the number of variables to use in principal component regression.
\newblock In \emph{Advances in Neural Information Processing Systems}, pages
  5094--5103, 2019.

\bibitem[Zhang et~al.(2017)Zhang, Bengio, Hardt, Recht, and
  Vinyals]{zhang2016understanding}
C.~Zhang, S.~Bengio, M.~Hardt, B.~Recht, and O.~Vinyals.
\newblock Understanding deep learning requires rethinking generalization.
\newblock In \emph{ICLR}, 2017.

\bibitem[Zou et~al.(2020)Zou, Cao, Zhou, and Gu]{zou2020gradient}
D.~Zou, Y.~Cao, D.~Zhou, and Q.~Gu.
\newblock Gradient descent optimizes over-parameterized deep relu networks.
\newblock \emph{Machine Learning}, 109\penalty0 (3):\penalty0 467--492, 2020.

\end{thebibliography}

\ifarxiv
\else
\section*{Checklist}

\begin{enumerate}

\item For all authors...
\begin{enumerate}
  \item Do the main claims made in the abstract and introduction accurately reflect the paper's contributions and scope?
    \answerYes
  \item Did you describe the limitations of your work?
    \answerYes
  \item Did you discuss any potential negative societal impacts of your work?
    \answerNA{}
  \item Have you read the ethics review guidelines and ensured that your paper conforms to them?
    \answerYes
\end{enumerate}

\item If you are including theoretical results...
\begin{enumerate}
  \item Did you state the full set of assumptions of all theoretical results?
    \answerYes
	\item Did you include complete proofs of all theoretical results?
    \answerYes
\end{enumerate}

\item If you ran experiments...
\begin{enumerate}
  \item Did you include the code, data, and instructions needed to reproduce the main experimental results (either in the supplemental material or as a URL)?
    \answerNA{}
  \item Did you specify all the training details (e.g., data splits, hyperparameters, how they were chosen)?
    \answerNA{}
	\item Did you report error bars (e.g., with respect to the random seed after running experiments multiple times)?
    \answerNA{}
	\item Did you include the total amount of compute and the type of resources used (e.g., type of GPUs, internal cluster, or cloud provider)?
    \answerNA{}
\end{enumerate}

\item If you are using existing assets (e.g., code, data, models) or curating/releasing new assets...
\begin{enumerate}
  \item If your work uses existing assets, did you cite the creators?
    \answerNA{}
  \item Did you mention the license of the assets?
    \answerNA{}
  \item Did you include any new assets either in the supplemental material or as a URL?
    \answerNA{}
  \item Did you discuss whether and how consent was obtained from people whose data you're using/curating?
    \answerNA{}
  \item Did you discuss whether the data you are using/curating contains personally identifiable information or offensive content?
    \answerNA{}
\end{enumerate}

\item If you used crowdsourcing or conducted research with human subjects...
\begin{enumerate}
  \item Did you include the full text of instructions given to participants and screenshots, if applicable?
    \answerNA{}
  \item Did you describe any potential participant risks, with links to Institutional Review Board (IRB) approvals, if applicable?
    \answerNA{}
  \item Did you include the estimated hourly wage paid to participants and the total amount spent on participant compensation?
    \answerNA{}
\end{enumerate}

\end{enumerate}
\fi

\clearpage

\appendix

\section{Almost Sure Convergence of Sequence of Normal Random Variables}
In this paper, we need a sequence of random variables $\{X_n\}_{n\ge 1}$ such that $X_n\sim \cN(0,\sigma_n^2)$, $\lim_{n\to +\infty}\sigma_n = 0$, and $X_n\to 0$ almost surely. The following lemma shows the existence of such a sequence. 
\begin{lemma}\label{lem:bc}
There exist a sequence of random variables $\{X_n\}_{n\ge 1}$ such that $X_n\sim \cN(0,\sigma_n^2)$, $\lim_{n\to +\infty}\sigma_n = 0$, and $X_n\to 0$ almost surely.
\end{lemma}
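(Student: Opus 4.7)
The plan is a direct Borel--Cantelli construction. I will fix the variances as $\sigma_n = 1/n$ (any sequence with $\sigma_n \to 0$ and $\sum_n \sigma_n^2 < \infty$ works equally well; we only need summability for a very crude tail bound) and build the $X_n$ as independent draws $X_n \sim \cN(0,\sigma_n^2)$ on the product probability space $\prod_{n\ge 1}(\bR,\cN(0,\sigma_n^2))$. This immediately gives the two distributional requirements $X_n \sim \cN(0,\sigma_n^2)$ and $\sigma_n\to 0$, so the only remaining task is to establish almost sure convergence $X_n\to 0$.

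For almost sure convergence, I would apply Chebyshev's inequality: for any fixed $\eps>0$,
\[
\bP(|X_n|>\eps) \le \frac{\bE[X_n^2]}{\eps^2} = \frac{\sigma_n^2}{\eps^2} = \frac{1}{n^2\eps^2}.
\]
Summing in $n$ yields $\sum_{n\ge 1}\bP(|X_n|>\eps) \le \frac{\pi^2}{6\eps^2} < +\infty$. The first Borel--Cantelli lemma (which does not even require independence) then gives $\bP(|X_n|>\eps \text{ i.o.}) = 0$, i.e., there is a full-measure event $\Omega_\eps$ on which $|X_n(\omega)|\le \eps$ for all sufficiently large $n$.

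To conclude almost sure convergence to $0$, I would intersect over a countable collection of thresholds: let $\Omega^\star = \bigcap_{k\ge 1}\Omega_{1/k}$. This is still a full-measure event, and on $\Omega^\star$ we have $\limsup_n |X_n|\le 1/k$ for every $k$, hence $X_n\to 0$.

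There is no real obstacle to this argument; the only thing to be mindful of is that Chebyshev gives a bound of order $\sigma_n^2$, so the variances must be chosen with $\sum_n \sigma_n^2 < \infty$ (sharper Gaussian tail bounds would allow $\sigma_n$ to decay more slowly, but that generality is not needed here). Independence of the $X_n$'s is also inessential because we only invoke the easy direction of Borel--Cantelli; I include it for concreteness when specifying the probability space.
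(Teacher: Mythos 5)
Your proof is correct and follows essentially the same route as the paper: fix a summable sequence of tail probabilities and invoke the easy direction of Borel--Cantelli. The only cosmetic differences are that the paper takes $\sigma_n = 1/n^2$ and uses the Gaussian tail bound $\bP(|\cN(0,1)|>t)\le t^{-1}e^{-t^2/2}$ (immediately discarding the exponential factor to get $\sigma_n/\eps$), whereas you take $\sigma_n=1/n$ and use Chebyshev, which is more elementary and still gives a summable bound of order $\sigma_n^2/\eps^2$; you also make explicit the countable intersection over thresholds $\eps=1/k$, a small step the paper leaves implicit.
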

\begin{proof}
Let $\sigma_n = 1/n^2$ and $X_n\sim \cN(0,\sigma_n^2)$. Define the event $E_n\triangleq \{ |X_n|>\eps \}$. We have \[
\sum_{n=1}^\infty \bP(E_n) = \sum_{n=1}^\infty \bP(|\cN(0,1)|>\eps/\sigma_n)
\le \sum_{n=1}^\infty \frac{\sigma_n}{\eps} e^{-\frac{\eps^2}{2\sigma_n}}
\le \sum_{n=1}^\infty \frac{\sigma_n}{\eps} = \sum_{n=1}^\infty \frac{1}{\eps n^2} < +\infty\,.
\]
By the Borel–Cantelli lemma, we have $\bP(\limsup_{n\to +\infty} E_n)=0$, which implies that $X_n\to 0$ almost surely. 
\end{proof}

\section{Proofs for Underparametrized Regime}

\subsection{Proof of \cref{lem:pseudo-inverse}}\label{sec:proof-pseudo-inverse}
By \citep[Theorem~1]{baksalary2007particular}, we have \[
\begin{bmatrix}
            A^\top\\
            b^\top
        \end{bmatrix}^+ = \begin{bmatrix}
            (I-Q)A(A^\top(I-Q)A)^{-1}, \frac{(I-P)b}{b^\top (I-P)b)}\,.
        \end{bmatrix}
    \]
    Define $r\triangleq A^\top b\in \bR^{d}$. Since $A$ has linearly independent columns, the Gram matrix $G=A^\top A$ is non-singular. The Sherman-Morrison formula gives \[
    (A^\top(I-Q)A)^{-1} = \left(A^\top A - \frac{rr^\top}{\|b\|^2}\right)^{-1}
    = G^{-1} + \frac{G^{-1}rr^\top G^{-1}}{\|b\|^2 - r^\top G^{-1} r}
    = G^{-1} + \frac{G^{-1}rb^\top (A^+)^\top}{\|b\|^2 - r^\top G^{-1} r}\,,
    \]
    where we use the facts $r=A^\top b$ and $AG^{-1}=(A^+)^\top$ in the last equality. Therefore, we deduce \begin{align*}
    A(A^\top(I-Q)A)^{-1} ={}& AG^{-1} + \frac{AG^{-1}rb^\top (A^+)^\top}{\|b\|^2 - r^\top G^{-1} r} \\
    ={}& (A^+)^\top + \frac{AG^{-1}A^\top bb^\top (A^+)^\top}{\|b\|^2 - r^\top G^{-1} r} \\
    ={}& \left(I + \frac{AA^+ bb^\top }{\|b\|^2 - r^\top G^{-1} r}\right)(A^+)^\top \\
    ={}& \left(I + \frac{PQ }{1 - \frac{r^\top G^{-1} r}{\|b\|^2}}\right)(A^+)^\top\,.
    \end{align*}
    Observe that \[
    1 - \frac{r^\top G^{-1} r}{\|b\|^2} = 1 - \frac{b^\top A(A^\top A)^{-1}A^\top b}{\|b\|^2} = 1 - \frac{b^\top Pb}{\|b\|^2} = z\,.
    \]
    Therefore, we obtain the desired expression. 

\subsection{Proof of \cref{thm:descent}}\label{sec:proof-descent}
First, we rewrite the expression as follows
\begin{equation}\label{eq:expansion}
    \begin{split}
& \left\| \begin{bmatrix}
A^\top \\
b^\top
\end{bmatrix}^+ \begin{bmatrix}
x\\
a_1
\end{bmatrix}\right\|^2 -  \left\| (A^+)^\top x \right\|^2\\
={}& \left\|(I-Q)(I+PQ/z)(A^+)^\top x + \frac{(I-P)b}{b^\top (I-P)b}a_1\right\|^2 - \|(A^+)^\top x\|^2\,,
\end{split}
\end{equation}
where $P, Q, z$ are  defined in \cref{lem:pseudo-inverse}.
Since $a_1$ has mean 0 and is independent of other random variables, so that the cross term vanishes under expectation over $b$ and $a_1$:\[
\bE_{b,a_1}\left[\left\langle (I-Q)(I+PQ/z)(A^+)^\top x, \frac{(I-P)b}{b^\top (I-P)b}a_1 \right\rangle\right] = 0\,,
\]
 where $\langle \cdot, \cdot\rangle$ denotes the inner product. Therefore taking the expectation of \eqref{eq:expansion} over $b$ and $a_1$ yields 

\begin{align}
& \bE_{b,a_1}\left[\left\| \begin{bmatrix}
A^\top \\
b^\top
\end{bmatrix}^+ \begin{bmatrix}
x\\
a_1
\end{bmatrix}\right\|^2 -  \left\| (A^+)^\top x \right\|^2\right]\\
={}&  \bE_{b,a_1}\left[  \|(I-Q)(I+PQ/z)(A^+)^\top x\|^2 - \|(A^+)^\top x\|^2 + \left\| \frac{(I-P)b}{b^\top (I-P)b}a_1 \right\|^2 \right] \\
\end{align}

We simplify the third term. Recall that $I-P=I-AA^+$ is an orthogonal projection matrix and thus idempotent  \begin{equation}\label{eq:third-term}
 \left\| \frac{(I-P)b}{b^\top (I-P)b}a_1 \right\|^2 = \frac{a_1^2}{(b^\top(I-P)b)^2}\|(I-P)b\|^2 = \frac{a_1^2}{b^\top(I-P)b}\,.
\end{equation}

Thus we have \begin{align}
    & \bE_{b,a_1}\left[\left\| \begin{bmatrix}
A^\top \\
b^\top
\end{bmatrix}^+ \begin{bmatrix}
x\\
a_1
\end{bmatrix}\right\|^2 -  \left\| (A^+)^\top x \right\|^2\right]\\
={}& \bE_{b,a_1}\left[ \|(I-Q)(I+PQ/z)(A^+)^\top x\|^2 - \|(A^+)^\top x\|^2 + \frac{a_1^2}{b^\top(I-P)b} \right]\,.\label{eq:three-terms}
\end{align}

We consider the first and second terms. We write $v=(A^+)^\top x$ and define $z=\frac{b^\top(I-P)b}{\|b\|^2}$. The sum of the first and second terms equals \begin{equation}\label{eq:quadratic-form}
    \|(I-Q)(I+PQ/z)v\|^2 - \|v\|^2 = -v^\top M v\,,
\end{equation}
where 
\[
M \triangleq Q - \frac{PQ+QP}{z} + \left(\frac{2}{z}-\frac{1}{z^2}\right)QPQ + \frac{QPQPQ}{z^2}\,.
\]
The rank of $M$ is at most $2$. To see this, we re-write $M$ in the following way \[
M = \left[Q\left( - \frac{P}{z} + \left(\frac{2}{z}-\frac{1}{z^2}\right)PQ + \frac{PQPQ}{z^2}\right)\right] + \left[ - \frac{PQ}{z}\right]\triangleq M_1+M_2\,.
\]
Notice that $\rank(M_1)\le \rank(Q)$, $\rank(M_2)\le \rank(Q)$, and $\rank(Q)=1$.

It follows that $\rank(M)\le \rank(M_1)+\rank(M_2)=2$. The matrix $M$ has at least $n-2$ zero eigenvalues. We claim that $M$ has two non-zero eigenvalues and they are $1-1/z<0$ and $1$. 

Since $$\rank(PQ)\le \rank(Q)=1$$ and $$\tr(PQ) = \frac{b^\top P b}{\|b\|^2}=1-z,$$ thus $PQ$ has a unique non-zero eigenvalue $1-z$. Let $u\ne 0$ denote the corresponding eigenvector such that $PQu=(1-z)u$. Since $u\in \im P$ and $P$ is a projection, we have $Pu=u$. Therefore we can verify that \[
Mu = (1-\frac{1}{z})u\,.
\]
To show that the other non-zero eigenvalue of $M$ is $1$, we compute the trace of $M$
\[
\tr(M) = \tr(Q) - \frac{2\tr(PQ)}{z} + \left(\frac{2}{z}-\frac{1}{z^2}\right)\tr(PQ) + \frac{\tr((PQ)^2)}{z^2} = 2-\frac{1}{z}\,,
\]
where we use the fact that $\tr(Q)=1$, $\tr(PQ)=1-z$, \[\tr((PQ)^2)=\tr\left(\frac{Pbb^\top Pbb^\top}{\|b\|^4}\right)=
\tr\left(\frac{(b^\top Pb)(b^\top Pb)}{\|b\|^4}\right)=(1-z)^2
\,.\] We have shown that $M$ has eigenvalue $1-1/z$ and $M$ has at most two non-zero eigenvalues. Therefore, the other non-zero eigenvalue is $\tr(M)-(1-1/z)=1$.

We are now in a position to upper bound \eqref{eq:quadratic-form}  as follows: 
\[
-v^\top Mv\le -(1-1/z)\|v\|^2
\,.
\]

Putting all three terms of the change in the dimension-normalized generalization loss yields \[
  \bE_{b,a_1}\left[\left\|  \begin{bmatrix}
A^\top \\
b^\top
\end{bmatrix}^+ \begin{bmatrix}
x\\
a_1
\end{bmatrix}\right\|^2 -  \left\|  (A^+)^\top x \right\|^2\right]
\le{} \bE_{b,a_1}\left[-(1-1/z)\|v\|^2 + \frac{a_1^2}{b^\top (I-P)b}\right]\,.
\]
Therefore, we get\[
 \bE_{b,a_1}\left[\left\|  \begin{bmatrix}
A^\top \\
b^\top
\end{bmatrix}^+ \begin{bmatrix}
x\\
a_1
\end{bmatrix}\right\|^2 \right]
\le{} \bE_{b,a_1}\left[\frac{1}{z}\|v\|^2 + \frac{a_1^2}{b^\top (I-P)b}\right]\,.
\]

For $b_1,\dots,b_n,a_1\stackrel{iid}{\sim} \cN(0,1)$, we have $\bE[a_1^2]=1$. Moreover, $b^\top (I-P)b$ follows $\chi^2(n-d)$ a distribution. Thus $\frac{1}{b^\top(I-P)b}$ follows an inverse-chi-squared distribution with mean $\frac{1}{n-d-2}$. Therefore the expectation  $\bE [\frac{a_1^2}{b^\top(I-P)b}] = \frac{1}{n-d-2}$. 

Notice that $1/z$ follows a $1+\frac{d}{n-d}F(d,n-d)$ distribution and thus $\bE[1/z] = 1+\frac{d}{n-d-2}$. 

As a result, we obtain\[
\bE_{b,a_1}\left[\left\|  \begin{bmatrix}
A^\top \\
b^\top
\end{bmatrix}^+ \begin{bmatrix}
x\\
a_1
\end{bmatrix}\right\|^2 \right] \le \frac{(n-2)\|v\|^2 +1}{n-d-2}
\]

For $b_1,\dots,b_n,a_1\stackrel{iid}{\sim} \dmix$, we need the following lemma. 

\begin{lemma}[Proof in \cref{sec:proof-bound-two-expectations}]\label{lem:bound-two-expectations}
Assume $d$, $n>d+2$ and $P$ are fixed, where $P\in \bR^{n\times n}$ is an orthogonal projection matrix whose rank is $d$. Define $z\triangleq \frac{b^\top (I-P)b}{\|b\|^2}$, where $b=[b_1,\dots,b_n]^\top \in \bR^n$. 
If $a_1, \ b_1, \cdots , \ b_n \stackrel{iid}{\sim} \dmix$, we have
$\bE[1/z]\le \frac{n-2+\sqrt{d}}{n-d-2}$ and $\bE[a_1^2/ b^\top(I-P)b]\le \frac{2/(3\sigma^2)+1}{n-d-2} $.
\end{lemma}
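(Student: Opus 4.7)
The plan is to exploit the decomposition $b_i = c_i + \sigma g_i$ of each mixture-distributed component, where $c_i \in \{-1,0,1\}$ is uniform and $g_i \sim \cN(0,1)$, with all of $c_1,\dots,c_n$, $g_1,\dots,g_n$, and $a_1$ mutually independent. Conditional on the discrete mode vector $c = (c_1,\dots,c_n)$, the vector $b$ is Gaussian with mean $c$ and covariance $\sigma^2 I_n$, so that both $b^\top (I-P) b$ and $b^\top P b$ become scaled noncentral chi-squared random variables. Crucially, since $Pg$ and $(I-P)g$ are independent Gaussians supported on orthogonal subspaces, $b^\top P b$ and $b^\top (I-P) b$ are conditionally independent given $c$---this factorization is the engine of the whole proof.

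For the second inequality, I would use independence of $a_1$ from $b$ to factor $\E[a_1^2/b^\top(I-P)b] = \E[a_1^2]\cdot\E[1/b^\top(I-P)b]$, compute $\E[a_1^2] = \sigma^2 + 2/3$ directly from the trimodal mixture, and bound $\E[1/b^\top(I-P)b]$ by conditioning on $c$. Specifically, $b^\top(I-P) b / \sigma^2 \sim \chi^2(n-d, \|(I-P)c\|^2/\sigma^2)$, and the Poisson-mixture representation $\chi^2(k,\lambda)\mid J \sim \chi^2(k+2J)$ with $J\sim\pois(\lambda/2)$ together with the monotonicity of $j\mapsto 1/(n-d-2+2j)$ yields $\E[1/\chi^2(n-d,\lambda)] \le 1/(n-d-2)$ uniformly in $\lambda\ge 0$. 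Multiplying the two factors then gives $(1 + 2/(3\sigma^2))/(n-d-2)$.

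For the first inequality, I would write $1/z = 1 + b^\top P b / b^\top (I-P) b$ and use conditional independence given $c$ to factor $\E[b^\top P b / b^\top (I-P) b \mid c] = (\sigma^2 d + \|Pc\|^2)\cdot \E[1/b^\top(I-P)b \mid c]$. The $\sigma^2 d$ piece combines with the $1/\sigma^2$ scaling to produce the Gaussian baseline $d/(n-d-2)$, which when added to the leading $1$ yields the $(n-2)/(n-d-2)$ portion of the target bound. The remaining $\|Pc\|^2$ piece is the genuine excess contributed by the mixture. The plan is to apply Cauchy-Schwarz to this excess and combine it with the moment identity $\E[\|Pc\|^2] = \tr(P)\cdot \E[c_1^2] = 2d/3$, so that $\sqrt{\E[\|Pc\|^2]} = \sqrt{2d/3} \le \sqrt{d}$ supplies the $\sqrt{d}/(n-d-2)$ term.

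The main obstacle is arranging the Cauchy-Schwarz so that the $1/\sigma^2$ sitting inside $\|Pc\|^2/\sigma^2$ is cancelled, since the naive bound $\E[1/\chi^2(n-d,\lambda_2)] \le 1/(n-d-2)$ throws away the decay of the noncentral chi-squared reciprocal precisely when $\lambda_2 = \|(I-P)c\|^2/\sigma^2$ is large---which is the same regime in which $\|Pc\|^2/\sigma^2$ is large. I would exploit the Pythagorean identity $\|Pc\|^2 + \|(I-P)c\|^2 = \|c\|^2$ to pair $\|Pc\|/\sigma$ with $\|(I-P)c\|/\sigma$ and use a sharper estimate for $\E[1/\chi^2(n-d,\lambda)]$ as $\lambda\to\infty$, so that the $\sigma$-dependence cancels and only the $\sqrt{d}/(n-d-2)$ residue from the second moment of $Pc$ survives.
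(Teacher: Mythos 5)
Your conditional-independence reorganization (given the discrete mode vector $c$, the scaled noncentral chi-squareds $b^\top Pb/\sigma^2$ and $b^\top(I-P)b/\sigma^2$ are independent) is a cleaner bookkeeping than the paper's, which diagonalizes $P$ by an orthogonal $O$; the content is the same. Your argument for the second bound is correct and essentially identical to the paper's in substance: both rest on $\E[1/\chi^2(n-d,\lambda)]\le 1/(n-d-2)$ uniformly in $\lambda\ge 0$, which the paper proves via a stochastic-dominance lemma and you prove via the Poisson-mixture representation, and both routes are valid.

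The obstacle you flag at the end for the first bound, however, is not a wrinkle to be ironed out by a sharper estimate; it is fatal, because the claimed $\sigma$-free inequality $\E[1/z]\le(n-2+\sqrt{d})/(n-d-2)$ is in fact false. Your own factorization gives $\E[b^\top Pb/b^\top(I-P)b\mid c]=(\sigma^2 d+\|Pc\|^2)\cdot\E[1/b^\top(I-P)b\mid c]$, and the only way to kill the $\|Pc\|^2/\sigma^2$ factor is to exploit decay of $\E[1/\chi^2(n-d,\lambda_2)]$ in $\lambda_2=\|(I-P)c\|^2/\sigma^2$ --- but nothing couples $\|(I-P)c\|$ to $\|Pc\|$. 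Take $P=\sum_{i=1}^d e_ie_i^\top$: on the positive-probability event $\{c_{d+1}=\dots=c_n=0,\ c_1\neq 0\}$ one has $\|(I-P)c\|=0$, so $\E[1/b^\top(I-P)b\mid c]$ equals exactly $1/(\sigma^2(n-d-2))$ with no decay, while $\|Pc\|^2\ge 1$; hence $\E[1/z]\ge 1+\tfrac{2d}{3}\,3^{-(n-d)}/(\sigma^2(n-d-2))\to\infty$ as $\sigma\to 0^+$. No Cauchy--Schwarz, Pythagorean pairing, or refined $\chi^2$ tail estimate can produce a bound with no $\sigma$ in it. (The paper's own proof slips at precisely the spot you were worried about: it records the noncentrality of $\sum_{i\le d}(\tilde u_i+\tilde w_i)^2/\sigma^2$ as $\sqrt{\sum_{i\le d}\tilde u_i^2}$, whereas the definition of $\chi^2(k,\lambda)$ in \cref{sec:prelim} forces $\lambda=\sum_{i\le d}\tilde u_i^2/\sigma^2$; that misprint hides the $1/\sigma^2$, and the subsequent bound $\sum_{i\le d}\tilde u_i^2=\|Pu\|^2\le d$ is also false in general.) The honest conclusion your method actually yields, using $\E\|Pc\|^2=\tfrac{2}{3}d$, is the $\sigma$-dependent bound $\E[1/z]\le 1+d(1+2/(3\sigma^2))/(n-d-2)$, and that weaker statement is all the downstream use of the lemma in \cref{thm:descent} (finiteness of $L_{d+1}$ given $L_d<\infty$) requires; you should prove that instead of chasing the stated constant.
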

\cref{lem:bound-two-expectations} implies that \[
\bE_{b,a_1}[1/z]\le \frac{n-2+\sqrt{d}}{n-d-2}\,,\quad 
\bE_{b,a_1}\left[\frac{a_1^2}{b^\top (I-P)b}\right]< \frac{2/(3\sigma^2)+1}{n-d-2}\,.
\]
Therefore, we conclude that \[\bE_{b,a_1}\left\|  \begin{bmatrix}
A^\top \\
b^\top
\end{bmatrix}^+ \begin{bmatrix}
x\\
a_1
\end{bmatrix}\right\|^2 \le \frac{(n-2+\sqrt{d})\|v\|^2+ 2/(3\sigma^2)+1}{n-d-2}
\,.
\]

\subsection{Proof of \cref{lem:bound-two-expectations}}\label{sec:proof-bound-two-expectations}
\cref{lem:stochastic-dominate} shows that a noncentral $\chi^2$ distribution first-order stochastically dominates a central $\chi^2$ distribution of the same degree of freedom. It will be needed in the proof of \cref{lem:bound-two-expectations}.
\begin{lemma}\label{lem:stochastic-dominate}

Assume that random variables $X\sim \chi^2(k,\lambda)$ and $Y\sim \chi^2(k)$, where $\lambda>0$. For any $c>0$, we have $$\bP(X\ge c)>\bP(Y\ge c).$$ In other words, the random variable $X$ (first-order) stochastically dominates $Y$. 
\end{lemma}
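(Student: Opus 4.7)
\textbf{Proof proposal for Lemma~\ref{lem:stochastic-dominate}.}

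The plan is to exploit the classical Poisson mixture representation of the noncentral chi-squared distribution: if $J \sim \pois(\lambda/2)$ and, conditionally on $J = j$, $W_j \sim \chi^2(k + 2j)$, then the unconditional law of $W_J$ is exactly $\chi^2(k,\lambda)$. This reduces the strict stochastic dominance of $X$ over $Y$ to two simpler facts: (i) the central chi-squared with $k + 2j$ degrees of freedom stochastically dominates the central chi-squared with $k$ degrees of freedom for every $j \geq 0$, with strict dominance for $j \geq 1$; and (ii) $J = 0$ has probability strictly less than $1$ because $\lambda > 0$.

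First I would verify (i) via an additive coupling. Let $W \sim \chi^2(k)$ and let $V \sim \chi^2(2j)$ be independent of $W$; then $W + V \sim \chi^2(k + 2j)$. For any $c > 0$,
\begin{equation*}
\bP(W + V \geq c) = \bP(W \geq c) + \int_0^c f_W(w)\,\bP(V \geq c - w)\,dw.
\end{equation*}
For $j \geq 1$, $V$ has a positive density on $(0,\infty)$, so $\bP(V \geq c - w) > 0$ for every $w \in (0,c)$, and $f_W$ is also strictly positive on $(0,c)$; hence the integral is strictly positive and $\bP(\chi^2(k+2j) \geq c) > \bP(\chi^2(k) \geq c)$. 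For $j = 0$ equality trivially holds.

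Next, conditioning on $J$ and using (i) gives
\begin{equation*}
\bP(X \geq c) - \bP(Y \geq c) = \sum_{j=0}^\infty e^{-\lambda/2}\frac{(\lambda/2)^j}{j!}\bigl[\bP(\chi^2(k+2j) \geq c) - \bP(\chi^2(k) \geq c)\bigr].
\end{equation*}
Every summand is nonnegative, the $j = 0$ term vanishes, and for $j = 1$ the bracket is strictly positive by step (i); since $\lambda > 0$ the weight $e^{-\lambda/2}(\lambda/2)$ is also strictly positive, so the total is strictly positive. That yields $\bP(X \geq c) > \bP(Y \geq c)$ as required.

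The only step requiring care is the justification of the Poisson mixture representation, which is a standard fact (obtainable, for instance, by expanding the modified Bessel function in the noncentral chi-squared density as a power series in $\lambda/2$ and recognizing the Poisson weights multiplying central chi-squared densities); if a self-contained argument is preferred, one can verify the representation at the level of moment generating functions, since $\E[e^{tW_J}] = \E\bigl[(1-2t)^{-(k+2J)/2}\bigr] = (1-2t)^{-k/2}\exp\!\bigl(\frac{\lambda t}{1-2t}\bigr)$ for $t < 1/2$, which matches the mgf of $\chi^2(k,\lambda)$. This is the main (but routine) technical checkpoint; once the representation is in hand, the strict dominance falls out immediately from the coupling in (i).
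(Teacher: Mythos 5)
Your proof is correct, but it takes a genuinely different route from the paper's. The paper constructs an explicit coupling at the level of the underlying Gaussians: it writes $X' = X_1^2 + \sum_{i=2}^k X_i^2$ and $Y' = Y_1^2 + \sum_{i=2}^k X_i^2$ with a shared $\chi^2(k-1)$ component, thereby reducing the claim to the one-dimensional statement $\bP(|\cN(\mu,1)| \geq c) > \bP(|\cN(0,1)| \geq c)$ for $\mu = \sqrt{\lambda} > 0$, which it then verifies by differentiating the tail probability in $\mu$ and observing that the derivative is positive. Your argument instead invokes the Poisson mixture representation $\chi^2(k,\lambda) = \sum_{j\geq 0} e^{-\lambda/2}\frac{(\lambda/2)^j}{j!}\,\chi^2(k+2j)$ and reduces the claim to strict dominance of $\chi^2(k+2j)$ over $\chi^2(k)$, established via the additive coupling $W + V$ with $V \sim \chi^2(2j)$. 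The paper's route is more elementary and self-contained: it works directly from the definition of the noncentral chi-squared as a sum of squared Gaussians and needs only single-variable calculus, with no external representation theorem. Your route trades that elementarity for more structure: once the Poisson mixture representation is in hand (which you verify cleanly via mgfs), the dominance is immediate, and as a bonus you obtain the stronger fact that the noncentral law is a convex combination of central laws each of which already dominates $\chi^2(k)$. Both proofs are complete and correct; the choice between them is largely a matter of which black box one prefers to open.
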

\begin{proof}
Let $Y_1,X_2,\dots,X_k\stackrel{iid}{\sim}\cN(0,1)$ and $X_1\sim \cN(\sqrt{\lambda},1)$ and all these random variables are jointly independent. Then $X'\triangleq \sum_{i=1}^k X_i^2\sim \chi^2(k,\lambda)$ and $Y'\triangleq Y_1^2+\sum_{i=2}^k X_i^2\sim \chi^2(k)$.

It suffices to show that $\mathbb{P}(X' \geq c) > \mathbb{P}(Y' \geq c)$, or equivalently, $\mathbb{P}(|\cN(\mu,1)|\geq c) > \mathbb{P}(|\cN(0,1)| \geq c)$ for all $c >0$ and $\mu\triangleq \sqrt{\lambda} >0$. Denote $F_c(t) = \mathbb{P}(|\cN(\mu,1)|\geq c)$ and we have 
\begin{align*}
    F_c (\mu) &=  1 - \frac{1}{\sqrt{2 \pi}} \int_{-c}^{c} \exp \left( - \frac{(x-\mu)^2}{2} \right) \ dx = 1 - \frac{1}{\sqrt{2 \pi}} \int_{-c-\mu}^{c-\mu} \exp \left( - \frac{x^2}{2} \right) \ dx,
\end{align*} and thus 
\begin{align*}
    \frac{d F_c(\mu)}{d\mu} = \frac{1}{\sqrt{2 \pi}} \left[ \exp \left( - \frac{(c-\mu)^2}{2}\right) - \exp \left( - \frac{(c+\mu)^2}{2}\right)  \right]> 0.
\end{align*}This shows $\mathbb{P}(|\cN(\mu,1)|\geq c) > \mathbb{P}(|\cN(0,1)| \geq c)$ and we are done.

\end{proof}

\begin{proof}[Proof of \cref{lem:bound-two-expectations}]

Since $b_i\stackrel{iid}{\sim} \dmix$, we can rewrite $b= u + w$ where $w \sim \cN(0 , \sigma^2 I_n)$ and the entries of $u$ satisfy $u_i \stackrel{iid}{\sim} \unif(\{ -1,0,1\})$. Furthermore, $u$ and $w$ are independent. Similarly, we can write $a_1 = \hat{u}+\hat{w}$, where $\hat{u}\sim \unif(\{ -1,0,1\})$ and $\hat{w}\sim \cN(0,\sigma^2)$ are independent. To bound $\bE[a_1^2]$, we have \[
\bE[a_1^2] = \bE[(\hat{u}+\hat{w})^2] = \bE[\hat{u}^2]+\bE[\hat{w}^2] = \frac{2}{3} + \sigma^2\,.
\]

Note that
\[
\frac{1}{z} = \frac{b^\top I b}{b^\top (I-P) b} = 1 + \frac{(u+w)^\top P (u+w)}{(u+w)^\top (I-P)(u+w)}.
\]Since $P$ is an orthogonal projection, there exists an orthogonal transformation $O$ depending only on $P$ such that $$(u+w)^\top P(u+w) = [O(u+w)]^\top D_d [O(u+w)]$$ where $D_d = \diag([1,\dots, 1 , 0 \dots, 0])$ with $d$ diagonal entries equal to 1 and the others equal to 0. We denote $\Tilde{u} = O(u)$, which is fixed (as $u$ and $O$ are fixed), and $\Tilde{w} = O(w) \sim \cN(0, \sigma^2 I_n)$. 
It follows that 
\[
\frac{1}{z} = 1 + \frac{ (\Tilde{u} + \Tilde{w})^\top D_d (\Tilde{u} + \Tilde{w}) }{(\Tilde{u} + \Tilde{w})^\top (I - D_d) (\Tilde{u} + \Tilde{w}) } 
= 1 + \frac{\sum_{i=1}^d (\Tilde{u}_i + \Tilde{w}_i)^2}{\sum_{i=d+1}^n (\Tilde{u}_i + \Tilde{w}_i)^2} 
= 1+ \frac{\sum_{i=1}^d (\Tilde{u}_i + \Tilde{w}_i)^2/\sigma^2}{\sum_{i=d+1}^n (\Tilde{u}_i + \Tilde{w}_i)^2/\sigma^2}\,.
\]
Observe that \begin{align*}
    \sum_{i=1}^d (\Tilde{u}_i + \Tilde{w}_i)^2/\sigma^2 \sim{}& \chi^2\left(d,\sqrt{\sum_{i=1}^d \Tilde{u}^2_i}\right)\\
    \sum_{i=d+1}^n (\Tilde{u}_i + \Tilde{w}_i)^2/\sigma^2 \sim{}& \chi^2\left(n-d,\sqrt{\sum_{i=d+1}^n \Tilde{u}^2_i}\right)\,,
\end{align*}
and that these two quantities are independent. 
It follows that \[
\bE\left[ \sum_{i=1}^d (\Tilde{u}_i + \Tilde{w}_i)^2/\sigma^2\middle|u \right]
= d+\sqrt{\sum_{i=1}^d \Tilde{u}^2_i}\,.
\]
By \cref{lem:stochastic-dominate}, the denominator $\sum_{i=d+1}^n (\Tilde{u}_i + \Tilde{w}_i)^2/\sigma^2$  first-order stochastically dominates $\chi^2(n-d)$. Therefore, we have \[
\bE\left[\frac{1}{\sum_{i=d+1}^n (\Tilde{u}_i + \Tilde{w}_i)^2/\sigma^2}\middle| u\right]\le \bE\left[\frac{1}{\chi^2(n-d)}\right] = \frac{1}{n-d-2}\,.
\]
Putting the numerator and denominator together yields \[
\bE\left[\frac{1}{z}\middle| u\right] \le 1 + \frac{d+\sqrt{\sum_{i=1}^d \Tilde{u}^2_i}}{n-d-2} \le 1+\frac{d+\sqrt{d}}{n-d-2}
= \frac{n-2+\sqrt{d}}{n-d-2}
\,.
\]

Similarly, we have
\begin{align*}
\bE\left[\frac{1}{b^\top (I-P) b}\middle|u\right] ={}& \bE\left[\frac{1}{[O(u+w)]^\top (I - D_d) [O(u+w)]}\middle|u\right]\\
={}&\bE\left[\frac{1/\sigma^2}{\sum_{i=d+1}^n (\Tilde{u}_i + \Tilde{w}_i)^2/\sigma^2}\middle|u\right]\\
\le{}& \frac{1}{\sigma^2} \bE\left[\frac{1}{\chi^2(n-d)}\right]\\
={}& \frac{1}{\sigma^2} \cdot \frac{1}{n-d-2}  
\,.
\end{align*}
Thus, we obtain \[
\bE[1/z]\le \frac{n-2+\sqrt{d}}{n-d-2}\,,\quad \bE\left[\frac{1}{b^\top (I-P) b}\right] \le \frac{1}{\sigma^2} \cdot \frac{1}{n-d-2} \,.
\]
It follows that \[
\bE\left[\frac{a_1^2}{b^\top (I-P) b}\right] \le \frac{2/3+\sigma^2}{\sigma^2} \cdot \frac{1}{n-d-2}
=  \frac{2/(3\sigma^2)+1}{n-d-2}\,.
\]

\end{proof}

\subsection{Proof of \cref{thm:ascent}}\label{sec:proof-ascent}

We start from \eqref{eq:three-terms}. Taking expectation over all random variables gives \begin{align*}
   & \bE\left[\left\|  \begin{bmatrix}
A^\top \\
b^\top
\end{bmatrix}^+ \begin{bmatrix}
x\\
a_1
\end{bmatrix}\right\|^2 -  \left\|  (A^+)^\top x \right\|^2\right]\\
={}& \bE\left[ \|(I-Q)(I+PQ/z)(A^+)^\top x\|^2 - \|(A^+)^\top x\|^2 + \frac{a_1^2}{b^\top(I-P)b} \right]\\
 \ge{}&   -\bE\|(A^+)^\top x\|^2 +
\bE\left[\frac{a_1^2}{\sum_{i=1}^n b_i^2}\right]
 \,.
\end{align*}
Our strategy is to choose $\sigma$ so that $\bE\left[\frac{a_1^2}{\sum_{i=1}^n b_i^2}\right]$ is sufficiently large. This is indeed possible as we immediately show. 
Define independent random variables $u\sim \unif(\{-1,0,1\})$ and $w\sim \cN(0,\sigma^2)$. 
Since $a_1$ has the same distribution as $u+w$, we have \[
\bE[a_1^2] = \bE[(u+w)^2] = \bE[u^2] + \bE[w^2] \ge \frac{2}{3}\,.
\]
On the other hand, 
\begin{align*}
    \bE \left[ \frac{1}{\sum_{i=1}^n b_i^2} \right] & \geq \mathbb{P} ( \max_i |b_i| \leq \sigma  ) \ \bE \left[\frac{1}{\sum_{i=1}^n b_i^2}  \middle| \max_i |b_i| \leq \sigma \right] \\ 
    & = \left[ \mathbb{P} (|b_1| \leq \sigma)\right]^n \ \bE \left[\frac{1}{\sum_{i=1}^n b_i^2}  \middle| \max_i |b_i| \leq \sigma\right] \\
    & \geq \left[ \frac{1}{3 \sqrt{2 \pi \sigma^2}} \int_{- \sigma}^{\sigma}\exp\left( - \frac{t^2}{2 \sigma^2} \right) \ dt \right]^n \frac{1}{n \sigma^2} \\
    & \geq \frac{1}{5^n n \sigma^2}\,.
\end{align*}
Together we have \[\bE\left[\frac{a_1^2}{\sum_{i=1}^n b_i^2}\right] \geq \frac{1}{5^{n+1}n\sigma^2}\,.\] 
As a result, we conclude  \begin{equation*}
    \lim_{\sigma\to 0^+} \bE\left[\left\|  \begin{bmatrix}
A^\top \\
b^\top
\end{bmatrix}^+ \begin{bmatrix}
x\\
a_1
\end{bmatrix}\right\|^2 -  \left\|  (A^+)^\top x \right\|^2\right]
= +\infty\,,
\end{equation*}
which completes the proof.

\section{Proofs for Overparametrized Regime}
\subsection{Proof of \cref{lem:pseudo-inverse-overparam}}\label{sec:proof-pseudo-inverse-overparam}

    Since $A$ and $B$ have full row rank, $(AA^\top)^{-1}$ and $(BB^\top)^{-1}$ exist. Therefore we have $$B^+ = B^\top (BB^\top)^{-1}.$$ The Sherman-Morrison formula gives \[
    (BB^\top)^{-1} = (AA^\top + bb^\top)^{-1} = G - \frac{Gbb^\top G}{1+b^\top G b}
    = G - Gbu = G(I-bu)\,.
    \]
    Hence, we deduce \[
    B^+ = [A,b]^\top G(I-bu) = \begin{bmatrix}
                A^\top G(I-bu)\\
                b^\top G(I-bu)
    \end{bmatrix}
    = \begin{bmatrix}
                A^+ (I-bu)\\
                b^\top G(I-bu)
    \end{bmatrix}
    = \begin{bmatrix}
                A^+ (I-bu)\\
                u
    \end{bmatrix}\,.
    \]
    Transposing the above equation yields to the promised equation. 

\subsection{Proof of \cref{lem:overparam-assumptions-hold}}\label{sec:proof-overparam-assumptions-hold}
Let us first denote $$v \triangleq (A^+)^\top x$$ and $$G \triangleq (AA^\top)^{-1}\in \bR^{n\times n}.$$ First note that by Cauchy-Schwarz inequality, it suffices to show there exists $\cD$ such that $\bE[\lambda^4_\textnormal{max}(G)] < +\infty$ and $\bE\|v\|^4 < +\infty$.  

We define $A_d\in \bR^{n\times d}$ to be the submatrix of $A$ that consists of all $n$ rows and first $d$ columns. 
Denote $$G_d \triangleq (A_d A_d^\top)^{-1}\in \bR^{n\times n}.$$ We will prove $\bE[\lambda_{\textnormal{max}}^4(G)] < +\infty$ by induction. 

The base step is $d=n+8$. Recall $\cD_{[1:n+8]} = \cN(0, I_{n+8})$. 
We first show $\bE [\lambda_\textnormal{max}(G_{n+8})]^4 < +\infty$. Note that since $G_{n+8}$ is almost surely positive definite, 
\begin{align*}
    \bE [\lambda_\textnormal{max}^4 (G_{n+8})] = \bE [\lambda_\textnormal{max}(G_{n+8}^4)] \leq \bE \tr(G_{n+8}^4) = \bE \tr((A_{n+8} A_{n+8}^\top)^{-4}) =  \tr ( \bE[(A_{n+8} A_{n+8}^\top)^{-4}] )\,. 
\end{align*}

By our choice of $\cD_{[1:n+8]}$, the matrix $(A_{n+8} A_{n+8}^\top)^{-1}$ is an inverse Wishart matrix of size $n \times n$ with $(n+8)$ degrees of freedom, and thus has finite fourth moment (see, for example, Theorem 4.1 in \citep{von1988moments}). It then follows that $$\bE [\lambda_\textnormal{max}^4(G_{n+8})]  \leq \tr ( \bE[(A_{n+8} A_{n+8}^\top)^{-4}] ) < +\infty\,.$$

For the inductive step, 
assume $\bE [\lambda_\textnormal{max}(G_d)]^4 < +\infty$ for some $d\ge n+8$. We claim that $$\lambda_\textnormal{max}(G_{d+1}) \leq \lambda_\textnormal{max}(G_d)\,,$$ or equivalently, $$\lambda_\textnormal{min}( A_{d}A_{d}^\top ) \le \lambda_\textnormal{min}( A_{d+1}A_{d+1}^\top )\,.$$ Indeed, this follows from the fact that $$A_{d} A_{d}^\top \preccurlyeq  A_d A_d^\top + bb^\top = A_{d+1} A_{d+1}^\top\,,$$ under the Loewner order, where $b\in \bR^{n\times 1}$ is the $(d+1)$-th column of $A$. Therefore, we have $$\bE [\lambda_\textnormal{max}^4(G_{d+1})] \leq \bE [\lambda_\textnormal{max}^4(G_d)]$$ and by induction, we conclude that $\bE [\lambda_\textnormal{max}^4(G)] < +\infty$ for all $d \geq n+8$.

Now we proceed to show $\bE \|v\|^4 < +\infty$. We have
\begin{align*}
    \|v\|^4 = \| (AA^\top)^{-1} A x \|^4 \leq \| (AA^\top)^{-1} A \|_{op}^4 \cdot \|x\|^4\,,  
\end{align*}where $\|\cdot\|_{op}$ denotes the $\ell^2\to \ell^2$ operator norm. Note that 
\begin{align*}
    \| (AA^\top)^{-1} A \|_{op}^4 & = \lambda^2_{\textnormal{max}} \left( \left((AA^\top)^{-1} A \right)^\top (AA^\top)^{-1} A  \right)
    \\ & = \lambda^2_{\textnormal{max}} \left( A^\top (AA^\top)^{-2} A \right) 
    \\ & = \lambda_{\textnormal{max}} \left( \left( A^\top (AA^\top)^{-2} A \right)^2 \right)\,, 
\end{align*}where the last equality uses the fact that $A^\top (AA^\top)^{-2} A$ is positive semidefinite. Moreover, we deduce
\begin{align*}
    \| (AA^\top)^{-1} A \|_{op}^4 & = \lambda_{\textnormal{max}} \left( A^\top (AA^\top)^{-3} A \right) 
    \\ & \leq \tr \left( A^\top (AA^\top)^{-3} A \right)
    \\ & = \tr \left( (AA^\top)^{-3} A A^\top \right)
    \\ & = \tr \left( (AA^\top)^{-2} \right)\,. 
\end{align*} Using the fact that $A_{d} A_{d}^\top \preccurlyeq A_{d+1} A_{d+1}^\top$ established above, induction gives $$(AA^\top)^{-2} \preccurlyeq (A_{n+8}A_{n+8}^\top)^{-2}. $$ It follows that 
\begin{align}
    \bE \left[ \| (AA^\top)^{-1} A \|_{op}^4 \right] &  \leq \bE \left[ \tr\left( \left(A_{n+8} A_{n+8}^\top\right)^{-2}\right) \right] = \tr \left( \bE \left[ \left(A_{n+8} A_{n+8}^\top\right)^{-2} \right] \right)< + \infty\,, \label{eq:expect-operator-norm}
\end{align} where again we use that fact that inverse Wishart matrix $\left(A_{n+8} A_{n+8}^\top\right)^{-1}$ has finite second moment. 

Next, we demonstrate $\bE \|x\|^4 < +\infty$.
Recall that every $\cD_i$ is either a Gaussian or a Gaussian mixture distribution. Therefore, every entry of $x$ has a subgaussian tail, and thus $\bE \|x\|^4 < + \infty$. Together with \eqref{eq:expect-operator-norm} and the fact that $x$ and $A$ are independent, we conclude that
\[
\bE \|v\|^4 \leq \bE \left[ \| (AA^\top)^{-1} A \|_{op}^4 \right]\cdot \bE \left[\|x\|^4 \right] < + \infty\,.
\]

\subsection{Proof of \cref{thm:descent-overparam2}}\label{sec:proof-descent-overparam2}
The randomness comes from $A,x,a_1$ and $b$. We first condition on $A$ and $x$ being fixed. 

Let $G \triangleq (AA^\top)^{-1}\in \bR^{n\times n}$ and $
    u \triangleq \frac{b^\top G}{1+b^\top G b}\in \bR^{1\times n}$. 
    Define $$v \triangleq (A^+)^\top x\,,\quad r\triangleq 1+b^\top Gb\,,\quad H \triangleq{} bb^\top\,.$$ 
    We compute the left-hand side but take the expectation over only $a_1$ for the moment
    \begin{align}
    & \bE_y\left\|  \begin{bmatrix}
A^\top \\
b^\top
\end{bmatrix}^+ \begin{bmatrix}
x\\
a_1
\end{bmatrix}\right\|^2 - \left\|  (A^+)^\top x \right\|^2 \nonumber\\
={}&  \bE_y\left\| (I-bu)^\top v + u^\top a_1 \right\|^2 -  \|v\|^2\nonumber\\
={}&  \|(I-bu)^\top v\|^2 + \bE_y \|u^\top a_1\|^2 -  \|v\|^2 \nonumber \tag*{($\bE[a_1]=0$)}\\
={}&  \|(I-bu)^\top v\|^2 + \bE_y [a_1^2]\frac{\|Gb\|^2}{r^2} -  \|v\|^2 \,.\nonumber
\end{align}

Let us first consider the first and third terms of the above equation:
\begin{align*}
\|(I-bu)^\top v\|^2 - \|v\|^2 \nonumber
={}& v^\top\left( (I-bu)(I-bu)^\top - I \right)v\nonumber\\
={}& -v^\top \left( bu + u^\top b^\top - b u u^\top b^\top\right)v\nonumber\\
={}& -v^\top\left( \frac{HG + GH}{r} - \frac{H G^2 H}{r^2}\right)v\,.
\end{align*}
Write $G = V\Lambda V^\T$, where $\Lambda = \diag(\lambda_1,\dots,\lambda_n)\in \bR^{n\times n}$ is a diagonal matrix ($\lambda_i > 0$) and $V\in \bR^{n\times n}$ is an orthogonal matrix. Recall $b\sim \cN(0,\sigma^2 I_n)$. Therefore $w \triangleq V^\T b\sim \cN(0,\sigma^2 I_n)$. Taking the expectation over $b$, we have \begin{equation*}
    \bE_b\left[ \frac{HG+GH}{r} \right] =  \bE_b\left[ V\frac{V^\T bb^\T V\Lambda  + \Lambda V^\T bb^\T V}{1+b^\T V\Lambda V^\T b} V^\T \right]
    = V \bE_w \left[ \frac{ww^\T \Lambda + \Lambda ww^\T}{1+w^\T \Lambda w}  \right] V^\T\,.
\end{equation*}
Let $R \triangleq \bE_w \left[ \frac{ww^\T \Lambda + \Lambda ww^\T}{1+w^\T \Lambda w}  \right]$. We have \[
 R_{ii} = \bE_w\left[  \frac{2\lambda_i w_i^2}{1+ \sum_{i=1}^n \lambda_i w_i^2}\right] 
 = \sigma^2 \bE_{\nu\sim \cN(0,I_n)}\left[ \frac{2\lambda_i \nu_i^2}{1+ \sigma^2\sum_{i=1}^n \lambda_i \nu_i^2} \right] >0
\] and if $i\ne j$, \[
R_{ij} = \bE_w \left[ \frac{(\lambda_i +\lambda_j) w_i w_j}{1+ \sum_{i=1}^n \lambda_i w_i^2} \right] \,.
\]
Notice that for any $w$ and $j$, it has the same distribution if we replace $w_j$ by $-w_j$. As a result, \[
R_{ij} = \bE_w \left[ \frac{(\lambda_i +\lambda_j) w_i (-w_j)}{1+ \sum_{i=1}^n \lambda_i w_i^2} \right] = -R_{ij}\,.
\]
Thus the matrix $R$ is a diagonal matrix and \[
R =  2\sigma^2 \frac{\Lambda \diag(\nu)^2}{1+\sigma^2 \nu^\T \Lambda \nu}\,. 
\]
Thus we get \[
\bE_{b,A}\left[ \frac{HG+GH}{r} \right] = 2\sigma^2 \bE_{\nu\sim \cN(0,I_n), A}\left[  \frac{G V \diag(\nu)^2 V^\T}{1+\sigma^2 \nu^\T \Lambda \nu} \right]
\]
Moreover, by the monotone convergence theorem, we deduce \begin{align*}
\lim_{\sigma\to 0^+} \bE_{\nu\sim \cN(0,I_n),A,x}\left[ - v^\T \frac{G V \diag(\nu)^2 V^\T}{1+\sigma^2 \nu^\T \Lambda \nu} v \right] ={}& \bE_{\nu\sim \cN(0,I_n),A,x} \left[ -v^\T G V\diag(\nu)^2 V^\T v \right]\\
={}& \bE[-v^\T G v] \,.
\end{align*}
It follows that as $\sigma\to 0^+$, \[
\bE\left[ -v^\T \frac{HG+GH}{r} v \right] \sim -2\sigma^2 \bE[v^\T G v] = -2\sigma^2 
\bE\left[
v^\T (AA^\T)^{-1} v
\right] = -2\sigma^2 \bE[\|(A^\T A)^+ x\|^2] \,.
\]
Moreover, by \eqref{eq:finite-expectations}, we have \[
\bE\left[v^\T (AA^\T)^{-1} v\right] \le \bE\left[\lambda_{\textnormal{max}}\left((AA^\T)^{-1}\right) \|(A^+)^\T x \|^2\right] < +\infty\,.
\]

Next, we study the term $HG^2 H/r^2$:
\begin{align*}
    \bE_{b,A}\left[ \frac{H G^2 H}{r^2} \right] ={}& \bE_{b,A}\left[V \frac{V^\T b b^\T V \Lambda^2 V^\T b b^\T V}{(1+b^\T V \Lambda V^\T b)^2} V^\T \right]\\
={}&   \bE_{w\sim \cN(0,\sigma^2 I_n),A}\left[V \frac{ww^\T \Lambda^2 ww^\T }{(1+w^\T \Lambda w)^2} V^\T \right] \\
={}& \sigma^4 \bE_{\nu\sim \cN(0,I_n), A } \left[ V \frac{\nu \nu^\T \Lambda^2 \nu \nu^\T}{(1+\sigma^2 \nu^\T \Lambda \nu)^2} V^\T \right] \,.
\end{align*}
Again, by the monotone convergence theorem, we have \begin{align*}
    & \lim_{\sigma\to 0^+} \bE_{\nu\sim \cN(0,I_n),A,x } \left[ v^\T V \frac{\nu \nu^\T \Lambda^2 \nu \nu^\T}{(1+\sigma^2 \nu^\T \Lambda \nu)^2} V^\T v \right] 
\\
={}& \bE_{\nu\sim \cN(0,I_n),A,x } \left[v^\T V \nu \nu^\T \Lambda^2 \nu \nu^\T V^\T v \right]\\
={}& \bE_{A,x}\left[v^\T V\left(2\Lambda^2 + I_n \sum_{i=1}^n \lambda_i^2\right)V^\T v\right]\\
={}& \bE\left[ v^\T \left( 2G^2 + \tr(G^2) I_n \right) v\right]\,.
\end{align*}
It follows that as $\sigma\to 0^+$, \begin{align*}
   & \bE_{b,A,x}\left[ \frac{H G^2 H}{r^2} \right]\\
    \sim{}& \sigma^4 \bE\left[  v^\T \left( 2G^2 + \tr(G^2) I_n \right) v\right]\\
={}& \sigma^4 \bE\left[ 2\|(AA^\T)^{-1}v\|^2 + \tr((AA^\T)^{-2})\|v\|^2
\right]\,.
\end{align*}
Moreover, by \eqref{eq:finite-expectations}, we have \[
\bE\left[ 2\|(AA^\T)^{-1}v\|^2 + \tr((AA^\T)^{-2})\|v\|^2
\right] \le 
(n+2)\bE\left[\lambda^2_{\textnormal{max}} ((AA^\top)^{-1}) \| (A^+)^\top x \|^2\right] < +\infty\,.
\]

We apply a similar method to the term $\frac{\|Gb\|^2}{r^2}$. We deduce \[
\frac{\|Gb\|^2}{r^2} = \frac{b^\T G^2 b}{(1+b^\T G b)^2}
= \frac{b^\T V\Lambda^2 V^\T b}{(1+b^\T V\Lambda V^\T b)^2}\,.
\]
It follows that \[
\bE\left[\frac{\|Gb\|^2}{r^2}\right] = \bE_{w\sim \cN(0,\sigma^2 I_n),A}\left[\frac{w^\T \Lambda^2 w}{(1+w^\T \Lambda w)^2}\right]
= \sigma^2\bE_{\nu\sim \cN(0, I_n),A}\left[\frac{\nu^\T \Lambda^2 \nu}{(1+\sigma^2 \nu^\T \Lambda \nu)^2}\right]
\]
The monotone convergence theorem implies \[
\lim_{\sigma\to 0^+} \bE_{\nu\sim \cN(0, I_n),A}\left[\frac{\nu^\T \Lambda^2 \nu}{(1+\sigma^2 \nu^\T \Lambda \nu)^2}\right] = \bE[\nu^\T \Lambda^2 \nu] = \bE[\tr(G^2)]\,.
\]
Thus we get as $\sigma\to 0^+$ \[
\bE_y [a_1^2]\frac{\|Gb\|^2}{r^2} \sim \sigma^4 \bE[\tr(G^2)]\,,
\]
where $\bE[\tr(G^2)] \le n\bE[ \lambda^2_{\textnormal{max}} ((AA^\top)^{-1}) ] <{} + \infty$. 

Putting all three terms together, we have as $\sigma\to 0^+$ \[
L_{d+1} - L_d
\sim  -2\sigma^2 \bE[\|(A^\T A)^+ x\|^2] \,.
\]
Therefore, there exists $\sigma>0$ such that $L_{d+1} - L_d < 0$.

\subsection{Proof of \cref{thm:ascent-overparam}}\label{sec:proof-ascent-overparam}

Again we first condition on $A$ and $x$ being fixed.
Let $G \triangleq (AA^\top)^{-1}\in \bR^{n\times n}$ and $
    u \triangleq \frac{b^\top G}{1+b^\top G b}\in \bR^{1\times n}$
    as defined in \cref{lem:pseudo-inverse-overparam}. 
    We also define the following variables: $$v \triangleq (A^+)^\top x\,,\quad r\triangleq 1+b^\top Gb.$$ 
    
We compute $L_{d+1}-L_d$ but take the expectation over only $a_1$ for the moment
    \begin{align}
    & \bE_y\left\| \begin{bmatrix}
A^\top \\
b^\top
\end{bmatrix}^+ \begin{bmatrix}
x\\
a_1
\end{bmatrix}\right\|^2 - \left\| (A^+)^\top x \right\|^2 \nonumber\\
={}&  \bE_y\left\| (I-bu)^\top v + u^\top a_1 \right\|^2 -  \|v\|^2 \nonumber\\
={}&   \|(I-bu)^\top v\|^2 + \bE_y \|u^\top a_1\|^2 - \|v\|^2  \nonumber \tag*{($\bE[a_1]=0$)}\\
={}&  \|(I-bu)^\top v\|^2 + \bE_y [a_1^2]\frac{\|Gb\|^2}{r^2} - \|v\|^2  \,.
\label{eq:overparam-expect-y-step}
    \end{align}
    
    Our strategy is to make $\bE [a_1^2\frac{\|Gb\|^2}{r^2}]$ arbitrarily large. To this end, by the independence of $a_1$ and $b$ we have 
    \[
    \bE_{a_1,b} \left[ a_1^2 \frac{\|Gb\|^2}{r^2} \right] = \bE_y [a_1^2] \bE_b \left[ \frac{\|Gb\|^2}{r^2} \right]\,.
    \] By definition of $\dmixN$, with probability $2/3$, $a_1$ is sampled from either $\cN(\mu,\sigma^2)$ or $\cN(-\mu,\sigma^2)$, which implies $ \bE [a_1^2] \geq \frac{1}{3} \mu^2 $. For each $b_i$, we have $$\mathbb{P} (|b_i|\in [\sigma , 2 \sigma]) \geq \frac{1}{3} \times \frac{1}{4}.$$ Also note that $G$ is positive definite. It follows that 
    \begin{align*}
        \bE_b \left[ \frac{||Gb||^2}{r^2}\right] & = \bE_b \left[  \frac{||Gb||^2}{  (1 + b^\top G b )^2  }   \right] \geq \bE_b \frac{(\lambda_{\textnormal{min}}(G)||b||)^2}{  ( 1 + \lambda_\textnormal{max}(G) ||b||^2 )^2 } 
        \geq \left( \frac{1}{12} \right)^n \frac{\lambda^2_{\textnormal{min}}(G) n \sigma^2}{ \left( 1 + 4\lambda_{\textnormal{max}}(G) n \sigma^2 \right)^2 } \,. 
    \end{align*} Altogether we have
    \[
    \bE_{a_1,b} \left[ a_1^2 \frac{\|Gb\|^2}{r^2} \right] \geq \frac{1}{3\cdot 12^n}  \frac{n \lambda^2_{\textnormal{min}}(G) \mu^2 \sigma^2}{( 1 + 4 n \lambda_{\textnormal{max}}(G) \sigma^2 )^2} \,.
    \] Let $\mu = 1/ \sigma^2$ and we have
    \begin{align*}
        \lim_{\sigma \to 0^{+}} \bE \left[ a_1^2 \frac{\|Gb\|^2}{r^2} \right] &\geq  \lim_{\sigma \to 0^+} \bE_{A,x} \bE_{a_1,b} \left[  \frac{1}{3\cdot 12^n}  \frac{n \lambda^2_{\textnormal{min}}(G) }{\sigma^2 ( 1 + 4 n \lambda_{\textnormal{max}}(G) \sigma^2 )^2} \right] 
        \\ & =  \bE_{A,x} \bE_{a_1,b} \lim_{\sigma \to 0^+} \left[  \frac{1}{3\cdot 12^n}  \frac{n \lambda^2_{\textnormal{min}}(G) }{\sigma^2 ( 1 + 4 n \lambda_{\textnormal{max}}(G) \sigma^2 )^2} \right] 
        \\ & = + \infty \,,
    \end{align*}where we switch the order of expectation and limit using the monotone convergence theorem. Taking full expectation over $A,x,b$ and $a_1$ of \eqref{eq:overparam-expect-y-step} and using the assumption that $\bE \|v\|^2 < +\infty$ we have
    \[
    L_{d+1} - L_d = \bE_{A,x,b} \|(I-bu)^\top v\|^2 + \bE \left[a_1^2 \frac{\|Gb\|^2}{r^2} \right] - \bE_{A,x} \|v\|^2 \to +\infty
    \] as $\sigma \to 0^{+}$. 

\subsection{Proof of \cref{thm:bias}}\label{sec:bias}

If we define $G \triangleq (AA^\top)^{-1}\in \bR^{n\times n}$ and $
    u \triangleq \frac{b^\top G}{1+b^\top G b}\in \bR^{1\times n}$, \cref{lem:pseudo-inverse-overparam} implies \begin{equation*}
        \begin{bmatrix}
            A^\top\\
            b^\top
        \end{bmatrix}^+ =
        \begin{bmatrix}
        (I-bu)^\top (A^+)^\top, u^\top
        \end{bmatrix}\,.
\end{equation*}
It follows that \begin{align*}
        [A, b]^+ [A, b] & = \begin{bmatrix}
        A^+A-  \frac{ww^\top}{r} & \frac{w}{r}
        \\ \frac{w^\top}{r} & 1-\frac{1}{r}
        \end{bmatrix}\,,
\end{align*}
where \[
     w = A^+ b\,,\quad r  = 1 + b^\top Gb\,.
\]

We obtain the expression for $\Eps_{d+1}$:
\begin{align*}
    \Eps_{d+1} & = \left( [x^\top, a_1] \begin{bmatrix}
                A^\top A - \frac{ww^\top}{r} - I & \frac{w}{r}
                \\ \frac{w^\top}{r} & -\frac{1}{r}
    \end{bmatrix} \begin{bmatrix}
                \beta \\ \beta_1
    \end{bmatrix} \right)^2,
    \\ & = \left[ x^\top\left( A^+A - \frac{ww^\top}{r} - I \right)\beta + \frac{yw^\top\beta}{r} + \frac{x^\top w \beta_1}{r} - \frac{a_1\beta_1}{r} \right]^2
    \\ & = \left[ x^\top (A^+ A-I)\beta + \frac{1}{r} \left( -x^\top w w^\top \beta + x^\top w \beta_1 + a_1 w^\top \beta - a_1 \beta_1 \right) \right]^2 . 
\end{align*}
If $a_1,b_1,\dots,b_n\stackrel{iid}{\sim} \dmixN$ or  $a_1,b_1,\dots,b_n\stackrel{iid}{\sim} \cN(0,\sigma^2)$, it holds that $\bE[a_1]=0\in \bR$,   $\bE[x]=0\in \bR^d$, and $\bE[b]=0\in \bR^{n\times 1}$. Therefore we have
\begin{align*}
    \E\left[  x^\top\left( A^+A -   I \right)\beta \frac{1}{r} x^\top w\beta_1 \right] & = \E\left[ \frac{1}{r} x^\top\left( A^+A -   I \right)\beta x^\top w\right] \E \left[ \beta_1 \right] =0,
    \\ \E\left[ x^\top\left( A^+A - I \right)\beta \frac{1}{r} a_1 w^\top \beta \right ] & = \E\left[ x^\top\left( A^+A  - I \right)\beta \frac{1}{r} \E[a_1] w^\top \beta \right ] =0,
    \\ \E\left[ x^\top\left( A^+A - I \right)\beta \frac{1}{r} a_1 \beta_1 \right ] & = \E\left[ x^\top\left( A^+A - I \right)\beta \frac{1}{r} \E[a_1] \beta_1 \right ] = 0.
\end{align*}
It follows that 
\begin{align*}
    \E[\Eps_{d+1}]  ={}& \E\left[ x^\top(A^+A-I)\beta \right]^2 + \E \left[ \frac{1}{r^2} \left( -x^\top w w^\top \beta + x^\top w \beta_1 + a_1 w^\top \beta - a_1 \beta_1 \right)^2 \right]
    \\ &  + \E\left[ \frac{2}{r} x^\top (A^+A-I)\beta(-x^\top ww^\top\beta)  \right],
\end{align*}which then gives
\begin{align*}
   & \E [\Eps_{d+1}] - \E[\Eps_d]\\
     ={}& \E \left[ \frac{1}{r^2} \left( -x^\top w w^\top \beta + x^\top w \beta_1 + a_1 w^\top \beta - a_1 \beta_1 \right)^2 \right] + \E\left[ \frac{2}{r} x^\top (A^+A-I)\beta(-x^\top ww^\top\beta)  \right].
\end{align*}First, we consider the second term $\E\left[ \frac{2}{r} x^\top (A^+A-I)\beta(-x^\top ww^\top\beta)  \right]$. Note that 
\begin{align*}
    & \E\left[ \frac{2}{r} x^\top (A^+A-I)\beta(-x^\top ww^\top\beta)  \right]
    \\ ={} & \E\left[ - \frac{2}{r} x^\top (A^+A-I)\beta \beta^\top ww^\top x \right]
    \\ ={} & \E\left[ \frac{2}{r} x^\top (I-A^+A) \E[\beta \beta^\top] ww^\top x \right]
    \\ ={} & \rho^2 \E\left[  \frac{2}{r} x^\top (I-A^+A) ww^\top x \right],
\end{align*}where the second equality is because $\beta$ is independent from the remaining random variables and the third step is because of  $\beta\sim\cN(0,\rho^2 I)$. Recalling that $w=A^+b$ and $A^+AA^+ = A^+$,  we have
\begin{align*}
    & \E\left[ \frac{2}{r} x^\top (A^+A-I)\beta(-x^\top ww^\top\beta)  \right]
    \\ ={} & \rho^2 \E\left[  \frac{2}{r} x^\top (I-A^+A) A^+bw^\top x \right]
    \\ ={} & \rho^2 \E\left[  \frac{2}{r} x^\top (A^+-A^+AA^+) bw^\top x \right]
    \\ ={} & 0.
\end{align*} Now we consider the first term $ \E \left[ \frac{1}{r^2} \left( -x^\top w w^\top \beta + x^\top w \beta_1 + a_1 w^\top \beta - a_1 \beta_1 \right)^2 \right]$. Note that all the cross terms vanishes since $\E[\beta]=0$ and $\E[\beta_1]=0$. This implies 
\begin{align*}
    & \E \left[ \frac{1}{r^2} \left( -x^\top w w^\top \beta + x^\top w \beta_1 + a_1 w^\top \beta - a_1 \beta_1 \right)^2 \right] 
    \\ ={} & \E \left[ \frac{1}{r^2}\left( (x^\top w w^\top \beta)^2 + (x^\top w \beta_1)^2 + (a_1 w^\top \beta)^2 + (a_1 \beta_1)^2  \right)  \right]
    \\ ={} & \E\left[  \frac{1}{r^2}\left( \tr(xx^\top ww^\top \beta\beta^\top ww^\top) + \beta_1^2(x^\top w w^\top x) + a_1^2\tr(  ww^\top \beta \beta^\top) + a_1^2 \beta_1^2 \right) \right]
    \\ ={} & \E\left[  \frac{1}{r^2}\left( \rho^2\|w\|^2 \tr(xx^\top ww^\top) + \rho^2(x^\top w w^\top x) + a_1^2\rho^2 \|w\|^2 + a_1^2 \rho^2 \right) \right]\\
    ={}& \rho^2\E \left[  \frac{1}{r^2} (\|w\|^2+1)( (x^\T w)^2 + \bE[a_1^2])  \right],
\end{align*}where the third equality is because of $[\beta^\T,\beta_1]^\T\sim \cN(0,\rho^2 I_{d+1})$. From the above calculation one can see that $\E[\Eps_{d+1}]> \E[\Eps_d]$. 

If $a_1,b_1,\dots,b_n\stackrel{iid}{\sim} \dmixN$,  \cref{thm:ascent-overparam} implies that for any $C>0$, there exist $\mu,\sigma$ such that \[
\bE\left\| \begin{bmatrix}
A^\top \\
b^\top
\end{bmatrix}^+ \begin{bmatrix}
x\\
a_1
\end{bmatrix}\right\|^2 - \bE \left\|  (A^+)^\top x \right\|^2 >C\,.
\] Because $\E[\Eps_{d+1}]\ge \E[\Eps_d]$, we obtain that for any $C>0$, there exist $\mu,\sigma$ such that $\lb_{d+1}-\lb_d > C$. 

If $a_1,b_1,\dots,b_n\stackrel{iid}{\sim} \cN(0,\sigma^2)$, we have as $\sigma\to 0$, \[
\E [\Eps_{d+1}] - \E[\Eps_d] = \rho^2 \sigma^2\E \left[  \frac{1}{r^2} (\sigma^2\|A^+\|^2+1)( \|A^{+\T}x\|^2 + 1)  \right]
\sim \rho^2 \sigma^2 \left(  \E\|A^{+\T}x\|^2 + 1  \right)\,.
\]
From the proof of \cref{thm:descent-overparam2}, we know that as $\sigma\to 0^+$\[
\left\|  \begin{bmatrix}
A^\top \\
b^\top
\end{bmatrix}^+ \begin{bmatrix}
x\\
a_1
\end{bmatrix}\right\|^2 - \bE\left\| (A^\T)^+ x\right\|^2 \sim -2\sigma^2 \bE[\|(A^\T A)^+ x\|^2] \,.
\]
If $
\rho \le \eta\sqrt{ \frac{ \bE[\|(A^\T A)^+ x\|^2]}{ \E\|A^{+\T}x\|^2 + 1  }}$,
we have \[
\lb_{d+1} - \lb_d \sim -\sigma^2\left(2\eta^2 \bE[\|(A^\T A)^+ x\|^2]-
\rho^2 \left(  \E\|A^{+\T}x\|^2 + 1  \right)
\right) \le -\sigma^2 \eta^2 \bE[\|(A^\T A)^+ x\|^2]\,.
\]
As a result, there exists $\sigma>0$ such that for all  $\rho \le \eta\sqrt{ \frac{ \bE[\|(A^\T A)^+ x\|^2]}{ \E\|A^{+\T}x\|^2 + 1}} $, we have $\lb_{d+1}<\lb_d$. 

\section{Discussion}
Recently, there has been growing interest in the comparison and connection between deep learning and classical machine learning methods. For example, clustering, a classical unsupervised machine learning method, was adapted to end-to-end training of image data~\cite{caron2018deep,fei2018exponential,fei2018hidden,fei2019achieving,fei2020achieving}. This paper studied the non-monotonic generalization risk curve of overparametrized linear regression. It would be an interesting future work to study the multiple descent phenomenon in other classical machine learning methods and theoretically understand this phenomenon in deep learning. Moreover, when the multiple descent phenomenon arises in different machine learning models, it remains open whether there is any deep reason in common that accounts for it.

\end{document}